\documentclass[11pt, a4paper]{amsart}

\usepackage{amssymb, amsmath, amsthm, enumerate, fancyhdr} 
\usepackage[utf8]{inputenc}
\usepackage[T1]{fontenc}
\usepackage{ifthen}
\usepackage{tikz}
\usetikzlibrary{shapes,fit,automata,decorations.pathmorphing}

\usepackage{cancel, cases, mathabx, fullpage}
\usepackage{times}

\newcommand{\makePageBig}{
\setlength{\topmargin}{-1cm}
\setlength{\oddsidemargin}{-7mm}
\setlength{\evensidemargin}{-7mm}
\textwidth18cm
\textheight235mm
\headheight14pt
\headsep5mm}

\providecommand{\ignore}[1]{}

\newcommand{\ORT}{\textbf{ORT}\xspace}

\newcommand{\Txt}[1][]{\mathbf{Txt}^{#1}}
\newcommand{\Inf}{\mathbf{Inf}}
\newcommand{\can}{\mathrm{can}}

\newcommand{\It}{\mathbf{It}}

\newcommand{\Fin}{\mathbf{Fin}}
\newcommand{\Ex}{\mathbf{Ex}}
\newcommand{\Lim}{\mathbf{Ex}}

\newcommand{\Cons}{\mathbf{Cons}}
\newcommand{\Comp}{\mathbf{Cons}}

\newcommand{\Conv}{\mathbf{Conv}}

\newcommand{\LocConv}{\mathbf{LocConv}}
\newcommand{\Caut}{\mathbf{Caut}}

\newcommand{\NU}{\mathbf{NU}}
\newcommand{\SNU}{\mathbf{SNU}}
\newcommand{\Dec}{\mathbf{Dec}}
\newcommand{\SDec}{\mathbf{SDec}}
\newcommand{\Wb}{\mathbf{Wb}}

\newcommand{\WMon}{\mathbf{WMon}}
\newcommand{\Mon}{\mathbf{Mon}}
\newcommand{\SMon}{\mathbf{SMon}}

\newcommand{\N}{\mathbb{N}}
\newcommand{\Z}{\mathbb{Z}}

\newcommand{\Seq}[1]{\text{$#1^{<\omega}$}}
\newcommand{\ISeq}[1]{\text{$#1^{\omega}$}}
\newcommand{\IfiSeq}[1]{\text{$#1^{\leq \omega}$}}
\newcommand{\concat}{{^\smallfrown}}
\newcommand{\ps}{\mathrm{pos}}
\renewcommand{\ng}{\mathrm{neg}}

\newcommand{\partialFn}{\mathfrak{P}}

\newcommand{\totalCp}{\mathcal{R}}

\newcommand*{\Nttwo}{\N\!\times\!\{0,1\}}
\newcommand*{\inseg}{\sqsubseteq}

\newcommand{\pad}{\mathrm{pad}}

\newcommand{\last}{\mathrm{last}}
\newcommand{\dom}{\mathrm{dom}}
\newcommand{\ran}{\mathrm{ran}}
\newcommand{\cnt}{\mathrm{content}}

\newcommand{\ind}{\mathrm{ind}}
\newcommand{\simu}{\mathfrak{s}}
\newcommand{\nonsimu}{\mathfrak{r}}
\newcommand{\Simu}{\mathfrak{S}}

\newcommand{\pr}{\mathrm{pr}}

\newcommand{\sqr}[2]{{\vcenter{\hrule height.#2pt
        \hbox{\vrule width.#2pt height#1pt \kern#1pt
           \vrule width.#2pt}
        \hrule height.#2pt}}}

\newcommand{\Qed}[1]{{\nobreak\hfil\penalty50
  \hskip1em\hbox{}\nobreak\hfil$\sqr{8}{8}$\enspace\sc #1
  \parfillskip=0pt \finalhyphendemerits=0 \par}
  \bigskip}

\newcommand{\QedFor}[1]{\Qed{(for~#1)}}
\newcommand{\QedForClaim}[1][]{\QedFor{~Claim#1}}

\renewenvironment{proof}{

\noindent
\noindent\emph{Proof.}\enspace}{\Qed{}}

\newcommand{\CalC}{\mathcal{C}}

\newcommand{\CalH}{\mathcal{H}}
\newcommand{\CalI}{\mathcal{I}}

\newcommand{\CalL}{\mathcal{L}}

\newcommand{\CalR}{\mathcal{R}}

\newcommand*{\LOCK}{\mathrm{LOCK}}

\renewcommand*{\u}{\mathbf{u}}
\newcommand*{\ve}{\mathbf{v}}
\newcommand*{\w}{\mathbf{w}}
\newcommand*{\x}{\mathbf{x}}
\newcommand*{\y}{\mathbf{y}}
\newcommand*{\z}{\mathbf{z}}

\newcommand{\yes}{{\textbf{1}}\xspace\;\;}
\newcommand{\no}{\textcolor{gray}{{\textbf{0}}}\xspace\;\;}

\newtheorem{theorem}{Theorem}[section]

\newtheorem{corollary}[theorem]{Corollary}
\newtheorem{lemma}[theorem]{Lemma}

\newtheorem{lem}[theorem]{Lemma}

\theoremstyle{definition}
\newtheorem{definition}[theorem]{Definition}
\newtheorem{defn}[theorem]{Definition}

\newcounter{claimCounter}[theorem]

\newcounter{caseCounter}[theorem]

\newcounter{subcaseCounter}[caseCounter]

\newboolean{useproof}
\setboolean{useproof}{false}

\def\makeinnocent#1{\catcode`#1=12 }
\def\csarg#1#2{\expandafter#1\csname#2\endcsname}

\def\ThrowAwayComment#1{\begingroup
    \def\CurrentComment{#1}    \let\do\makeinnocent \dospecials
    \makeinnocent\^^L    \endlinechar`\^^M \catcode`\^^M=12 \xComment}
{\catcode`\^^M=12 \endlinechar=-1  \gdef\xComment#1^^M{\def\test{#1}
      \csarg\ifx{PlainEnd\CurrentComment Test}\test
          \let\next\endgroup
      \else \csarg\ifx{LaLaEnd\CurrentComment Test}\test
            \edef\next{\endgroup\noexpand\end{\CurrentComment}}
      \else \let\next\xComment
      \fi \fi \next}
}

{\escapechar=-1\relax
	\csarg\xdef{PlainEndproofTest}{\string\\endproof}	\csarg\xdef{LaLaEndproofTest}{\string\\end\string\{proof\string\}}}

\renewcommand{\proof}{\ifthenelse{\boolean{useproof}}{

\noindent
\noindent\emph{Proof.}}{\ThrowAwayComment{proof}}}

\renewcommand{\endproof}{\ifthenelse{\boolean{useproof}}{\Qed{}}{}}

\makePageBig

\title[An Iterative Learner for Half-Spaces]{Learning Half-Spaces and other Concept Classes \\ in the Limit with Iterative Learners}

\author{%
 Ardalan Khazraei, Timo Kötzing, Karen Seidel%
}

\tikzset{>=latex}

\setboolean{useproof}{true}


\newcommand{\state}{\mathit{State}}
\newcommand{\open}{\mathit{Open}}
\newcommand{\locked}{\mathit{Locked}}

\DeclareMathOperator{\lcm}{lcm}
\DeclareMathOperator{\conv}{conv}
\DeclareMathOperator{\pos}{pos}
\DeclareMathOperator{\negative}{neg}




\usepackage[algo2e]{algorithm2e}
\SetKwFor{Function}{function}{is}{end function}
\newcommand{\assign}{\leftarrow}
\SetKw{Input}{input}
\SetKw{Output}{output}
\SetKw{Initialization}{initialization}

\begin{document}

\begin{abstract}
In order to model an efficient learning paradigm, iterative learning algorithms access data one by one, updating the current hypothesis without regress to past data.
Past research on iterative learning analyzed for example many important additional requirements and their impact on iterative learners.

In this paper, our results are twofold.
First, we analyze the relative learning power of various settings of iterative learning, including learning from text and from informant, as well as various further restrictions, for example we show that strongly non-U-shaped learning is restrictive for iterative learning from informant.

Second, we investigate the learnability of the concept class of half-spaces and provide a constructive iterative algorithm to learn the set of half-spaces from informant.
\end{abstract}

\maketitle

\setboolean{useproof}{true}

\newcommand{\justify}[1]{\textcolor{red}{#1}}

\newcommand{\drawBackboneInf}{
\begin{scope}[every node/.style={minimum size=4mm}]

\node (nothing) at (0,0) {\textbf{T}};
\node (nu)  at (3,-1)     	{$\NU$};
\node (dec)  at (1,-3)   		{$\Dec$};

\node (smon) at (-5,-8)		  {$\SMon$};
\node (mon) at (-5,-6)		  {$\Mon$}; 
\node (wmon) at (-1.5,-3.5)     {$\WMon$};
\node (caut) at (-2,-2)    	{$\Caut$};

\node (sdec) at (3,-5)   		{$\SDec$};
\node (snu) at (5,-3)     	{$\SNU$};
\node (conv) at (5,-7)     	{$\Conv$};

\draw (smon) -- (dec);
\draw (snu) -- (nu);
\draw (nu) -- (dec);
\draw (dec) -- (sdec); 
\draw (sdec) -- (snu);
\draw (snu) -- (conv);
\draw (nothing) -- (caut);
\draw (caut) -- (smon);
\draw (nothing) -- (nu);
\draw (nothing) -- (wmon);
\draw (nothing) edge[bend right] (mon);
\draw (mon) -- (smon);
\draw (wmon) edge[bend right=25] (conv);
\draw (wmon) -- (smon);
\end{scope}
}

\newcommand{\drawBackboneTwo}{
\begin{scope}[every node/.style={minimum size=4mm}]

\node (nothing) at (0,0) {\textbf{T}};
\node (nu)  at (3,-1)     	{$\NU$};
\node (dec)  at (1,-3)   		{$\Dec$};

\node (smon) at (-5,-8)		  {$\SMon$};
\node (mon) at (-5,-6)		  {$\Mon$}; 
\node (wmon) at (-1.5,-3.5)     {$\WMon$};
\node (caut) at (-2,-2)    	{$\Caut$};

\node (sdec) at (3,-5)   		{$\SDec$};
\node (snu) at (5,-3)     	{$\SNU$};
\node (conv) at (5,-7)     	{$\Conv$};

\draw[<-] (dec) -- (smon);
\draw[<-] (nu) -- (snu);
\draw[<-] (nu) -- (dec);
\draw[<-] (dec) -- (sdec); 
\draw[<-] (snu) -- (sdec);
\draw[<-] (snu) -- (conv);
\draw[<-] (nothing) -- (caut);
\draw[<-] (caut) -- (smon);
\draw[<-] (nothing) -- (nu);
\draw[<-] (nothing) -- (wmon);
\draw[<-] (nothing) edge[bend right] (mon);
\draw[<-] (mon) -- (smon);
\draw[<-] (wmon) edge[bend right=25] (conv);
\draw[<-] (wmon) -- (smon);
\end{scope}
}

\bigskip

\section{Introduction}

We are interested in the problem of algorithmically learning a description for a formal language (a computably enumerable subset of the set of natural numbers) when presented successively all and only the elements of that language; this is sometimes called \emph{inductive inference}, a branch of (algorithmic) learning theory.
For example, a learner $M$ might be presented more and more even numbers. After each new number, $M$ outputs a description for a language as its conjecture. The learner $M$ might decide to output a program for the set of all multiples of $4$, as long as all numbers presented are divisible by~$4$. Later, when $h$ sees an even number not divisible by $4$, it might change this guess to a program for the set of all multiples of~$2$.

Many criteria for deciding whether a learner $M$ is \emph{successful} on a language~$L$ have been proposed in the literature. Gold, in his seminal paper \cite{Gol:j:67}, gave a first, simple learning criterion, \emph{$\Txt\Ex$-learning}\footnote{$\Txt$ stands for learning from a \emph{text} of positive examples; $\Ex$ stands for \emph{explanatory}.}, where a learner is \emph{successful} iff, on every \emph{text} for $L$ (a listing of all and only the elements of $L$) it eventually stops changing its conjectures, and its final conjecture is a correct description for the input sequence.
Trivially, each single, describable language $L$ has a suitable constant function as a $\Txt\Ex$-learner (this learner constantly outputs a description for $L$).
As we want algorithms for more than a single learning task, we are interested in analyzing for which \emph{classes of languages} $\CalL$ is there a \emph{single learner} $M$ learning \emph{each} member of $\CalL$. This framework is also sometimes known as \emph{language learning in the limit} and has been studied extensively, using a wide range of learning criteria similar to $\Txt\Ex$-learning (see, for example, the textbook \cite{Jai-Osh-Roy-Sha:b:99:stl2}).

One major criticism of the model suggested by Gold is its excessive use of memory: for each new hypothesis the entire history of past data is available. Iterative learning is the most common variant of learning in the limit which addresses memory constraints: the memory of the learner on past data is just its current hypothesis. Due to the padding lemma, this memory is still not void, but finitely many data can be memorized in the hypothesis. 

There is already a quite comprehensive body of work on iterative learning \cite{Cas-Koe:c:10:colt,Cas-Moe:j:08:nonuit,%
jain2016role,Jai-Moe-Zil:j:13,Jai-Osh-Roy-Sha:b:99:stl2}. However, this work focuses on learning from from text, that is, from positive data only.
In this paper we are also interested in the other important paradigm of learning from both positive and negative information.
For example, when learning half-spaces, one could see data declaring that $\langle 1,1\rangle$ is in the target half-space, further is $\langle 3,2\rangle$, but $\langle 1,10\rangle$ is not, and so on. This setting is called \emph{learning from informant} (in contrast to learning from \emph{text}).

\begin{center}
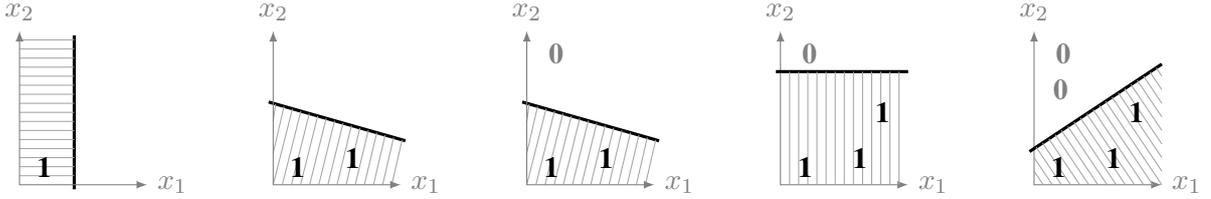
\begin{figure}[h]%
\qquad
\begin{minipage}{.18\textwidth}
\begin{tikzpicture}[scale=0.12]
	\draw [ very thick] (6,-0.5) -- (6,16.5);  
	\foreach \y in {1,...,16} {
	\draw [gray!70, thin] (6,\y) -- (0,\y); 
	}
	\draw [gray,<->,thin] (0,17) node (yaxis) [above] {\({x_{2}}\)} |- (14,0) node (xaxis) [right] { \({x_{1}}\)};
	\node[inner sep=0pt] (a) at (4,2)	{\yes};
\end{tikzpicture}
\end{minipage}
\begin{minipage}{.18\textwidth}
\begin{tikzpicture}[scale=0.12]
			\pgfmathsetmacro{\m}{-4/14}
			\pgfmathsetmacro{\b}{9}
			\draw [ very thick] (-0.5,-0.5*\m+\b) -- (14.5,14.5*\m+\b);  
			\foreach \u in {0,...,13} {
				\pgfmathsetmacro{\sx}{196/212*\u+112/53}
				\draw [gray!70, thin] (\u,0) -- (\sx,\sx*\m+\b); 
			}
			\foreach \u in {-2,-1} {
				\pgfmathsetmacro{\sx}{196/212*\u+112/53}
				\draw [gray!70, thin] (0,-14/4*\u) -- (\sx,\sx*\m+\b); 
			}
			\draw [gray,<->,thin] (0,17) node (yaxis) [above] { \({x_{2}}\)} |- (14,0) node (xaxis) [right] { \({x_{1}}\)};
			\node[inner sep=0pt] (a) at (4,2)
				{\yes};
			\node[inner sep=0pt] (b) at (10,3)
				{\yes};
		\end{tikzpicture}
\end{minipage}
\begin{minipage}{.18\textwidth}
		\begin{tikzpicture}[scale=0.12]
			\pgfmathsetmacro{\m}{-4/14}
			\pgfmathsetmacro{\b}{9}
			\draw [ very thick] (-0.5,-0.5*\m+\b) -- (14.5,14.5*\m+\b);  
			\foreach \u in {0,...,13} {
				\pgfmathsetmacro{\sx}{196/212*\u+112/53}
				\draw [gray!70, thin] (\u,0) -- (\sx,\sx*\m+\b); 
			}
			\foreach \u in {-2,-1} {
				\pgfmathsetmacro{\sx}{196/212*\u+112/53}
				\draw [gray!70, thin] (0,-14/4*\u) -- (\sx,\sx*\m+\b); 
			}
			\draw [gray,<->,thin] (0,17) node (yaxis) [above] { \({x_{2}}\)} |- (14,0) node (xaxis) [right] { \({x_{1}}\)};
			\node[inner sep=0pt] (a) at (4,2)
				{\yes};
			\node[inner sep=0pt] (b) at (10,3)
				{\yes};
			\node[inner sep=0pt] (c) at (4.5,14.5) {\no};
		\end{tikzpicture}
\end{minipage}
\begin{minipage}{.18\textwidth}
		\begin{tikzpicture}[scale=0.12]
			\draw [gray,<->,thin] (0,17) node (yaxis) [above] { \({x_{2}}\)} |- (14,0) node (xaxis) [right] { \({x_{1}}\)};
			\draw [ very thick] (-0.5,12.5) -- (14,12.5);  
			\foreach \x in {1,...,13} {
				\draw [gray!70, thin] (\x,12.5) -- (\x,0); 
			}
			\node[inner sep=0pt] (a) at (4,2)
				{\yes};
			\node[inner sep=0pt] (b) at (10,3)
				{\yes};
			\node[inner sep=0pt] (c) at (4.5,14.5)
				{\no};
			\node[inner sep=0pt] (d) at (12.5,8)
				{\yes};
		\end{tikzpicture}
\end{minipage}
\begin{minipage}{.18\textwidth}
		\begin{tikzpicture}[scale=0.12]
			\draw [gray,<->,thin] (0,17) node (yaxis) [above] { \({x_{2}}\)} |- (14,0) node (xaxis) [right] { \({x_{1}}\)};
			\draw [ very thick] (-0.5,3.6667) -- (14,13.3333);  
			\foreach \x in {1,...,2} {
				\draw [gray!70, thin] (\x,0) -- (0,3*\x/2); 
			}
			\foreach \x in {3,...,14} {
				\pgfmathsetmacro{\mx}{9*\x/13 - 24/13}
				\pgfmathsetmacro{\y}{2*\mx /3+4)}
				\draw [gray!70, thin] (\x,0) -- (\mx,\y); 
			}
			\foreach \x in {15,...,22} {
				\pgfmathsetmacro{\mx}{9*\x/13 - 24/13}
				\pgfmathsetmacro{\y}{2*\mx /3+4)}
				\draw [gray!70, thin] (14,-21+3*\x/2) -- (\mx,\y); 
			}
			\node[inner sep=0pt] (a) at (4,2)
				{\yes};
			\node[inner sep=0pt] (b) at (10,3)
				{\yes};
			\node[inner sep=0pt] (c) at (4.5,14.5)
				{\no};
			\node[inner sep=0pt] (d) at (12.5,8)
				{\yes};
			\node[inner sep=0pt] (e) at (4.2,10.5)
				{\no};
		\end{tikzpicture}
\end{minipage}
\caption{Learning Process when the hypotheses correspond to half-spaces and data is binary labeled.}%
\label{LearningProcess}%
\end{figure}
\end{center}

Iterative learning from informant was analyzed by \cite{Jai-Lan-Zil:j:07}, where various natural restrictions were considered; the authors focused on the case of learning indexable families (classes of languages which are uniformly decidable). Here they showed for example that learners can be assumed to be consistent with the data just seen, but not necessarily with all previously presented data, both for learning from text and from informant.
In this paper we additionally consider learning of arbitrary classes of computably enumerable languages and of classes with only recursive languages.

In Section~\ref{sec:TextAndIt} we consider two restrictions on learning from informant: learning from text and learning iteratively. We show that both these restrictions render fewer classes of languages learnable; in fact, the two restrictions yield two incomparable sets of language classes being learnable, which also shows that learning iteratively from text is weaker than supposing just one of the two restrictions.

For understanding iterative learners we analyze what normal forms can be assumed about such learners in Section~\ref{TotCan}. First we show that, analogously to the case of learning from text (as analyzed in~\cite{CaseMoelius2009}), we cannot assume learners to be total (i.e.\ always giving an output). However, from \cite{Cas-Moe:c:07:nonuit} we know that we can assume iterative text learners to be \emph{canny}; we adapt this normal form for the case of iterative learning from informant and show that it can be assumed to hold for iterative learners generally. 

Many works focus on understanding these properties via relating different learning restrictions for the learning setting at hand; for example, \cite{jain2016role} mapped out all pairwise relations for a group of learning restrictions for iterative learning from text. A similar map for the case of iterative learning from informant is not known, but we believe that the normal form of canniness is an important stepping stone to understand iterative learners better and determine such pairwise relations. In Section~\ref{AddRequ} we collect all previously known results for such a map, give more such relations and discuss which questions remain open.

We complement these structural insights with an analysis of the learnability of the language class of half-spaces in Sections~\ref{sec:halfSpacestwo} and \ref{sec:halfSpacesgen}.
Fundamental machine learning algorithms for supervised binary classification like support vector machines and the perceptron use half-spaces as hypothesis space.
With a fixed computable kernel function even more learning tasks can be reduced to classifying with half-spaces.
The learnability of linear predictors has been investigated with respect to other learning models and respective research questions, e.g. PAC-learning \cite{shamir2015sample}, Preference-based Teaching \cite{gao2017preference}. See \cite{shalev2014understanding} for an introduction to this concept class and different implemented learning algorithms.
As we are concerned with computable learners, we first formalize the problem by encoding it appropriately.
We then observe that the set of half-spaces forms an indexable family and is therefore learnable by enumeration from informant by a full-information learner, due to \cite{Gol:j:67}.
Our contribution is a geometric and therefore constructive iterative learning algorithm for the family of half-spaces.
The iterative learner patiently waits for data indicating that he already encountered a locking sequence.
Every so-called $\LOCK$-state directly corresponds to a half-space. In a $\LOCK$ state the learner ignores all further consistent data.
Hence, our iterative learning algorithm employs the option to store data as part of the hypothesis in order to wait for helpful data and on the other hand is smart enough to know, when to stop collecting.
In Section~\ref{sec:halfSpacestwo} we illustrate the algorithm in dimension 2.
The general constructive algorithm and a complete correctness proof for arbitrary dimension can be found in Section~\ref{sec:halfSpacesgen}.


We continue this paper with some mathematical preliminaries in Section~\ref{sec:prelim} before discussing our results in more detail.


\section{Iterative Learning from Informant}\label{sec:prelim}

We let $\N$ denote the \emph{natural numbers} including $0$ and write $\infty$ for an \emph{infinite cardinality}.
Moreover, for a function $f$ we write $\dom(f)$ for its \emph{domain} and $\ran(f)$ for its \emph{range}.
If we deal with (a subset of) a cartesian product, we are going to refer to the \emph{projection functions} to the first or second coordinate by $\pr_1$ and $\pr_2$, respectively.
Further, $\Seq{X}$ denotes the \emph{finite sequence}s over $X$ and $\ISeq{X}$ stands for the \emph{countably infinite sequence}s over $X$.
Additionally, $\IfiSeq{X} := \Seq{X} \cup \ISeq{X}$ denotes the set of all \emph{countably finite or infinite sequence}s over $X$. 
For every $f \in \IfiSeq{X}$ and $t \in \N$, we let
$f[t] := \{ (s,f(s)) \mid s < t \}$ denote the \emph{restriction of $f$ to $t$}.
Finally, for sequences $\sigma, \tau \in \Seq{X}$ their concatenation is denoted by $\sigma\concat\tau$ and we write $\sigma \inseg \tau$, if $\sigma$ is an initial segment of $\tau$, i.e., there is some $t \in \N$ such that $\sigma = \tau[t]$.
Moreover, we concatenate sequences by writing them consecutively.
In our setting, we typically have $X = \N$ or
$X=\N \times \{0,1\}$.

As far as possible, notation and terminology on the learning theoretic side follow \cite{STL1} and \cite{Jai-Osh-Roy-Sha:b:99:stl2}, whereas on the computability theoretic side we refer to \cite{Odi:b:99}, \cite{Rog:b:67} and \cite{Koe:th:09}.

A \emph{language $L$} is a recursively enumerable subset of $\N$.
A \emph{prediction model $f$} is a function $f:\N \to \{0,1\}.$
We identify subsets of $\N$ with their characteristic functions $\N\to\{0,1\}$.
Hence, there is a one-one correspondence between recursive languages and recursive binary functions.
We denote the characteristic function for $L\subseteq\N$ by $f_L$.


When considering binary supervised learning, the \emph{set of all training data sequences $\mathbb{S}$} is the set of all finite sequences
$$\sigma=((n_0,y_0),\ldots,(n_{|\sigma|-1},y_{|\sigma|-1}))$$
of \emph{consistently} binary labeled natural numbers.
In case of learning from positive data only, we encounter the set $\mathbb{T}$ of finite sequences $\tau=(n_0,\ldots,n_{|\tau|-1})$ of natural numbers.

In the context of language learning, \cite{Gol:j:67}, in his seminal paper, distinguished two major different kinds of information presentation.
A function $$I:\N \to \N\times\{0,1\}$$ is an \emph{informant for language $L$}, if there is a surjection $n:\N\to\N$ such that  for every $t\in\N$ holds $I(t)=(n(t),f_L(n(t))).$
As $f_L$ is used to label the range of $n$, only consistently labeled sequences result.
Hence, the range of $I$ is a complete information about $L$ but $I$ is free to repeat data.
Moreover, for an informant $I$ we let
\begin{align*}
\ps(I) &:= \{ y \in \N \mid \exists x \in \N \colon \pr_1(I(x))=y \wedge \pr_2(I(x))=1 \} \text{ and } \\
\ng(I) &:= \{ y \in \N \mid \exists x \in \N \colon \pr_1(I(x))=y \wedge \pr_2(I(x))=0 \}
\end{align*}
denote the sets of all natural numbers, about which $I$ gives some positive or negative information, respectively.

A \emph{text for language $L$} is a function $T:\N \to \N\cup\{\#\}$ with range $L$ after removing $\#$. The symbol $\#$ is interpreted as pause symbol and added to deal with finite languages.
The main difference between an informant and a text for $L$ is that the informant tells you also that a natural number is \emph{not} in $L$.

A set $\CalL=\{L_i \mid i\in\N\}$ of languages is called \emph{indexable family} if there is a computer program that on input \((i, n)\in\N^2\)	returns $1$ if \(n\in L_i\) and $0$ otherwise.
Important examples are $\Fin$ and $\mathbf{CoFin}$, the set of all finite subsets of $\N$ and the set of all complements of finite subsets of $\N$, respectively.
				
A \emph{learner $M$ from informants (texts)} is a (partial) computable function $$M: \mathbb{S} \to \N \qquad (M: \mathbb{T} \to \N)$$ with the output interpreted with respect to a prefixed hypothesis space $\CalH$.

Often the hypothesis space is an indexable class or the established $W$-hypothesis space defined in Subsection~\ref{TotCan}.

Let $\CalL$ be a collection of languages that we want to learn. We will refer to $\CalL$ as the concept class which will often be an indexable family.
Further, let $\CalH=\{ L_i \mid i\in\N\}$ with $\CalL\subseteq\CalH$ be a second collection of languages called the hypothesis space.
In general we do \emph{not} assume that for every $L\in\CalL$ there is a unique index $i\in\N$ with $L_i=L$.
Indeed, ambiguity in the hypothesis space helps memory-resticted learners to remember data.

Let $I$ be an informant ($T$ be a text) for $L$ and $\CalH=\{ L_i \mid i\in\N\}$ a hypothesis space.
A learner $M: \mathbb{S}\to\N$ ($M: \mathbb{T}\to\N$) is \emph{successful on $I$ (on $T$)} if it eventually settles on $i\in\N$ with $L_i=L$.
This means that when receiving increasingly long finite initial segments of $I$ (of $T$) as inputs, it will from some time on be correct and not change the output on longer initial segments of $I$ (of $T$).

$M$ \emph{learns} $L$ if it is successful on every informant $I$ (on every text $T$) for $L$.
$M$ \emph{learns $\CalL$} if there is a hypothesis space $\CalH$ such that $M$ learns every $L\in\CalL$.
We denote the collection of all $\CalL$ learnable from informant (text) by $[\Inf\Ex]$ ($[\Txt\Ex]$).
If we fix the hypothesis space, we denote this by a subscript for $\Ex$.

According to \cite{wiehagen1976limes}, \cite{Lan-Zeu:j:96}, \cite{Cas-Jai-Lan-Zeu:j:99:feedback} a learner $M$ is \emph{iterative} if its output on $\sigma\in \mathbb{S}$ ($\tau\in \mathbb{T}$) only depends on the last input $\last(\sigma)$ and the hypothesis $M(\sigma^-)$ after observing $\sigma$ without its last element $\last(\sigma)$. In this sense the learner forgets all prior data and can only refer to the hypothesis which resulted from this data.
The collection of all $\CalL$ learnable by an iterative learner from informant (text) is denoted by $[\It\Inf\Ex]$ ($[\It\Txt\Ex]$).

\section{Comparison with Learning from Text}\label{sec:TextAndIt}

As every informant incorporates a text for the language presented, we gain $[\It\Txt\Ex]\subseteq[\It\Inf\Ex]$ by ignoring negative information.

It has been observed in \cite{STL1} that the superfinite language class $\Fin\cup\{\N\}$ is in $[\Inf\Ex]\setminus[\It\Inf\Ex]$.
Moreover, with $L_k= 2\N\cup\{2k+1\}$ and $L'_k = L_k\setminus \{2k\}$ the indexable family $\CalL =\{2\N\}\cup \{L_k,L'_k \mid k\in\N \}$ lies in $[\Txt\Ex]\cap[\It\Inf\Ex]$ but not in $[\It\Txt\Ex]$.
In \cite{Jai-Osh-Roy-Sha:b:99:stl2} the separations are witnessed by the indexable family 
$\{\mathbb{N} \setminus \{0\}\}\cup\{ D \cup \{0\} : D \in \mathbf{Fin}\}$.

We already observed that not every indexable family is learnable by an iterative learner from informant.
On the other hand, learning by enumeration makes every indexable family learnable by an iterative learner from the informants labeling all natural numbers in the canonical order, see \cite{Gol:j:67}.

\bigskip
It can easily be verified that $\mathbf{CoFin} \in [\It\Inf\Ex]\setminus[\Txt\Ex]$ and with the next result $[\It\Inf\Ex]\perp[\Txt\Ex]$, where $\perp$ stands for incomparability with respect to set inclusion, meaning 
(1) there is a concept class learnable from text but not by an iterative learner from informant and
(2) there is a concept class learnable by an iterative learner from informant but not from text.


\begin{lemma}
There is an indexable family in $\mathbf{[TxtEx]}\setminus\mathbf{[ItInfEx]}$.
\end{lemma}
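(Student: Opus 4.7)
The plan is to exhibit an explicit indexable family $\CalL$, verify $\CalL \in [\Txt\Ex]$ via Angluin's tell-tale characterisation, and refute $\CalL \in [\It\Inf\Ex]$ by a locking-sequence argument mirroring the proof that $\Fin \cup \{\N\} \notin [\It\Inf\Ex]$. Specifically, I would take
\[
\CalL \;=\; \{L_e : e \in \N\} \,\cup\, \{L_{e,F} : e \in \N,\ F \in \Fin\},
\]
where each $L_e$ is an infinite recursive language carrying a designated ``tag'' element $a_e$ (say $a_e = \langle e, 0 \rangle$ under a fixed pairing), and each $L_{e,F}$ is a finite set obtained from an initial fragment of $L_e \setminus \{a_e\}$ together with an index marker recovering~$e$; the tag $a_e$ is arranged to be absent from every $L_{e,F}$ and from every $L_{e'}$ with $e' \neq e$.

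For the positive direction I would invoke Angluin's theorem: the singleton $\{a_e\}$ is a tell-tale for $L_e$, because $a_e \in L_e$ while $a_e$ lies outside every finite companion and outside every other infinite language in $\CalL$; each finite $L_{e,F}$ is its own tell-tale by finiteness; and the tell-tale assignment is uniformly computable in the index. Hence $\CalL \in [\Txt\Ex]$.

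For the negative direction, suppose an iterative informant learner $M$ witnesses $\CalL \in [\It\Inf\Ex]$. Fix $e$, let $I$ be an informant for $L_e$ whose ordering delays $a_e$ past a long positive prefix, and extract a locking-sequence prefix $\sigma \inseg I$ on which $M(\sigma) = h_e$ is the converged correct hypothesis for $L_e$ and remains stable under every $L_e$-consistent extension. Design the companions so that $P := \mathrm{pos}(\sigma)$ is the positive content of some $L_{e,F} \in \CalL$; the informant for $L_{e,F}$ may be chosen to start with $\sigma$ (which is consistent with $L_{e,F}$) and to continue with negatives from $\N \setminus F$. Varying the ordering of $I$ produces distinct locking prefixes $\sigma'$ with different positive contents $P'$ and corresponding distinct companions $L_{e,F'}$; by a Pitt-style shuffle arranging two such informants to share the common post-$\sigma$ state $h_e$ and a common sequence of negatives consistent with both targets, $M$'s subsequent iterative trajectory becomes identical on both informants, forcing the same output on two distinct languages and contradicting correctness on at least one.

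The main obstacle is the simultaneous balancing of tell-tale isolation (needed for $\Txt\Ex$) and companion richness (needed to diagonalise against $M$). Decoupling the tag $a_e$ from the positive contents of the companions reconciles the two constraints: companions may share arbitrarily long positive prefixes with $L_e$, while $a_e$ alone certifies the Angluin tell-tale for $L_e$. Making the companions' positive contents range over every finite subset of $L_e \setminus \{a_e\}$ (subject to containing the index marker for $e$) supplies enough diagonalising partners to refute any potential iterative learner $M$.
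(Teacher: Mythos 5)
Your positive direction (tell‑tale argument for $\Txt\Ex$) is unproblematic in spirit, but the negative direction has a genuine gap. You want the learner, after a locking prefix $\sigma$ for the infinite language $L_e$, to remain in state $h_e$ while an informant for the \emph{finite} companion $L_{e,F}\subsetneq L_e$ is presented. However, any informant for $L_{e,F}$ must eventually present, negatively labeled, infinitely many points of $L_e\setminus L_{e,F}$, and each such datum is \emph{inconsistent with $L_e$}. The locking property of $\sigma$ only guarantees stability of $h_e$ under $L_e$-consistent extensions; an iterative learner is perfectly entitled to change its hypothesis the moment it sees an $L_e$-inconsistent datum, and in fact \emph{must} be able to do so in order to learn the finite companions at all. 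Consequently the ``common post-$\sigma$ state $h_e$'' you rely on is not established, and the Pitt-style shuffle collapses: two informants for distinct finite companions $L_{e,F}$ and $L_{e,F'}$ necessarily disagree on $F\,\Delta\,F'$, all of whose elements are positives of $L_e$, so the prefixes that have to absorb those disagreements cannot be certified (via $L_e$-locking) to end in the same state.

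The paper sidesteps exactly this obstruction by working with companions that are \emph{supersets} rather than subsets, arranged in a three-language chain. Using $L_{S,D}= S\times(D\cup\{0\})\cup(\N\setminus S)\times(\N\setminus\{0\})$, one first takes a locking sequence $\sigma$ for $L_0=\N\times(\N\setminus\{0\})$, then inserts a single datum $((x_0,y_0),1)$ which is \emph{consistent with $L_0$} (hence does not disturb the locked state), then continues with data consistent with a smaller target $L$, locking on $L$. The resulting sequence is in fact an informant for the strict superset $L'=L\cup\{(x_0,y_0)\}\in\CalL$, on which the learner wrongly converges to a hypothesis for $L$. The essential structural feature is that the inserted datum is always consistent with whatever language the learner is currently locked on, so no lock is ever escaped. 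Your construction has no analogue of the third (scaffolding) language, and the distinguishing data between your companions are necessarily $L_e$-inconsistent negatives, which is exactly what the argument cannot tolerate. You would need to redesign $\CalL$ so that the distinguishing points can be inserted with the label that is consistent with the currently locked language, which is essentially what the $L_{S,D}$ construction achieves.
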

\begin{proof}
As there is a computable bijection between $\N$ and $\N\times\N$, we can also consider subsets of $\N\times\N$ as languages.
Denote by $L_{S,D} = S \times (D \cup \{0\}) \cup (\mathbb{N} \setminus S) \times (\mathbb{N} \setminus \{0\}) \subseteq \N\times\N$ the language with $D\cup\{0\}$ in all rows numbered by an $s\in S$ and $\mathbb{N} \setminus \{0\}$ in all other rows.
Consider the indexable family \[ \CalL=\{ L_{S,D} \mid S, D \in \mathbf{Fin}\}.\]

$\mathcal{L}$ is clearly an indexable family, as there is a computable enumaration of all pairs $(S, D)$ where $S$ is a finite subset of $\mathbb{N}$ and $D$ is a finite subset of $\mathbb{N} \setminus \{0\}$. Moreover, there is a uniform procedure to check whether $(n_1,n_2)$ is in $L_{S,D}$.

\textbf{$\mathcal{F} \in \mathbf{[TxtEx]}$:}
Maintain full information at step $n$ of the entire sequence $T[n]$ read from text. Conjecture $S' := \{x | (x, 0) \in T[n]\}$ and $D' := \{y | \exists x \in S' : (x, y) \in T[n]\}$. $S'$ will eventually converge to $S$ as all $(x, 0)$ will be received by the learner at some point for all $x \in S$. After $S' = S$, we can say that $D'$ will also converge to $D$ (if it has not already) because at some point all $(x, y)$ will have been received for all $x \in S$.

\textbf{$\mathcal{F} \not\in \mathbf{[ItInfEx]}$:} Suppose an iterative learner $M$ learns $\mathcal{F}$ from informants. Let $\sigma$ be a locking sequence of $M$ for $\mathbb{N} \times (\mathbb{N} \setminus \{0\})$.
Let $x_0$ be such that $(x_0, 0)$ does not appear in $\sigma$.
Such an $x_0$ must exist because there are infinitely many $(x, 0)$ but $\sigma$ is a finite sequence.
Define $D := \{y| (x_0,y) \in \pos(\sigma)\}$.
$L := \{x_0\} \times (D \cup \{0\}) \cup (\mathbb{N} \setminus \{x_0\}) \times (\mathbb{N} \setminus \{0\})$ is then consistent with $\sigma$, so let $\sigma' \sqsupseteq \sigma$ be a locking sequence for $L$.
Define $y_0$ such that $y_0 > \max\left(\{0\} \cup \{y| \exists x: (x, y) \in \pos(\sigma') \cup \negative(\sigma')\}\right)$.
The element $(x_0, y_0)$ is consistent with $\mathbb{N} \times (\mathbb{N} \setminus \{0\})$ if and only if it is labeled positively and with $L$ if and only if it is labeled negatively.
Because $\sigma$ is a locking sequence for $\mathbb{N} \times (\mathbb{N} \setminus \{0\})$ and $((x_0, y_0), 1))$ is consistent with it, $M(\sigma ((x_0, y_0), 1)) = M(\sigma) = e_1$ such that $W_{e_1} =  \mathbb{N} \times (\mathbb{N} \setminus \{0\})$ so by iterativeness of $M$ we have that if $\tau := \sigma ((x_0, y_0), 1) (\sigma' - \sigma)$ where $\sigma' - \sigma$ is the subsequence of $\sigma'$ starting after $\sigma$ ends, then $M(\tau) = M(\sigma')$ meaning $\tau$ is also a locking sequence for $L$.
This is a contradiction because if $I$ is an informant for $L$, then $J := I \setminus \{((x_0, y_0), 0)\}$ is also consistent with $L$ so for all $\ell \ge 0$ we have $M(\tau J[\ell]) = M(\sigma') = e_2$ such that $W_{e_2} = L$ but $\tau J$ is an informant for $L':= \{x_0\} \times (D \cup \{(x_0, y_0)\} \cup \{0\}) \cup (\mathbb{N} \setminus \{x_0\}) \times (\mathbb{N} \setminus \{0\}) \in \mathcal{F}$ and $L' \ne L$, a contradiction.
\end{proof}

Summing up, we know $[\It\Txt\Ex] \subsetneq [\Txt\Ex]\perp[\It\Inf\Ex] \subsetneq[\Inf\Ex]$.

\bigskip
In the following we give a procedure to generate more separating classes in $[\Txt\Ex]\setminus[\It\Inf\Ex]$.
With the help of the Boolean function $\mathbf{f}$ being defined in Definition~\ref{BoolMap}
we obtain from an indexable family $\CalL\in[\Inf\Ex]\setminus[\It\Inf\Ex]$ an indexable family $\mathbf{f}(\CalL)\in[\Txt\Ex]\setminus[\It\Inf\Ex]$.

\medskip
The idea is to apply the Boolean function $\mathbf{f}$, defined in the following, to an indexable family, a set of informants and to a hypothesis space being a candidate to witness the learnability.
With this notation we can draw conclusions from the learnability in the setting before applying $\mathbf{f}$ to the setting after applying $\mathbf{f}$ and vice versa.

\begin{definition}\label{BoolMap}
We refer to the function $\mathbf{f}\colon \mathcal{P}(\mathbb{N}) \to \mathcal{P}(\mathbb{N})$ defined by
\[ \left( 2n \in \mathbf{f}(L) \Leftrightarrow n \in L \right) \wedge \left( 2n+1 \in \mathbf{f}(L) \Leftrightarrow n \not\in L \right) \]
as the Boolean mapping.
For a set of languages $\CalL$ we define $\mathbf{f}(\mathcal{L}) = \{\mathbf{f}(L) | L \in \mathcal{L}\}$.
\end{definition}

Note that for an indexable class $\mathcal{L}$ the image $f(\mathcal{L})$ is again an indexable class.

To obtain a result also applicable in other contexts, we generalize the notation.
Let $\mathcal{I}$ be a set of informants (texts), for example the ones containing each information only once or infinitely often.
$M$ \emph{learns $L$ from $\mathcal{I}$} if it is successful on every $I\in\mathcal{I}$ for $L$.
$M$ \emph{learns $\CalL$ from $\mathcal{I}$} if it learns every $L\in\CalL$ from $\mathcal{I}$.
We denote the collection of all $\CalL$ learnable from $\mathcal{I}$ by $[\mathcal{I}\Ex]$.


The idea is to apply the Boolean function $\mathbf{f}$ to an indexable family, a set of informants and a hypothesis space possibly witnessing the learnability.
With this notation we can draw conclusions from the learnability in the setting before applying $\mathbf{f}$ to the setting after applying $\mathbf{f}$ and vice versa.

\begin{definition}
We refer to the function $\mathbf{f}\colon \mathcal{P}(\mathbb{N}) \to \mathcal{P}(\mathbb{N})$ defined by
\[ \left( 2n \in \mathbf{f}(L) \Leftrightarrow n \in L \right) \wedge \left( 2n+1 \in \mathbf{f}(L) \Leftrightarrow n \not\in L \right) \]
as the Boolean mapping.
For a set of languages $\CalL$ we define $\mathbf{f}(\mathcal{L}) = \{\mathbf{f}(L) | L \in \mathcal{L}\}$.
For an informant $I$ for $L$ we obtain an informant $\mathbf{f}(I)$ for $\mathbf{f}(L)$ by interweaving $I_+$ and $I_-$ where
$$I_+(t) = \begin{cases}
(2n_t,1) &\text{if } I(t)=(n_t,1); \\
(2n_t+1,1) &\text{if } I(t)=(n_t,0).
\end{cases} \quad
\text{and}
\quad
I_-(t) = \begin{cases}
(2n_t+1,0) &\text{if } I(t)=(n_t,1); \\
(2n_t,0) &\text{if } I(t)=(n_t,0).
\end{cases}$$
Moreover, the projection of $I_+$ to the first coordinate yields a text for $\mathbf{f}(L)$.
For a set of informants $\mathcal{I}$ we define the corresponding sets of informants $\mathbf{f}(\mathcal{I})$ and texts $T_{\mathbf{f}}(\mathcal{I})$ by
\[ \mathbf{f}(\mathcal{I}) := \{ \mathbf{f}(I) \mid  I \in \mathcal{I}\} \quad \text{and} \quad T_{\mathbf{f}}(\mathcal{I}) := \{ \pr_1\circ I_+ \mid  I \in \mathcal{I}\}. \]
\end{definition}

Note that for an indexable class $\mathcal{L}$ the image $f(\mathcal{L})$ is again an indexable class.

\bigskip
We will apply the following result to the full set of informants but state it more generally for arbitrary sets of informants $\CalI$.

\begin{theorem} \label{InfToTxt}
Let $\mathcal{I}$ be a set of informants, $\mathcal{L} \subseteq \{\text{pos}(I) | I \in \mathcal{I}\}$ a concept class and $\CalH$ an indexable family as suitable fixed hypothesis space.
Consider the Boolean mapping $\mathbf{f}$ from Definition~\ref{BoolMap}.

If $\mathcal{L} \in [\mathcal{I}\Ex_\CalH]$, 
then $\mathbf{f}(\mathcal{L}) \in [T_{\mathbf{f}}(\mathcal{I})\Ex_{\mathbf{f}(\CalH)}]$.

Moreover, if $\mathcal{I}$ is upwards closed with respect to the subsequence relation,
then $\mathcal{L} \in [(\It)\mathcal{I} \Ex_\CalH]$ is equivalent to $\mathbf{f}(\mathcal{L}) \in [(\It)\mathbf{f}(\mathcal{I}) \Ex_{\mathbf{f}(\CalH)}]$.
\begin{proof}
Let $\mathbf{f}$, $\mathcal{I}$, $\mathcal{L}$ and $\CalH$ be as stated above.

$\mathcal{L} \in \mathbf{[\mathcal{I}Ex_\CalH]} \Rightarrow \mathbf{f}(\mathcal{L}) \in [T_{\mathbf{f}}(\mathcal{I})\Ex_{\mathbf{f}(\CalH)}]:$
Let $M$ be a learner for $\mathcal{L}$ from $\mathcal{I}$.
Let $\mathbf{f}(L)\in \mathbf{f}(\CalL)$ and $T\in T_{\mathbf{t}}(\mathcal{I})$ a text for $\mathbf{f}(L)$.
Then there is an informant $I\in\mathcal{I}$ for $L$ such that $T=\pr_1\circ I_+$.
If for every $t\in\N$ we denote the first and second coordinate of $I(t)$ by $n_t$ and $\lambda_t$, respectively, 
we obtain $T=(2n_t +1-\lambda_t)_{t\in\N}$.
Therefore, we can in a computable way reconstruct $I[t]$ from $T[t]$.
We define a learner $M'$ which simulates $M$ by $M'(T[t])=M(I[t])$.
It is easy to see that $M'$ learns $\mathbf{f}(\mathcal{L})$ from $T_{\mathbf{f}}(\mathcal{I})$.

If $\mathcal{I}$ is upwards closed with respect to the subsequence relation, 
$\mathcal{L} \in \mathbf{[\It \mathcal{I} Ex_\CalH]} \Rightarrow f(\mathcal{L}) \in \mathbf{[\It f(\mathcal{I}) Ex_{\mathbf{f}(\CalH)}]}:$
The proof is very similar to the last paragraph.
Let $M$ be a learner for $\mathcal{L}$ from $\mathcal{I}$.
Let $\mathbf{f}(L)\in \mathbf{f}(\CalL)$ and $I'\in \mathbf{f}(\mathcal{I})$ an informant for $\mathbf{f}(L)$.
Then there is an informant $I\in\mathcal{I}$ for $L$ such that $I'$ results from interweaving $I_+$ and $I_-$.
We compute $\tilde{I}(t)=(\lfloor \frac{x_t}{2} \rfloor, (x_t-w_t) \mod 2)$ from $I'(t)=(x_t,w_t)$ and define $M'$ by $M'(I'[t])=M(\tilde{I}[t])$.
Because $\tilde{I}$ contains $I$ as a subsequence, we obtain $\tilde{I}\in\mathcal{I}$.
Again, it is easily verified that $M'$ learns $\mathbf{f}(\mathcal{L})$ from $\mathbf{f}(\mathcal{I})$.
Moreover, it easy to see that $M'$ is iterative, in case $M$ is.

$f(\mathcal{L}) \in \mathbf{[\It f(\mathcal{I}) \Ex_{\mathbf{f}(\CalH)}]} \Rightarrow \mathcal{L} \in \mathbf{[\It \mathcal{I} Ex_\CalH]}$:
We proceed in a similar fashion.
Let $M'$ be a learner for $\mathbf{f}(\CalL)$ from $\mathbf{f}(\mathcal{I})$.
Let $L\in\CalL$ and $I$ an informant for $L$.
We recursively construct initial segments $\sigma_t$ with $|\sigma_t|=2t$ for the informant $\mathbf{f}(I)$ for $\mathbf{f}(L)$ from $I$ as follows: 
$\sigma_0=\emptyset$;
if $\sigma_t$ is defined and $I(t)=(n_t,\lambda_t)$ then let $\sigma_{t+1}=\sigma_t(2n_t+1-\lambda_t,1)(2n_t+\lambda_t,0)$.
Clearly, $\mathbf{f}(I)=\bigcup_{t\in\N} \sigma_t$.
The learner $M(I[t])=M'(\sigma_t)$ learns $\CalL$ from $\mathcal{I}$.
Finally, if $M'$ is iterative, so is $M$.
\end{proof}
\end{theorem}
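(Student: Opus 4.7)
The plan is to handle the three implications by explicit simulation arguments that translate between the two encodings. The crucial observation is that the Boolean mapping is fully reversible: from any text $T = \pr_1 \circ I_+$ for $\mathbf{f}(L)$ one can decode the original informant $I$ pointwise, since $T(t) = 2n_t + (1-\lambda_t)$ where $I(t)=(n_t,\lambda_t)$, so the parity of $T(t)$ reveals $\lambda_t$ and the quotient by $2$ recovers $n_t$. Symmetrically, every informant datum $(n,\lambda)$ for $L$ expands into exactly two informant data $(2n+1-\lambda,1)$ and $(2n+\lambda,0)$ for $\mathbf{f}(L)$.

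For the first assertion, I take a learner $M$ witnessing $\mathcal{L} \in [\mathcal{I}\Ex_\CalH]$ and define $M'(T[t]) := M(I[t])$, where $I[t]$ is produced by the computable decoding above from $T[t] \in T_{\mathbf{f}}(\mathcal{I})$. Interpreting the output $i$ in the hypothesis space $\mathbf{f}(\CalH) = \{\mathbf{f}(L_i) \mid i \in \N\}$ rather than $\CalH$ yields convergence to an index for $\mathbf{f}(L)$ whenever $M$ converges to an index for $L$. Note that $\mathbf{f}(\CalH)$ is indeed indexable because $\mathbf{f}$ preserves decidability (the oracle for $\mathbf{f}(L_i)$ reduces uniformly to that of $L_i$ via the parity check).

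For the iterative direction $\mathcal{L}\in[\It\mathcal{I}\Ex_\CalH]\Rightarrow \mathbf{f}(\mathcal{L})\in[\It\mathbf{f}(\mathcal{I})\Ex_{\mathbf{f}(\CalH)}]$, I again translate datapoint-by-datapoint. Given iterative $M$ and an informant $I'\in\mathbf{f}(\mathcal{I})$ for $\mathbf{f}(L)$, set $\tilde I(t) := (\lfloor x_t/2\rfloor, (x_t - w_t)\bmod 2)$ whenever $I'(t)=(x_t,w_t)$, and let $M'$ apply $M$ to each new $\tilde I(t)$ in turn while reusing the previous hypothesis — this is legitimate precisely because iterativeness needs only the last datum and the current hypothesis. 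Since $I' \in \mathbf{f}(\mathcal{I})$, there is some $I \in \mathcal{I}$ for $L$ whose interweaving of $I_+,I_-$ is $I'$, and $\tilde I$ contains $I$ as a subsequence; here is where upward closure of $\mathcal{I}$ under the subsequence relation is used to guarantee $\tilde I\in\mathcal{I}$, so that the successful behaviour of $M$ on $\tilde I$ transfers.

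For the converse $\mathbf{f}(\mathcal{L})\in[\It\mathbf{f}(\mathcal{I})\Ex_{\mathbf{f}(\CalH)}]\Rightarrow \mathcal{L}\in[\It\mathcal{I}\Ex_\CalH]$, given iterative $M'$ and an informant $I \in \mathcal{I}$ for $L$, I recursively define expanding initial segments $\sigma_t$ of length $2t$ of $\mathbf{f}(I)$ by $\sigma_{t+1}=\sigma_t\concat(2n_t+1-\lambda_t,1)(2n_t+\lambda_t,0)$ when $I(t)=(n_t,\lambda_t)$. Each step appends the two data of $I_+$ and $I_-$ corresponding to $I(t)$, and the learner $M(I[t]):=M'(\sigma_t)$ can be computed iteratively by feeding $M'$ these two fresh data using only its previous conjecture. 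Since $\bigcup_t\sigma_t=\mathbf{f}(I)\in\mathbf{f}(\mathcal{I})$, the success of $M'$ on $\mathbf{f}(I)$ for $\mathbf{f}(L)$ yields success of $M$ on $I$ for $L$ via the same underlying indices (now read in $\CalH$ rather than $\mathbf{f}(\CalH)$). The main conceptual obstacle across all three parts is not the decoding itself but verifying that the simulated informant lies in the prescribed class of informants; in the first iterative direction this is exactly what the upward-closure hypothesis delivers, while in the reverse direction the construction produces $\mathbf{f}(I)$ itself, so no extra closure is needed.
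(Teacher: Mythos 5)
Your proof is correct and follows essentially the same route as the paper: the same parity-based decoding $T(t)=2n_t+(1-\lambda_t)$ for the text direction, the same formula $\tilde I(t)=(\lfloor x_t/2\rfloor,(x_t-w_t)\bmod 2)$ combined with upward closure under subsequences for the forward iterative direction, and the same two-step expansion of each $I(t)$ into $\sigma_{t+1}=\sigma_t\concat(2n_t+1-\lambda_t,1)(2n_t+\lambda_t,0)$ for the converse. Your added remarks on why $\mathbf{f}(\CalH)$ stays indexable and on reinterpreting hypotheses in the new space are harmless elaborations, not a different method.
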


If $\mathcal{I}$ is the set of all informants for $\CalL$, then $T_{\mathbf{f}}(\mathcal{I})$ is the set of all texts for $\mathbf{f}(\CalL)$. $\mathbf{f}(\mathcal{I})$ is the set of all informants for $\mathbf{f}(\CalL)$ that have the positive and negative informations in the order given by interweaving.

\begin{corollary}
Consider the Boolean mapping $\mathbf{f}$ from Definition~\ref{BoolMap}.
Then for indexable concept classes and hypothesis spaces holds: $\mathcal{L} \in \mathbf{[InfEx]} \Rightarrow \mathbf{f}(\mathcal{L}) \in \mathbf{[TxtEx]}$, \linebreak[3]
and $\mathcal{L} \in \mathbf{[\It Inf Ex]} \Leftarrow \mathbf{f}(\mathcal{L}) \in \mathbf{[\It Inf Ex]}$.
\begin{proof}
For the second implication note that $\mathbf{f}(\mathcal{L}) \in \mathbf{[\It Inf Ex]} \Rightarrow \mathbf{f}(\mathcal{L}) \in \mathbf{[\It \,f(Inf) Ex]} \Rightarrow \mathcal{L} \in \mathbf{[\It Inf Ex]}$.
\end{proof}
\end{corollary}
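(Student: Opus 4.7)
The plan is to derive both implications as straightforward consequences of Theorem~\ref{InfToTxt} applied with $\mathcal{I}$ equal to the full set $\mathbf{Inf}$ of all informants for languages in $\mathcal{L}$. For the first implication, I would invoke the unconditional direction of the theorem to obtain $\mathbf{f}(\mathcal{L}) \in [T_{\mathbf{f}}(\mathbf{Inf})\Ex_{\mathbf{f}(\CalH)}]$. To upgrade this to $\mathbf{f}(\mathcal{L}) \in [\mathbf{TxtEx}]$, I would argue that learnability from $T_{\mathbf{f}}(\mathbf{Inf})$ entails learnability from arbitrary texts: given any text $T$ for $\mathbf{f}(L)$, skip the $\#$-symbols and decode each non-pause datum $m$ into a labelled pair for $L$ (namely $(\lfloor m/2 \rfloor, 1 - (m \bmod 2))$); since $T$ enumerates all of $\mathbf{f}(L)$, every $n \in \N$ is eventually labelled, so this procedure yields an informant $I$ for $L$ to which the unconditional construction from the proof of Theorem~\ref{InfToTxt} can be applied to produce the required hypothesis.

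For the second implication, I would simply fill in the chain hinted at by the author. If $\mathbf{f}(\mathcal{L}) \in [\mathbf{ItInfEx}]$, then the corresponding iterative learner succeeds on every informant and hence, in particular, on every informant of the form $\mathbf{f}(I)$, which gives $\mathbf{f}(\mathcal{L}) \in [\It\,\mathbf{f}(\mathbf{Inf})\Ex_{\mathbf{f}(\CalH)}]$. Since $\mathbf{Inf}$ is upwards closed under the subsequence relation (any supersequence of a surjective, $f_L$-consistent labelling of $\N$ is again such a labelling), the equivalence clause of Theorem~\ref{InfToTxt} applies and yields $\mathcal{L} \in [\mathbf{ItInfEx}]$.

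The only genuinely delicate step is the text-upgrade argument in the first paragraph, and even there the work is light because the decoding $m \mapsto (\lfloor m/2 \rfloor, 1 - (m \bmod 2))$ is total and computable, and completeness of the resulting informant is forced by the fact that $T$ is a text for $\mathbf{f}(L)$. The second implication is essentially a tautological corollary of Theorem~\ref{InfToTxt} together with the inclusion $\mathbf{f}(\mathbf{Inf}) \subseteq \mathbf{Inf}$.
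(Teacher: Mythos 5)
Your proposal is correct and takes essentially the same route as the paper: the second implication is exactly the chain $\mathbf{f}(\mathcal{L}) \in [\It\Inf\Ex] \Rightarrow \mathbf{f}(\mathcal{L}) \in [\It\,\mathbf{f}(\Inf)\Ex] \Rightarrow \mathcal{L} \in [\It\Inf\Ex]$ that the paper gives, and the first follows (as you say) from the unconditional direction of Theorem~\ref{InfToTxt} with $\mathcal{I}$ the set of all informants, modulo the observation the paper makes in the surrounding text that $T_{\mathbf{f}}(\mathcal{I})$ is then the set of all texts for $\mathbf{f}(\mathcal{L})$. You add two points of rigor the paper glosses over: you treat the $\#$-pause symbols explicitly (a text for $\mathbf{f}(L)$ may contain pauses whereas $\pr_1 \circ I_+$ never does, so the learner must skip them), and you verify the upwards-closedness hypothesis before invoking the equivalence clause (the paper's chain silently uses only the direction that does not actually need it). Both additions are sound; nothing in the proposal is missing or incorrect.
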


Therefore, every set of languages separating $[\It\Inf\Ex]$ and $[\Inf\Ex]$ yields a separating class for $[\It\Inf\Ex]$ and $[\Txt\Ex]$.

\begin{corollary} \label{TxtItInf}
Consider the Boolean mapping $\mathbf{f}$ from Definition~\ref{BoolMap}.
Let $\CalL$ be an indexable concept class and require that learnability is witnessed by indexable hypothesis spaces.
Then $\mathcal{L} \in \mathbf{[InfEx]}$ implies $\mathbf{f}(\mathcal{L}) \in \mathbf{[TxtEx]}$.
Moreover, from $\mathbf{f}(\mathcal{L}) \in \mathbf{[\It Inf Ex]}$ we can conclude $\mathcal{L} \in \mathbf{[\It Inf Ex]}$.
\end{corollary}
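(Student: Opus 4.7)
The plan is to instantiate Theorem~\ref{InfToTxt} with $\mathcal{I}$ being the set of all informants for $\CalL$, and to check that the Boolean mapping behaves well with respect to ``all informants'' and ``all texts'' on both sides.

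For the first implication, suppose $\CalL \in [\Inf\Ex]$, witnessed by some indexable hypothesis space $\CalH$. Then $\CalL \in [\mathcal{I}\Ex_\CalH]$ where $\mathcal{I}$ is the set of all informants for $\CalL$, so the first part of Theorem~\ref{InfToTxt} yields $\mathbf{f}(\CalL) \in [T_{\mathbf{f}}(\mathcal{I}) \Ex_{\mathbf{f}(\CalH)}]$. To derive $\mathbf{f}(\CalL) \in [\Txt\Ex]$, I would verify that $T_{\mathbf{f}}(\mathcal{I})$ is exactly the set of all texts for languages in $\mathbf{f}(\CalL)$: given an arbitrary text $T$ for some $\mathbf{f}(L)$, we reconstruct an informant $I$ for $L$ by setting $I(t) = (n,1)$ if $T(t) = 2n$ and $I(t) = (n,0)$ if $T(t) = 2n+1$. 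Since $2n \in \mathbf{f}(L) \Leftrightarrow n \in L$ and $2n+1 \in \mathbf{f}(L) \Leftrightarrow n \notin L$, the labels are consistent; surjectivity of $T$ onto $\mathbf{f}(L)$ forces $\pr_1 \circ I$ to be surjective onto $\N$. By construction $\pr_1 \circ I_+ = T$, so $T \in T_{\mathbf{f}}(\mathcal{I})$.

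For the second implication, assume $\mathbf{f}(\CalL) \in [\It\Inf\Ex]$, witnessed by some indexable hypothesis space $\CalH'$. I would first check that every $\mathbf{f}(I) \in \mathbf{f}(\mathcal{I})$ is genuinely an informant for the corresponding $\mathbf{f}(L)$: $I_+$ labels elements of $\mathbf{f}(L)$ with $1$, $I_-$ labels non-elements with $0$, and together the first coordinates exhaust $\{2n, 2n+1 : n \in \N\} = \N$. Hence the assumed iterative learner in particular succeeds on $\mathbf{f}(\mathcal{I})$, giving $\mathbf{f}(\CalL) \in [\It\,\mathbf{f}(\mathcal{I})\,\Ex_{\CalH'}]$. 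One technicality: Theorem~\ref{InfToTxt} phrases the hypothesis space on the Boolean-mapped side as $\mathbf{f}(\CalH)$, so I would take $\CalH := \mathbf{f}^{-1}(\CalH')$ extended to all of $\N$ (or argue that any indexable $\CalH'$ can be pulled back through $\mathbf{f}$ to an indexable $\CalH$ with $\mathbf{f}(\CalH) \supseteq \CalH' \cap \mathbf{f}(\mathcal{P}(\N))$, which suffices). Finally, since the set of all informants is trivially upwards closed under taking subsequences, the equivalence in Theorem~\ref{InfToTxt} delivers $\CalL \in [\It\,\mathcal{I}\,\Ex_\CalH] = [\It\Inf\Ex]$.

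The main obstacle is the bookkeeping in the second implication: matching the hypothesis space side condition of Theorem~\ref{InfToTxt} (which provides an iterative learner with respect to the specific interwoven informants in $\mathbf{f}(\mathcal{I})$) against the hypothesis ($\It\Inf\Ex$ with respect to \emph{all} informants). Once the two observations above are in place, namely that $T_{\mathbf{f}}$ applied to all informants yields all texts, and that $\mathbf{f}$ applied to all informants produces only legitimate informants, the corollary reduces to two direct invocations of Theorem~\ref{InfToTxt}.
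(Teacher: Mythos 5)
Your proof is correct and follows the same route as the paper: instantiate Theorem~\ref{InfToTxt} with $\mathcal{I}$ equal to the full set of informants, observe that $T_{\mathbf{f}}(\mathcal{I})$ is then the set of all texts for $\mathbf{f}(\CalL)$, and for the converse note that an $[\It\Inf\Ex]$-learner for $\mathbf{f}(\CalL)$ in particular handles the sub-family $\mathbf{f}(\mathcal{I})$ and pull back through the equivalence. You fill in a detail the paper leaves tacit, namely the hypothesis-space bookkeeping: the paper's one-line proof never checks that the given indexable $\CalH'$ for $\mathbf{f}(\CalL)$ can be written (or replaced) as $\mathbf{f}(\CalH)$ with $\CalH$ indexable, and your pullback construction $\CalH := \{\{n : 2n \in L'\} : L' \in \CalH'\}$, which satisfies $\mathbf{f}(\CalH) \supseteq \CalH' \cap \ran(\mathbf{f}) \supseteq \mathbf{f}(\CalL)$, closes that gap correctly.
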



\section{Total and Canny Learners} \label{TotCan}

For the rest of this section, without further notation, all results are understood with respect to the $W$-hypothesis space defined in the following.
We fix a programming system $\varphi$ as introduced in \cite{Roy-Cas:b:94}. Briefly, in the $\varphi$-system, for a natural number $p$, we denote by $\varphi_p$ the partial computable function with program code $p$.
We also call $p$ an \emph{index} for $W_p$ defined as $\dom(\varphi_p)$.
In reference to a Blum complexity measure, for all $p, t \in \N$, we denote by $W^t_p \subseteq W_p$ the recursive set of all natural numbers less or equal to $t$, on which the machine executing $p$ halts in at most $t$ steps.
Moreover, by s-m-n we refer to a well-known recursion theoretic observation, which gives nice finite and infinite recursion theorems, like Case's Operator Recursion Theorem $\ORT$.

\medskip
Let us discuss Theorem~\ref{InfToTxt} for $W$-indices. For, let $p$ be such that $W_p\in\CalL$.
There is an obvious mapping from an $W$-index $q$ for $\mathbf{f}(W_p)\in\mathbf{f}(\CalL)$ to some $p'$ with $W_p=W_{p'}$.
Unfortunately, it is not possible to map a $W$-index for a non-recursive $W_p$ to a $W$-index for $\mathbf{f}(W_p).$

\medskip
The question whether excluding partial functions as learners, denoted by $\mathcal{R}$, makes some sets of languages unlearnable has been investigated.
Allowing only total learners does not restrict full-information learning from informant and text, i.e. $[\mathcal{R}\Inf\Ex]=[\Inf\Ex]$ and $[\mathcal{R}\Txt\Ex]=[\Txt\Ex]$.
On the other hand \cite{CaseMoelius2009} showed $[\mathcal{R}\It\Txt\Ex]\subsetneq[\It\Txt\Ex]$.

We show that totality restricts iterative learning from informant. 

\begin{theorem}\label{thm:TotalityRestricts}
$[\It\Inf\Ex]\setminus[\CalR\It\Inf\Ex]\neq\varnothing$.
\end{theorem}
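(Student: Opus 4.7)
The plan is to construct, using Case's Operator Recursion Theorem ($\ORT$), a class $\CalL$ that is iteratively learnable from informant by a partial learner but not by any total one, adapting the high-level strategy from \cite{CaseMoelius2009} for the corresponding text result to the informant setting.

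The core observation driving the diagonalization is that an iterative learner is completely determined by a (partial) computable transition function $\delta(h,d)$ mapping a current hypothesis $h$ and a new datum $d\in\N\times\{0,1\}$ to a next hypothesis. For a \emph{total} iterative learner, this $\delta$ is total, so I can simulate the learner's state trajectory on any hypothetical informant prefix and decide, in finitely many steps, exactly what its hypothesis will be. I would use $\ORT$ to obtain a self-referential double-indexed family $\{e_{i,k}\}_{i,k\in\N}$ and a distinguished index $e^*$ for a base language $L^*\in\CalL$. For each candidate index $i$ (viewed as a potential total iterative learner $\varphi_i$), the $\ORT$ construction simulates $\varphi_i$ on a canonical informant for $L^*$ until $\varphi_i$ reaches a locking state $h_i$, and then builds perturbations $L_{i,k}$ that agree with $L^*$ except on finitely many fresh numbers not yet presented. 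The key step is to exhibit data $(n,b)$ consistent with some $L_{i,k}$ such that $\delta_{\varphi_i}(h_i,(n,b))$ has to both stay at $h_i$ (to not abandon correctness on $L^*$'s continuation) and move to a different index (to eventually identify $L_{i,k}$), yielding a contradiction by iterativeness and totality.

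For the positive side $\CalL\in[\It\Inf\Ex]$, I would design a partial iterative learner $M$ that uses the padding lemma to encode a bounded record of the informant history, in particular of any negative labels observed, into its current hypothesis. On each new datum $M$ checks whether the extended history is compatible with the structure of some $L^*$ or $L_{i,k}$ in $\CalL$; if so, it outputs the appropriate padded index, and otherwise it diverges. Since every informant for a language in $\CalL$ is compatible by construction, divergence never occurs on legitimate input sequences, and $M$ converges to a correct index once enough of the distinguishing perturbation has been seen. Partiality is essential here: it provides a principled way to leave the transition function undefined on pairs $(h,d)$ that simply cannot arise for any $L\in\CalL$, and thus to evade the rigidity that the diagonalization exploits against total learners.

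The main obstacle is the delicate $\ORT$ bookkeeping required to make these two demands compatible: the perturbation languages $L_{i,k}$ must be numerous and subtle enough to foil every total iterative learner $\varphi_i$ simultaneously, yet they must be structured enough that a single partial iterative learner can iteratively identify which of $L^*$ or $L_{i,k}$ is being presented by reading only the current hypothesis and the next datum. Arranging the perturbation points, the locking-state simulations, and the self-referential indexing so that correctness on $\CalL$ is preserved while diagonalization succeeds on every $i$ is the technical heart of the argument.
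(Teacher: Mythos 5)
Your proposal takes a genuinely different route from the paper, and in its present form it has a gap that the paper's route was specifically designed to avoid.

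The paper uses a ``learner-first'' construction: it first writes down a simple partial iterative learner $M$ (on a positively labelled datum $(x,1)$ it jumps to $\varphi_x(0)$, provided its current state is not a designated marker, and otherwise it repeats its state), and then simply defines $\CalL := \It\Inf\Ex(M)$. The positive side $\CalL\in[\It\Inf\Ex]$ is thereby free; all the work goes into showing that for any alleged total $M'$ learning $\CalL$, padded ORT yields a sequence $\sigma_i$ and a data-stream function $a$ such that $W_e=\bigcup_i\ps(\overline{\sigma_i})$ and each $W_e\cup\{a(\sigma_i)\}$ are provably in $\CalL$ (because the self-referential values $\varphi_{a(\sigma)}(0)$ are rigged so that $M$ jumps to the correct hypothesis), yet $M'$, being total and iterative, must after its convergence point react identically to $(a(\sigma_i),1)$ and $(a(\sigma_i),0)$, so it fails on one of the two languages. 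You instead propose a ``class-first'' construction: build $\CalL=\{L^*\}\cup\{L_{i,k}\}$ by diagonalizing against all candidate total learners, then separately verify $\CalL\in[\It\Inf\Ex]$. That can in principle work, but it forces you to do the positive direction by hand, which is exactly the bookkeeping you flag as the ``technical heart'' — the paper's trick makes that direction trivial.

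The concrete gap: you write that the ORT construction ``simulates $\varphi_i$ on a canonical informant for $L^*$ until $\varphi_i$ reaches a locking state $h_i$, and then builds perturbations.'' Reaching a locking state is a $\Pi_2$ event, not something the construction can detect and act on in finitely many steps; you cannot branch the ORT recursion on it. The paper sidesteps this entirely: the recursion never waits for a locking state. It unconditionally builds $\sigma_{i+1}$ from $\sigma_i$ (extending exactly when $M'$'s reaction to the candidate datum would be a mind change — a decidable test because $M'$ is assumed total), and only the \emph{post hoc analysis} invokes convergence of $M'$ to pick $t_0$ and derive the contradiction. Relatedly, iterating ``for each candidate index $i$'' over all programs runs into the issue that the simulated $\varphi_i$ may be partial and the simulation may stall; the paper avoids this by running the argument against a single fixed $M'$ that is assumed total as a hypothesis for contradiction, rather than folding a universal diagonalization into the construction of $\CalL$ itself. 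To repair your outline you would need to either drop the ``wait for a locking state'' step and replace it with a monotone, unconditional ORT construction as in the paper, or switch to the learner-first formulation so that membership of the diagonal witnesses in $\CalL$ comes for free.
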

\begin{proof}
Let $o$ be an index for $\varnothing$ and define the iterative learner $M$ for all $\xi \in \Nttwo$ by
\begin{align*}
M(\varnothing) &= o; \\
h_{M}(h,\xi) &= \begin{cases}
\varphi_{\pr_1(\xi)}(0), &\text{else if } \pr_2(\xi)=1 \text{ and } h \notin \ran(\ind); \\
h, &\text{otherwise.}
\end{cases}
\end{align*}
We argue that $\CalL := \{\, L \subseteq \N \mid 
L\in\It\Inf\Ex(M) \,\}$ is not learnable by a total learner from informants.
Assume towards a contradiction $M'$ is such a learner.

For a finite informant sequence $\sigma$ we denote by $\overline{\sigma}$ the corresponding canonical finite informant sequence, ending with $\sigma$'s datum with highest first coordinate.
Then by padded ORT there are $e \in \N$ and a strictly increasing computable function $a: \Seq{\N} \to \N$, such that for all $\sigma \in \Seq{\N}$ and all $i \in \N$
\begin{align}
\sigma_0 &= \varnothing; \nonumber \\
\sigma_{i+1} &= \sigma_i\,\concat \begin{cases}
(a(\sigma_i),1), &\text{if } M'(\overline{\sigma_i\:\!\concat (a(\sigma_i),1)})\neq M'(\overline{\sigma_i}); \\
\varnothing, &\text{otherwise;}
\end{cases} \label{eq:MCNewLearner} \\
W_{e} &= \bigcup_{i \in \N} \ps(\overline{\sigma_i}); \nonumber \\
\varphi_{a(\sigma)}(x)&=\begin{cases}
e, & \text{if } M'(\overline{\sigma\concat (a(\sigma),1)})\neq  M'(\overline{\sigma}); \\
\ind_{\ps(\sigma)\cup\{a(\sigma)\}}, & \text{otherwise;}
\end{cases} \nonumber
\end{align}
Clearly, we have $W_e \in \CalL$ and thus $M'$ also $\Inf\Lim$-learns $W_e$.
By the $\Lim$-convergence there are $e', t_0 \in \N$, where $t_0$ is minimal, such that $W_{e'}=W_{e}$ and for all $t \geq t_0$ we have $M'(\bigcup_{i\in\N}\overline{\sigma_i}[t])=e'$ and hence by \eqref{eq:MCNewLearner} for all $i$ with $|\overline{\sigma_i}|\geq t_0$
$$M'(\overline{\sigma_i\:\!\concat (a(\sigma_i),1)})= M'(\overline{\sigma_i})=M'(\overline{\sigma_i\:\!\concat (a(\sigma_i),0)}).$$

It is easy to see, that $W_e=\ps(\sigma_i)$ and $W_e\cup\{a(\sigma_i)\} \in \CalL$.
On the other hand $M'$ is iterative and hence does not learn $W_e$ and $W_e\cup\{a(\sigma_i)\}$.
\end{proof}

The following definition is central in investigating the learning power of iterative learning from texts, see \cite{Cas-Moe:c:07:nonuit} and \cite{jain2016role}. We transfer it to learning from informants.

\begin{definition}
A learner $M$ from informant is called \emph{canny} in case for every finite informant sequence $\sigma$ holds
\begin{enumerate}
\item if $M(\sigma)$ is defined then $M(\sigma)\in\N$;
\item for every $x\in\N\setminus\cnt(\sigma)$ and $i\in\{0,1\}$ a mind change $M(\sigma\concat(x,i))\neq M(\sigma)$ implies for all finite informant sequences $\tau$ with $\sigma\concat(x,i)\inseg\tau$ that $M(\tau\concat(x,i))= M(\tau)$.
\end{enumerate}
\end{definition}

Hence, the learner is canny in case it always outputs a hypotheses and no datum twice causes a mind change of the learner.
Also for learning from informant, the learner can be assumed canny.

\begin{lemma} \label{ItCannyInf}
For every iterative learner $M$, there exists a canny iterative learner $N$ such that $$\Inf\Ex(M)\subseteq \Inf\Ex(N).$$
\end{lemma}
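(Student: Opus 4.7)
The plan is to adapt the canny-normal-form construction of \cite{Cas-Moe:c:07:nonuit} from the text setting. I enlarge $N$'s hypothesis so that, besides $M$'s current hypothesis $h$, it also carries a finite ``blocked set'' $D \subseteq \N \times \{0,1\}$ of labelled data, using an s-m-n padding function $\pad$ with $W_{\pad(h,D)} = W_h$ so that $N$'s output denotes the same language as $h$. Set $N(\varnothing) := \pad(M(\varnothing), \varnothing)$; on a new datum $\xi = (x, i)$ with current $N$-hypothesis decoding to $(h, D)$, let $N$ keep the hypothesis if $\xi \in D$, and otherwise compute $h' := h_M(h, \xi)$ and return $\pad(h', D \cup \{\xi\})$ when $h' \neq h$ or $\pad(h, D)$ when $h' = h$. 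Then $N$ is iterative and partial computable, and its outputs always lie in $\N$, so the trivial first clause of canniness holds.

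For the nontrivial canniness clause, let $\sigma$ be a finite informant sequence with $x \notin \cnt(\sigma)$, and suppose $N$ has a mind change at $(x, i)$ after $\sigma$. Since $x$ is new, $\xi = (x, i)$ is not in the blocked set $D_\sigma$ coded by $N(\sigma)$, so the second transition clause fires; the only way this mind change can occur is then $h' \neq h$, which forces $\xi$ to be added to the blocked set of $N(\sigma \concat (x, i))$. Blocked sets are monotone along an informant, so $\xi \in D_\tau$ for every $\tau$ with $\sigma \concat (x, i) \inseg \tau$; hence the first clause fires at $\tau \concat (x, i)$ and $N(\tau \concat (x, i)) = N(\tau)$, exactly as canniness demands.

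To establish $\Inf\Ex(M) \subseteq \Inf\Ex(N)$, fix $L$ learned by $M$ and any informant $I$ for $L$. Inside $N$, the simulated $M$-state $h$ evolves exactly as $M$ would on the sub-informant $I^\flat$ of $I$ obtained by keeping only those data that were unblocked upon arrival; because blocking happens only after processing, $I^\flat$ still contains every first occurrence of every datum of $I$, so $I^\flat$ is itself an informant for $L$. Consequently the $h$-coordinate converges to a correct index $h^\ast$ with $W_{h^\ast} = L$. The set $D$ grows only at $M$-mind-changes along $I^\flat$, of which there are finitely many, so $D$ stabilizes at some finite $D^\ast$, and $N$ converges to $\pad(h^\ast, D^\ast)$, a correct $W$-index for $L$.

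The main technical point is ensuring coherence of the simulated $M$-trajectory with $M$'s run on $I^\flat$: one must note that re-feeding $M$ a datum that caused no mind change at its first occurrence produces no mind change at any later occurrence either, because $M$ is iterative and deterministic and its state has not changed in the interim. It is also crucial that $D$ be updated only on actual $M$-mind-changes rather than on every new datum, since otherwise $D$ would grow without bound along an infinite informant and syntactic convergence of $N$ would fail.
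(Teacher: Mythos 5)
Your construction is essentially the paper's: maintain inside the hypothesis a finite block list of data that already caused a mind change, ignore such data on re-arrival, pass everything else through to the simulated $M$, and observe that the effective sub-informant $I^\flat$ seen by $M$ still contains every first occurrence and is hence an informant for $L$, so $M$'s state stabilizes, the block list then stops growing, and $N$'s padded hypothesis converges correctly. The only cosmetic difference is the encoding: the paper pads a one--one code of the full sub-sequence $\sigma$ of mind-change-causing data and works with $W_{f(\sigma)}=W_{M(\sigma)}$, whereas you pad the pair $(h,D)$ with $h=M(\sigma)$ and $D$ the blocked set; since $M$ is iterative, feeding it data that produces no mind change is a no-op, so the two encodings carry the same effective information and yield the same learner. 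Both the canniness argument (the block list is monotone along any informant, so a datum that once caused a mind change is thereafter ignored) and the convergence argument are parallel to the paper's.

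One correction you should make: the claim in your last paragraph — that re-feeding $M$ a datum which caused no mind change at first arrival can never cause one later, ``because $M$ is iterative and deterministic and its state has not changed in the interim'' — is false in general, since $M$'s state may well have changed in the interim because of other data, and the re-fed datum may then trigger a mind change (and get blocked at that later point). Fortunately your argument never uses this: $I^\flat$, as you define it, contains every occurrence at which a datum is still unblocked, not only first occurrences, and by construction $N$'s simulated $M$-state after any prefix of $I$ is exactly $M$ applied to the corresponding prefix of $I^\flat$; the coherence you worry about holds unconditionally and the real load-bearing step is simply the monotonicity of the block list. Calling that false claim ``the main technical point'' misidentifies where the work is, so the remark should be removed or repaired.
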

\begin{proof}
Let $f$ be a computable 1-1 function mapping every finite informant sequence $\sigma$ to a natural number encoding a program with $W_{f(\sigma)}=W_{M(\sigma)}$ if $M(\sigma)\in\N$ and $W_{f(\sigma)}=\varnothing$ otherwise.
Clearly, $\sigma$ can be reconstructed from $f(\sigma)$.
We define the canny learner $M'$ by letting
\begin{align*}
M'(\varnothing)&=f(\varnothing) \\
h_{M'}(f(\sigma),(x,i))&=
\begin{cases}
f(\sigma\concat (x,i)), &\text{if } x \notin \ps(\sigma)\cup\ng(\sigma) \wedge M(\sigma\concat (x,i))\!\downarrow \:\neq M(\sigma)\!\downarrow; \\
f(\sigma), &\text{if } M(\sigma\concat (x,i))\!\downarrow \:= M(\sigma)\!\downarrow \vee \; x \in \cnt(\sigma); \\
\uparrow, &\text{otherwise.}
\end{cases}
\end{align*}
$M'$ mimics $M$ via $f$ on a possibly finite informant subsequence of the originally presented informant with ignoring data not causing mind changes of $M$ or that has already caused a mind change.

Let $L\in\Inf\Ex(M)$ and $I'\in\Inf(L)$.
As $M$ has to learn $L$ from every informant for it, $M'$ will always be defined.
Further, let $\sigma_0=\varnothing$ and 
\begin{align*}
\sigma_{t+1}&=
\begin{cases}
\sigma_t\concat I'(t), &\text{if } I'(t) \notin \ran(\sigma_t) \wedge M(\sigma_t\concat I'(t))\!\downarrow \:\neq M(\sigma_t)\!\downarrow; \\
\sigma_t, &\text{otherwise.}
\end{cases}
\end{align*}
Then by induction for all $t\in\N$ holds $M'(I'[t])=f(\sigma_t)$.

The following function translates between the two settings
\begin{align*}
\nonsimu(0) &= 0; \\
\nonsimu(t+1) &= \min\{ r>\nonsimu(t) \mid I'(r-1)\notin \ran(\sigma_{\nonsimu(t)}) \}.
\end{align*}
Intuitively, the infinite range of $\nonsimu$ captures all points in time $r$ at which a datum that has not caused a mind change so far, is seen and a mind-change of $M'$ is possible.
Thus the mind change condition is of interest in order to decide whether $\sigma_{\nonsimu(t+1)}\neq\sigma_{\nonsimu(t)}$.
Note that $\sigma_r=\sigma_{\nonsimu(t)}$ for all $r$ with $\nonsimu(t)\leq r <\nonsimu(t+1)$.

Let $I(t)=I'(\nonsimu(t+1)-1)$ for all $t\in\N$.
Since only already observed data is ommited, $I$ is an informant for $L$.

We next argue that $M(I[t])=M(\sigma_{\nonsimu(t)})$ for all $t\in\N$.
As $I[0]=\varnothing=\sigma_0$, the claim holds for $t=0$.
Now we assume $M(I[t])=M(\sigma_{\nonsimu(t)})$ and show $M(I[t+1])=M(\sigma_{\nonsimu(t+1)})$ as follows
\begin{align*}
M(I[t+1]) = M(I[t]\concat I(t))
= M(\sigma_{\nonsimu(t)}\concat I(t)).
\end{align*}
As by the definitions of $I$ and $\nonsimu$ we have $I(t)=I'(\nonsimu(t+1)-1)\notin\ran(\sigma_{\nonsimu(t)})$ there are two cases:
\begin{enumerate}
\item If $M(\sigma_{\nonsimu(t)}\concat I(t))= M(\sigma_{\nonsimu(t)})$, then from $\sigma_{\nonsimu(t+1)-1}=\sigma_{\nonsimu(t)}$ and the definition of $M'$ we obtain $\sigma_{\nonsimu(t+1)}=\sigma_{\nonsimu(t)}$.
Putting both together the claimed equality $M(\sigma_{\nonsimu(t)}\concat I(t))=M(\sigma_{\nonsimu(t+1)})$ follows.
\item If $M(\sigma_{\nonsimu(t)}\concat I(t))\neq M(\sigma_{\nonsimu(t)})$, the definition of $M'$ yields $\sigma_{\nonsimu(t+1)}=\sigma_{\nonsimu(t)}\concat I(t)$. Hence the claimed equality also holds in this case.
\end{enumerate}

We now argue that $M'$ explanatory learns $L$ from $I'$.
In order to see this, first observe $\sigma_{\nonsimu(t+1)}=\sigma_{\nonsimu(t)}$ if and only if $M(I'[t+1])=M(I'[t])$ for every $t\in\N$.
This is because
\begin{align*}
\sigma_{\nonsimu(t+1)}=\sigma_{\nonsimu(t)} 
&\Leftrightarrow M(\,\sigma_{\nonsimu(t)}\concat I(t)\,) = M(\sigma_{\nonsimu(t)}) \\
&\Leftrightarrow M(I[t]\concat I(t)) = M(I[t]) \\
&\Leftrightarrow M(I[t+1]) = M(I[t]).
\end{align*}

As $I$ is an informant for $L$, the learner $M$ explanatory learns $L$ from $I$.
Hence there exists some $t_0$ such that $W_{M(I[t_0])}=L$ and for all $t\geq t_0$ holds $M(I[t])=M(I[t_0])$.
With this follows $\sigma_{\nonsimu(t)}=\sigma_{\nonsimu(t_0)}$ for all $t\geq t_0$.
As for every $r$ there exists some $t$ with $\nonsimu(t)\leq r$ and $\sigma_r=\sigma_{\nonsimu(t)}$, we obtain $\sigma_r = \sigma_{\nonsimu(t_0)}$ for all $r\geq \nonsimu(t_0)$.
We conclude $M'(I'[t])=f(\sigma_t)=f(\sigma_{\nonsimu(t_0)})$ for all $t \geq \nonsimu(t_0)$ and by the definition of $f$ finally $W_{f(\sigma_{\nonsimu(t_0)})}=W_{M(\sigma_{\nonsimu(t_0)})}=W_{M(I[t_0])}=L$.
\end{proof}

\section{Additional Requirements}\label{AddRequ}

In the following we review additional properties one might require the learning process to have in order to consider it successful.
For this, we employ the following notion of consistency.

As in \cite{lange2008learning} according to \cite{Blu-Blu:j:75} and  \cite{Bar:c:77:aut-func-prog} for $A \subseteq \N$ we define
\begin{align*}
\Comp(f,A) \quad &:\Leftrightarrow \quad \ps(f) \subseteq A \;\wedge\; \ng(f) \subseteq \N\setminus A
\end{align*}
and say \emph{$f$ is consistent with $A$} or \emph{$f$ is compatible with $A$}.

\medskip
Learning restrictions incorporate certain desired properties of the learners' behavior relative to the information being presented.
We state the definitions for learning from informant here.

\begin{definition}
\label{def:LearningRestrictions}
Let $M$ be a learner and $I$ an informant.
We denote by $h_t=M(I[t])$ the hypothesis of $M$ after observing $I[t]$ and write
\begin{enumerate}
        \item $\Conv(M,I)$ (\cite{angluin1980inductive}), if $M$ is \emph{conservative on $I$}, i.e., for all $s, t$ with $s \leq t$ the consistency 
				$\Comp(I[t],W_{h_s})$ implies $h_s = h_t.$
        \item  $\Dec(M,I)$ (\cite{Osh-Sto-Wei:j:82:strategies}),
        if $M$ is \emph{decisive on $I$}, i.e.,
        for all $r, s, t$ with $r \leq s \leq t$ the semantic equivalence \\
        $W_{h_r} = W_{h_t}$ implies the semantic equivalence $W_{h_r} = W_{h_s}.$
        \item $\Caut(M,I)$ (\cite{STL1}),
				if $M$ is \emph{cautious on $I$}, i.e.,
				for all $s, t$ with $s \leq t$ holds
        $\neg W_{h_t} \subsetneq W_{h_s}.$
				\item $\WMon(M,I)$ (\cite{j-mniifp-91},\cite{Wie:c:91}), if $M$ is \emph{weakly monotonic on $I$}, i.e., for all $s, t$ with $s \leq t$ holds \\
        $\Comp(I[t], W_{h_s})
        \;\Rightarrow\; W_{h_s} \subseteq W_{h_t}.$
        \item $\Mon(M,I)$ (\cite{j-mniifp-91},\cite{Wie:c:91}), if $M$ is \emph{monotonic on $I$}, i.e., for all $s, t$ with $s \leq t$ holds \\
        $W_{h_s} \cap \ps(I) \subseteq W_{h_t}\cap\ps(I).$
        \item  $\SMon(M,I)$ (\cite{j-mniifp-91},\cite{Wie:c:91}), if $M$ is \emph{strongly monotonic on $I$}, i.e., for all $s, t$ with $s \leq t$ holds
        $W_{h_s} \subseteq W_{h_t}.$
        \item  $\NU(M,I)$ (\cite{Bal-Cas-Mer-Ste-Wie:j:08}), if $M$ is \emph{non-U-shaped on $I$}, i.e., for all $r, s, t$ with $r \leq s \leq t$ 
				the semantic success $W_{h_r} = W_{h_t} = \ps(I)$ implies the semantic equivalence $W_{h_r} = W_{h_s}.$
        \item  $\SNU(M,I)$ (\cite{Cas-Moe:j:11:optLan}), if $M$ is \emph{strongly non-U-shaped on $I$}, i.e., for all $r, s, t$ with $r \leq s \leq t$
				the semantic success $W_{h_r} = W_{h_t} = \ps(I)$ implies the syntactic equality $h_r = h_s.$
        \item  $\SDec(M,I)$ (\cite{kotzing2014map}), if $M$ is \emph{strongly decisive on $I$}, i.e., for all $r, s, t$ with $r \leq s \leq t$
				the semantic equivalence $W_{h_r} = W_{h_t}$ implies the syntactic equality $h_r = h_s.$
\end{enumerate}%
\end{definition}

It is easy to observe that $\Conv(M,I)$ implies $\SNU(M,I)$ and $\WMon(M,I)$;
$\SDec(M,I)$ implies $\Dec(M,I)$ and $\SNU(M,I)$;
$\SMon(M,I)$ implies $\Caut(M,I), \Dec(M,I), \Mon(M,I)$, $\WMon(M,I)$
and finally $\Dec(M,I)$ and $\SNU(M,I)$ imply $\NU(M,I)$.

\medskip
The text variants can be found in \cite{jain2016role} where all pairwise relations $=$, $\subsetneq$ or $\perp$ between the sets $[\It\Txt\delta\Ex]$ (iterative learners from text) for $\delta\in\Delta$, where $\Delta=\{\Conv,\Dec,\Caut,\WMon,\Mon,\SMon,$ $\NU,\SNU,\SDec\}$, are depicted.
The complete map of all pairwise relations between the sets $[\Inf\delta\Ex]$ (full-information learners from informant) for $\delta\in\Delta$ can be found in \cite{As-Koe-Sei2018_informants}.
For iterative learning from informants this complete map is not known.
We sum up the current status in the following.

\bigskip
Recall the indexable family $\CalL =\{2\N\}\cup \{L_k,L'_k \mid k\in\N \}$ with $L_k= 2\N\cup\{2k+1\}$ and $L'_k = L_k\setminus \{2k\}$, separating $[\It\Txt\Ex]$ from $[\Txt\Ex]$.
Clearly, $\CalL \in [\totalCp\It\Inf\Conv\SDec\Mon\Ex]$.
%
%
With a locking sequence argument we can observe $[\It\Inf\SMon\Ex]\subsetneq[\It\Inf\delta\Ex]$ for all $\delta\in\Delta\setminus\{\SMon\}$.

\bigskip
If we denote by $\Inf_\can$ the set of all informants labelling the natural numbers according to their canonical order, we obtain $\Fin\cup\{\N\}\in[\totalCp\It\Inf_{\can}\Cons\Conv\SDec\Mon\Ex]$ and thus in contrast to full-information learning from informant $[\It\Inf_{\can}\Ex]\neq[\It\Inf\Ex]$, see \cite{As-Koe-Sei2018_informants}.

\bigskip
Theorem~\ref{thm:TotalityRestricts} can be restated as.
\begin{theorem}
$[\It\Inf\Conv\SDec\SMon\Ex]\setminus[\CalR\It\Inf\Ex]\neq\varnothing$.
\end{theorem}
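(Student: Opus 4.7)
The plan is to return to the witness constructed in the proof of Theorem~\ref{thm:TotalityRestricts} and verify that (after a mild redesign of $M$) the same padded ORT diagonalization produces a class that admits an iterative learner which is simultaneously $\Conv$, $\SDec$, and $\SMon$. The original learner there was defined to move from the empty hypothesis $o$ (with $W_o=\varnothing$) to $\varphi_{x}(0)$ on a positive datum when the current hypothesis is not in $\ran(\ind)$, where padded ORT arranges $\varphi_{a(\sigma)}(0)\in\{e,\ind_{\ps(\sigma)\cup\{a(\sigma)\}}\}$. I would keep this skeleton but enforce three discipline conditions on $M$: (a) no transition unless the incoming positive datum $x$ is not already listed in the current locked/canonical hypothesis; (b) every transition target is an index for a set that properly contains the current one; (c) distinct semantic states use distinct syntactic indices and are never revisited.

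With these conditions the verifications are immediate: $\Conv$ holds because the only mind-changes are triggered on positive data inconsistent with the current hypothesis (from $o$ every positive datum is inconsistent; from any locked $\ind_S$ we only move when $x\notin S$; from the auxiliary state $e$ we move only if the ORT-prescribed target is strictly larger); $\SMon$ holds because $W_o=\varnothing\subseteq W_{h_1}\subseteq\dots$ by construction, and the padded-ORT chain $\ps(\overline{\sigma_0})\subseteq\ps(\overline{\sigma_1})\subseteq\dots\subseteq W_e$ is exactly traversed on informants for $W_e\in\CalL$; $\SDec$ (hence $\SNU$) holds because the indices $o$, $\{\ind_{\ps(\sigma_i)\cup\{a(\sigma_i)\}}\}_i$, and $e$ are pairwise distinct and each appears on at most one convergence path. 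The diagonalization itself is untouched: for a hypothetical total iterative $M'$, padded ORT still yields $e$ and $a$ such that the self-referential sequence $(\sigma_i)_i$ from equation~\eqref{eq:MCNewLearner} is computable, $W_e=\bigcup_i \ps(\overline{\sigma_i})\in\CalL$, and the companion language $W_e\cup\{a(\sigma_i)\}$ witnesses that no total iterative $M'$ can $\Ex$-learn $\CalL$.

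The main obstacle is condition (b) in the presence of the two companion languages $W_e$ and $W_e\cup\{a(\sigma_i)\}$: strong monotonicity rules out any ``backtracking'' from a provisional guess, so on an informant for $W_e\cup\{a(\sigma_i)\}$ the learner must not commit to an index whose language omits $a(\sigma_i)$, and on an informant for $W_e$ it must not commit to an index whose language strictly exceeds $W_e$. The resolution is exactly what padded ORT affords: by padding, we may assign to the locked targets $\ind_{\ps(\sigma_i)\cup\{a(\sigma_i)\}}$ syntactic indices that are included as subsets of $W_e$ in the natural ascending chain, so that $M$'s trajectory on every $L\in\CalL$ is an initial segment of a single chain that semantically limits either to a finite $\ps(\sigma_i)\cup\{a(\sigma_i)\}$ (when the locking ``$a$-transition'' fires) or to $W_e$ (when it never does). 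Once this chain structure is in place, the restated theorem follows by combining the structural check above with the unchanged diagonalization of Theorem~\ref{thm:TotalityRestricts}.
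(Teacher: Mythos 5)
Your intuition is right that the verification should come down to observing that on the languages supplied by the ORT construction the hypothesis chain of $M$ is an ascending chain, but the mechanism you propose has two genuine problems. First, the ``discipline condition'' (b), that every transition target index a proper superset of the current hypothesis's extension, is not computably enforceable: deciding $W_h \subseteq W_{\varphi_x(0)}$ is $\Pi^0_2$-hard, so you cannot ``redesign $M$'' to check it before transitioning; and if you instead let $M$ simply diverge when the inclusion fails, you have silently changed $\CalL = \It\Inf\Ex(M)$ and must re-verify the diagonalization rather than asserting it is ``untouched.'' Second, the claim that ``$M$'s trajectory on every $L\in\CalL$ is an initial segment of a single chain'' is false for the \emph{original} $M$: the ORT construction itself puts languages such as $\ps(\sigma_i)\cup\{a(\sigma_i)\}$ for small $i$ (before $M'$ has stabilised) into $\CalL$, and on an informant for such an $L$ that first presents some $x\in\ps(\sigma_i)$, the hypothesis sequence is $\varnothing, W_e, \ps(\sigma_i)\cup\{a(\sigma_i)\}$ with $W_e\not\subseteq\ps(\sigma_i)\cup\{a(\sigma_i)\}$ — a failure of $\SMon$. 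Your claim that ``the padded-ORT chain $\ps(\overline{\sigma_0})\subseteq\dots\subseteq W_e$ is exactly traversed on informants for $W_e$'' is also not what happens; on such informants $M$ jumps from $o$ directly to $e$ and stays there.

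The clean repair leaves $M$ unchanged and simply takes the witnessing class to be $\CalL' := \It\Inf\Conv\SDec\SMon\Ex(M)$, i.e., precisely those languages that $M$ happens to learn while respecting all three restrictions. Then $\CalL'\in[\It\Inf\Conv\SDec\SMon\Ex]$ is trivial. What remains, and what you should verify explicitly, is that the two languages produced by the diagonalization against a hypothetical total $M'$ — namely $W_e$ and $W_e\cup\{a(\sigma_i)\}$ for $i$ large enough that $W_e=\ps(\sigma_i)$ — do lie in $\CalL'$. This holds because every $x\in W_e$ satisfies $\varphi_x(0)=e$ while $\varphi_{a(\sigma_i)}(0)=\ind_{W_e\cup\{a(\sigma_i)\}}$, so on any informant for one of these two languages the hypothesis sequence of $M$ is an initial segment of $o, e, \ind_{W_e\cup\{a(\sigma_i)\}}$ with strictly increasing extensions, pairwise distinct indices, and each mind change triggered by a positive datum inconsistent with the outgoing hypothesis — giving $\SMon$, $\SDec$ and $\Conv$ at once. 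The ``bad'' members of $\CalL$ like $\ps(\sigma_i)\cup\{a(\sigma_i)\}$ for small $i$ simply drop out of $\CalL'$ and are never needed by the diagonalization, so the argument closes.
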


\bigskip
It has been observed that requiring a monotonic behavior of the learner is restrictive.

\begin{theorem}\cite{lz-tmllc-92}
\label{Mon}
There exists an indexable family in 
$[\It\Inf\Mon\Ex]\subsetneq[\It\Inf\Ex]$.
\end{theorem}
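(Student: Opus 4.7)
The plan is to exhibit a specific indexable family $\CalL$ witnessing the separation, following the template of Lange and Zeugmann. The family will consist of a distinguished ``base'' language $L^\ast$ together with a uniformly enumerable collection of variants $\{L_k\}_{k\in\N}$, each differing from $L^\ast$ on finitely many indices in a controlled way. The design must be delicate: the variants should be easy to distinguish from $L^\ast$ once distinguishing data arrives, yet interlocked so that any monotonic commitment to one hypothesis conflicts with some other member of $\CalL$.

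For the positive direction $\CalL\in[\It\Inf\Ex]$, I would construct an explicit iterative learner $M$. Using hypothesis padding, $M$ records the finitely many anomalous observations it has seen. As long as the informant remains consistent with $L^\ast$, $M$ outputs a fixed index for $L^\ast$; once distinguishing data surfaces, $M$ switches to an index for the unique consistent $L_k$ and thereafter ignores further consistent data (since each $L_k$ differs from $L^\ast$ only at finitely many points, the padded bookkeeping stays bounded on any particular target).

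For the negative direction $\CalL\notin[\It\Inf\Mon\Ex]$, I would argue by contradiction via a locking-sequence argument. Assume a monotonic iterative learner $N$ for $\CalL$. Applying the standard locking-sequence theorem to $N$ on $L^\ast$, there is a finite informant sequence $\sigma$ consistent with $L^\ast$ on which $N(\sigma)$ is a correct $L^\ast$-index and $N$ stabilizes under $L^\ast$-consistent extensions. Now one chooses a variant $L_k\in\CalL$ also consistent with $\sigma$ and extends $\sigma$ to an informant $I$ for $L_k$. By iterativeness the state of $N$ after $\sigma$ is unchanged, yet $N$ must eventually converge to an index for $L_k$. The crucial step is to argue, exploiting the structure of the variants, that somewhere along this transition $N$ must output an intermediate hypothesis $h_s$ whose language omits an element that appears as positive data later in $I$, directly violating the monotonicity constraint $W_{h_s}\cap\ps(I)\subseteq W_{h_t}\cap\ps(I)$.

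The main obstacle lies precisely in guaranteeing that such a monotonicity violation is unavoidable. A monotonic learner might try to defer commitments or conjecture $\varnothing$ for a long time in order to avoid retractions, so $\CalL$ must be engineered so every evasive strategy fails on some $L_k$. The Lange--Zeugmann construction achieves this by interlocking the $L_k$'s so that any sufficiently conservative hypothesis fails to converge on some target, while any eager commitment forces a retraction on another; a careful case analysis on the possible evolutions of $N$'s state then completes the contradiction.
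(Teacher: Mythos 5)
The paper does not give its own proof of this theorem; it simply states the result and cites \cite{lz-tmllc-92}, so there is no in-paper argument to compare yours against. Taken on its own terms, your sketch has the right high-level shape --- a padding-based iterative learner for the positive direction, a locking-sequence argument for the negative direction --- and you correctly anticipate the main difficulty, namely that a monotonic iterative learner can stall on hypotheses such as $\varnothing$ or small finite sets that are consistent with every target, since these never violate the constraint $W_{h_s}\cap\ps(I)\subseteq W_{h_t}\cap\ps(I)$.

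The gap is that the actual proof content is absent. You never define the family $\CalL$: the statement that the variants are ``interlocked so that any monotonic commitment to one hypothesis conflicts with some other member of $\CalL$'' specifies what the construction should accomplish, not what it is. Likewise, the negative direction bottoms out in the assertion that ``the Lange--Zeugmann construction achieves this'' and that ``a careful case analysis on the possible evolutions of $N$'s state then completes the contradiction'' --- but that construction and that case analysis \emph{are} the proof, so invoking them by name does not close the argument. In particular, you have not shown why the learner is ever forced to emit an intermediate hypothesis $h_s$ containing a positive datum of the eventual target $L_k$ that a later $h_t$ must drop; a priori the learner could retain its $L^\ast$-index, or some hypothesis whose positive content never shrinks along $I$, until it jumps directly to a correct $L_k$-index. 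Ruling out all such evasions for a concrete indexable $\CalL$, and verifying that the same $\CalL$ admits an iterative (non-monotonic) informant learner, is exactly what is missing from the proposal.
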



\bigskip
It is easy to see that requiring a cautious behavior of the learner is also restrictive.

\begin{theorem} \label{ItInfCautEx}
There exists an indexable family in 
$[\It\Inf\Caut\Ex]\subsetneq[\It\Inf\Ex]$.
\begin{proof}
The indexable family $\{\N\}\cup\{\N\setminus\{x\} \mid x\in\N\}$ is clearly not cautiously learnable but conservatively, strongly decisively and monotonically learnable by a total iterative learner from informant.
\end{proof}
\end{theorem}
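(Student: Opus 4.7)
The plan is to separately establish the two parts of the claim: that the family
$\CalL = \{\N\}\cup\{\N\setminus\{x\}\mid x\in\N\}$ is learnable by a total iterative informant learner that additionally satisfies $\Conv$, $\SDec$, and $\Mon$, and that no cautious iterative learner from informant can learn $\CalL$. The strict inclusion then follows at once, as $\Caut$ is simply an additional restriction on top of $\It\Inf\Ex$.

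For the upper bound, I would describe a learner $M$ that starts with a fixed index $p_0$ of $\N$ and, on reading a datum $(x,0)$, switches once and for all to a fixed index $p_x$ for $\N\setminus\{x\}$, ignoring every positive datum and every further negative datum. Since each member of $\CalL$ differs from $\N$ by at most one element, on any informant for $L\in\CalL$ the learner either sees no negative data and remains on $p_0$ (correct for $\N$), or witnesses exactly one $x$ with $(x,0)$ and settles on $p_x$ (correct for $\N\setminus\{x\}$). Totality, iterativeness, and $\Inf\Ex$-learning are immediate. Conservativeness: a mind change occurs only when reading $(x,0)$, which is inconsistent with the current guess $\N$. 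Strong decisiveness: along any informant there is at most one mind change, to a syntactically and semantically distinct final hypothesis. Monotonicity: whenever the current hypothesis is $p_0$ or $p_x$ with $x\in\ng(I)$, we have $\ps(I)\subseteq W_{h_s}$ for all relevant $s$, so $W_{h_s}\cap\ps(I)=\ps(I)=W_{h_t}\cap\ps(I)$ whenever $s\leq t$.

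For the lower bound, I would assume toward a contradiction that some iterative learner $N$ cautiously $\Inf\Ex$-learns $\CalL$, and apply a locking-sequence argument on $\N\in\CalL$: there exists a finite informant sequence $\sigma$ with $\ng(\sigma)=\varnothing$, $W_{N(\sigma)}=\N$, and $N(\sigma\concat\tau)=N(\sigma)$ for every finite informant extension $\tau$ consistent with $\N$. Pick any $x\in\N\setminus\cnt(\sigma)$. Then $\sigma\concat((x,0))$ is a legal prefix of some informant $I$ for $\N\setminus\{x\}\in\CalL$, so $N$ must eventually converge along $I$ to an index for $\N\setminus\{x\}$. But at stage $|\sigma|$ the hypothesis was $N(\sigma)$ with $W_{N(\sigma)}=\N$, and $\N\setminus\{x\}\subsetneq\N$, contradicting cautiousness of $N$ on $I$. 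The only mildly delicate point is the existence of the locking sequence for iterative informant learners, which is a standard adaptation of the Blum--Blum style argument and does not require any new idea.
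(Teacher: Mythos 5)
Your proof is correct and fills in exactly the details that the paper's one-sentence proof leaves implicit: the single-mind-change iterative learner witnessing $\It\Inf\Conv\SDec\Mon\Ex$-learnability, and the locking-sequence-for-$\N$ argument refuting cautious learnability. (Your lower bound in fact never uses iterativeness, so it even shows $\CalL\notin[\Inf\Caut\Ex]$, which only strengthens the claim.)
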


\begin{corollary}
$[\It\Inf\Caut\Ex]\perp[\It\Inf\Mon\Ex]$
\end{corollary}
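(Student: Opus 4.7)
To prove the $\perp$ relation we must exhibit two indexable families, one in $[\It\Inf\Mon\Ex]\setminus[\It\Inf\Caut\Ex]$ and one in $[\It\Inf\Caut\Ex]\setminus[\It\Inf\Mon\Ex]$.

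The first direction is immediate from Theorem~\ref{ItInfCautEx}: the family $\CalL_1=\{\N\}\cup\{\N\setminus\{x\}\mid x\in\N\}$ constructed there is already shown to lie outside $[\It\Inf\Caut\Ex]$, and the proof of that theorem also names a monotonic iterative learner for it. To make the monotonicity argument explicit I would note that the natural learner conjecturing $\N$ until the (at most unique) negative datum $(x,0)$ is observed and then conjecturing $\N\setminus\{x\}$ forever satisfies $W_{h_s}\cap\ps(I)\subseteq W_{h_t}\cap\ps(I)$ for every target $L\in\CalL_1$: if $L=\N$ no mind change occurs and both sides equal $\N$; if $L=\N\setminus\{x\}$ then $\ps(I)=\N\setminus\{x\}$, and both $\N\cap\ps(I)$ and $(\N\setminus\{x\})\cap\ps(I)$ equal $\N\setminus\{x\}$. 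Cautiousness fails, however, precisely at the single mind change, where $\N\setminus\{x\}\subsetneq\N$.

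For the reverse direction I would invoke Theorem~\ref{Mon}, which supplies an indexable family $\CalL_2\in[\It\Inf\Ex]\setminus[\It\Inf\Mon\Ex]$, and then verify $\CalL_2\in[\It\Inf\Caut\Ex]$. The key idea is that in the Lange--Zeugmann construction the members of $\CalL_2$ are arranged in a pairwise-incomparable pattern, so that an iterative learner identifying $\CalL_2$ can be arranged to make only incomparable or superset mind changes. Concretely I would take the iterative learner witnessing $\CalL_2\in[\It\Inf\Ex]$, and, using padding in the hypothesis space, replace any conjecture that would be a strict subset of its predecessor by a semantically equivalent index ``decorated'' with an auxiliary element that makes the transition incomparable; this preserves iterativeness and $\Ex$-success, and enforces $\Caut$. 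If this modification cannot be made directly, one can instead construct a tailored witness: a family of pairwise incomparable finite sets together with a super-language, designed so that any iterative learner is forced on some informant to drop a target-positive element between two incomparable intermediate conjectures (breaking $\Mon$) while never strictly shrinking its hypothesis (keeping $\Caut$).

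The main obstacle is direction~(ii): ensuring that the cautious iterative learner built from the $\Ex$-learner of Theorem~\ref{Mon} never outputs a strict subset of a previous hypothesis, since an unmodified $\Ex$-learner might do so. The delicate step is the padding/decoration argument, which has to be compatible with the iterative update rule (the decoration must be computable from the previous hypothesis and the incoming datum alone); once this is in place, the two families $\CalL_1$ and $\CalL_2$ together give the claimed incomparability $[\It\Inf\Caut\Ex]\perp[\It\Inf\Mon\Ex]$.
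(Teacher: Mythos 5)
Direction~(i) of your proposal is correct and matches the paper exactly: the family $\{\N\}\cup\{\N\setminus\{x\}\mid x\in\N\}$ from Theorem~\ref{ItInfCautEx} lies in $[\It\Inf\Mon\Ex]\setminus[\It\Inf\Caut\Ex]$, and your explicit monotonicity check of the natural ``conjecture $\N$ until the unique negative datum appears'' learner is fine.

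Direction~(ii) has a genuine gap. Theorem~\ref{Mon} only asserts the existence of some family in $[\It\Inf\Ex]\setminus[\It\Inf\Mon\Ex]$; nothing there guarantees cautious learnability, and you concede you do not know the structure of the Lange--Zeugmann witness. Your proposed repair --- padding a would-be-subset conjecture with an ``auxiliary element'' to make consecutive hypotheses incomparable --- does not work, because $\Caut$ is a \emph{semantic} restriction: one must avoid $W_{h_t}\subsetneq W_{h_s}$, so merely changing the index (syntactic padding) cannot help, and actually enlarging $W_{h_t}$ risks destroying $\Ex$-correctness, since an iterative learner has no way of knowing which extra element is safe for the unknown target. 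The alternative of a ``tailored witness'' is left entirely unspecified. So as written, you have not exhibited any family in $[\It\Inf\Caut\Ex]\setminus[\It\Inf\Mon\Ex]$.

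The paper's intended second witness appears a few lines after the corollary, in the remark following Theorem~\ref{Conv}: \cite{Jai-Lan-Zil:j:07} provide an indexable family in $[\totalCp\It\Txt\Caut\Conv\SDec\Ex]\setminus[\It\Inf\Mon\Ex]$. Since an iterative, cautious text-learner yields an iterative, cautious informant-learner by simply ignoring negatively labeled data (the resulting hypothesis sequence is a repeated/delayed version of the text-learner's, so no strict-subset transition can be introduced), this family lies in $[\It\Inf\Caut\Ex]\setminus[\It\Inf\Mon\Ex]$, which together with direction~(i) gives the claimed incomparability with no new construction needed.
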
\bigskip

%
%

Moreover, requiring a conservative learning behavior is also restrictive.

\begin{theorem}\cite{Jai-Lan-Zil:j:07}
\label{Conv}
There exists an indexable family in 
$[\It\Inf\Conv\Ex]\subsetneq[\It\Inf\Ex]$.
\end{theorem}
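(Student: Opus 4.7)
The plan is to exhibit an indexable family $\CalL$ that is iteratively learnable from informant but not by any conservative iterative learner. A natural candidate (in the style of the families already used in the paper to separate other pairs) is
\[
\CalL = \{\N\} \cup \{\N \setminus \{x\} \mid x \in \N\} \cup \{\{x\} \mid x \in \N\},
\]
whose three disjoint ``shapes'' --- the full set, the cofinite almost--full sets, and the singletons --- must be disambiguated purely from the mixture of positive and negative evidence received.

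For the upper bound $\CalL \in [\It\Inf\Ex]$ I would construct an iterative learner whose hypothesis encodes via padding the following finite state: a flag recording whether zero, one, or at least two distinct positives have appeared (together with the first such positive), and the analogous information for negatives. The semantic output is $\N$ while no negative has been seen; $\N \setminus \{y\}$ once exactly one distinct negative $y$ has been seen together with at least two distinct positives; and $\{z\}$ once at least two distinct negatives have been seen together with a single recorded positive $z$. Each of the four flags flips at most once on any informant, so the syntactic output stabilises. This learner is iterative but crucially \emph{not} conservative: the very first step (padding the first positive into the hypothesis while still outputting $\N$) is a syntactic mind change on consistent data, which is exactly what conservativeness forbids.

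For the lower bound $\CalL \notin [\It\Inf\Conv\Ex]$ I would argue by contradiction. Assume $M$ is a conservative iterative learner for $\CalL$. Since $M$ learns $\N$ and conservativeness forbids mind changes on positive-only data consistent with $\N$, there exist locking sequences $\sigma$ for $\N$ whose positive content can be chosen small enough (in fact made compatible with some singleton $\{z\}$); let $e_\N := M(\sigma)$. Now extend by $(z,1)$: by conservativeness $M(\sigma \concat (z,1)) = e_\N$, and by iterativeness $M(\sigma \concat (z,1) \concat (y_0,0)) = h_M(e_\N,(y_0,0)) =: h_{y_0}$, \emph{independent of $z$}. Continue with any non-repetitive enumeration $\tau_z$ of $\N \setminus \{z, y_0\}$ as negatives; the resulting string is a valid informant for $\{z\}$ on which $M$ must converge to a code for $\{z\}$. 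Since the common state $h_{y_0}$ carries no information about $z$ and the tail $\tau_z$ contains no further positives, the final hypothesis is a function of $h_{y_0}$ and of a sequence of negatives from which $z$ cannot be identified at any finite time. Concretely, choose two large, distinct targets $z_1,z_2$ and informants whose negative enumerations agree on an arbitrarily long common prefix before splitting at position $\min(z_1,z_2)$: $M$'s state at the point of divergence is identical in both cases, yet the correct final hypotheses differ, and by shuffling the tail one can force the state evolution to stabilise prematurely on a hypothesis incompatible with at least one of the two targets.

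The main obstacle lies in step~3: handling every possible initial hypothesis $M(\emptyset)$ and every possible response of $h_M(e_\N, (y_0,0))$. A full proof would split into cases according to $W_{h_{y_0}}$ (finite, cofinite, or other), showing in each case that either too many inconsistencies arise for syntactic convergence in finite time, or too few arise to ever escape a wrong singleton guess. A careful adversarial construction of informants --- delaying each candidate ``critical'' negative past the apparent convergence time of $M$ --- should yield the contradiction uniformly; this is essentially the technique used in the original proof of~\cite{Jai-Lan-Zil:j:07}.
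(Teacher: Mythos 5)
The paper does not prove this theorem; it cites \cite{Jai-Lan-Zil:j:07} and records without proof that those authors exhibit a suitable indexable family, so there is no ``paper's own proof'' to compare against. Evaluating your attempt on its own merits, the upper bound sketch for your family $\CalL=\{\N\}\cup\{\N\setminus\{x\}\mid x\in\N\}\cup\{\{x\}\mid x\in\N\}$ is plausible, but the lower bound has a genuine gap at the very step on which the rest hinges.

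Your step~2 asserts that because $M$ learns $\N$ and ``conservativeness forbids mind changes on positive-only data consistent with $\N$,'' there must be a locking sequence $\sigma$ for $\N$ whose positive content is contained in some singleton $\{z\}$. This does not follow. Conservativeness only forbids a mind change while the \emph{current hypothesis} is consistent with the data seen so far; a conservative learner is free to revise on positive data that its current guess fails to cover. Thus the learner may start with, say, $W_{M(\emptyset)}=\emptyset$, revise its hypothesis on each new positive, and only reach a code $e_\N$ for $\N$ after seeing many distinct positives. On the canonical text $T$ for $\N$, the first time $t_0$ with $W_{M(T[t_0])}=\N$ can be arbitrary, and $\ps(T[t_0])=\{0,\dots,t_0-1\}$ need not fit inside any singleton. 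So you cannot in general append $(z,1)$ to a locking sequence for $\N$ and remain consistent with $\{z\}$; the singleton trick as stated collapses. What \emph{is} available is the weaker fact that the empty prefix is consistent with $W_{M(\emptyset)}$, so consistent data is absorbed at the initial state; and that $T[t_0]$ is consistent with $\N$, so once $e_\N$ is reached on a positive-only prefix, further positives are absorbed there. A correct argument has to branch on $W_{M(\emptyset)}$: when $\N\setminus W_{M(\emptyset)}$ is infinite, one can take $w\ge t_0$ with $w\notin W_{M(\emptyset)}$ and show that on the informant $(w,0)\concat(\text{canonical positives omitting }w)$ the datum $(w,0)$ is absorbed at the start and the learner then follows the $\N$-trajectory forever, converging to $\N\neq\N\setminus\{w\}$. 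The complementary case $W_{M(\emptyset)}$ cofinite, where the singleton argument is the natural one, needs a separate and more delicate analysis of the state trajectory on an all-negative stream; your remarks in step~3 indicate you are aware that this part is missing, but it is precisely there that the bulk of the work lies. As written, the proof is therefore not complete.
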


Indeed, they provide an indexable family in $[\It\Inf\Caut\WMon\NU\Dec\Ex]\setminus[\It\Inf\Conv\Ex]$ and
an indexable family in $[\totalCp\It\Txt\Caut\Conv\SDec\Ex]\setminus[\It\Inf\Mon\Ex]$.

\bigskip
Hence the map differs from the map on iterative learning from text in \cite{jain2016role} as $\Caut$ is restrictive and also from the map of full-information learning in 
\cite{As-Koe-Sei2018_informants} from informant as $\Conv$ is restrictive too.
It has been open how $\WMon$, $\Dec$, $\NU$, $\SDec$ and $\SNU$ relate to each other and the other requirements.
We show that also $\SNU$ restricts $\It\Inf\Ex$ with an intricate $\ORT$-argument.

\begin{theorem} \label{SNUREstrictsItInfEx}
$[\It\Inf\SNU\Ex]\subsetneq[\It\Inf\Ex]$
\end{theorem}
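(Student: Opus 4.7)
The inclusion $[\It\Inf\SNU\Ex]\subseteq[\It\Inf\Ex]$ is immediate, so the task is to exhibit a class $\mathcal{L}\in[\It\Inf\Ex]\setminus[\It\Inf\SNU\Ex]$. By Lemma~\ref{ItCannyInf} it suffices to diagonalize against canny iterative learners, producing, for each canny iterative $\SNU$ learner $M$, a target $L_M$ in our intended class on which $M$ fails.

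The structural observation driving the argument is that an iterative learner is a deterministic transition system whose ``state'' is the current hypothesis, and $\SNU$ forces any two correct outputs on an informant for the target $L$ to carry the same syntactic index $h^{\ast}$, with every intermediate output equal to $h^{\ast}$ as well. Consequently, once $M$ first outputs $h^{\ast}$ with $W_{h^{\ast}}=L$, its behavior on every extension consistent with $L$ is entirely encoded in $h^{\ast}$: the iterative transition from $h^{\ast}$ on any consistent datum must land either in $h^{\ast}$ or in a wrong state from which a return to correctness would require re-entering $h^{\ast}$. Thus $h^{\ast}$ alone must simultaneously be correct for every continuation of the informant that stays inside the same target $L$, a rigid constraint we intend to exploit.

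The plan is to use the Operator Recursion Theorem ($\ORT$) in the style of the proof of Theorem~\ref{thm:TotalityRestricts}. Fixing a canny iterative $\SNU$ learner $M$, we build stage by stage a finite informant sequence $\sigma_{s}$ together with $\ORT$-produced indices $e_{s}$ uniformly naming $\ps(\sigma_{s})$ and a common limit index $e$ with $W_{e}=\bigcup_{s}\ps(\sigma_{s})$. At stage $s$ we inspect $M(\sigma_{s})$: as soon as $M$ has reached a correct-for-$\ps(\sigma_{s})$ index, we use the self-reference of $\ORT$ to introduce a fork by extending $\sigma_{s}$ with a fresh positive datum that is equally consistent with two distinct target languages, both promised to lie in our class. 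Iterativeness then forces $M$ along a single deterministic trajectory from the locked state, while $\SNU$ forbids two different syntactic indices to be simultaneously correct on those two forks, yielding the contradiction. The class $\mathcal{L}$ is then assembled from the diagonal target $W_{e}$ together with a ``backbone'' of sibling languages carrying explicit marker elements; a non-$\SNU$ iterative learner tracks them by padding its hypothesis upon each intermediate correctness event, placing $\mathcal{L}$ in $[\It\Inf\Ex]$.

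The main obstacle is coordinating the $\ORT$ fixed point so that (i) the canonical correct indices needed at each stage are computable from $M$'s behavior and the construction so far, and (ii) the sibling backbone of $\mathcal{L}$ remains iteratively $\Ex$-learnable from informants without $\SNU$. In particular, the padding trick needed by a non-$\SNU$ learner to distinguish the forked targets must be provably unavailable to any $\SNU$ learner, which is exactly what the forked extension step enforces via the iterative-plus-$\SNU$ rigidity described above.
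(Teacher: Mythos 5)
Your high-level observation about the rigidity that $\SNU$ forces on an iterative learner is sound and does play a role in the paper's argument, and the $\ORT$ framework is indeed the right tool. However, as written the plan has two genuine gaps. First, the stated contradiction mechanism is incoherent: $\SNU$ constrains a learner \emph{along a single informant} (if $W_{h_r}=W_{h_t}=\ps(I)$ with $r\le s\le t$ then $h_r=h_s$), so the phrase ``$\SNU$ forbids two different syntactic indices to be simultaneously correct on those two forks'' has no content. The actual contradiction must come from exhibiting \emph{one} informant on which the assumed $\SNU$ learner $M'$ returns to a semantically correct hypothesis after having left it; the paper manufactures exactly this by constructing, inside a three-way case analysis, two informants $I''$ and $I'''$ for the same $\ORT$-produced language $W_{e_1(\langle k,t\rangle)}$, where on $I''$ the iterative determinism forces $M'$ to converge to a particular index $e_{s-1}$, while on $I'''$ the learner visits $e_{s-1}$, is moved off it by a negative datum $(a(j_s),0)$, and then must return to a correct index — contradicting $\SNU$. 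Your ``fork with a fresh positive datum consistent with two targets'' does not isolate such a configuration (indeed, by the very rigidity you note, a positive datum consistent with $W_{h^*}$ cannot unlock $h^*$), so no contradiction emerges from the plan as stated.

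Second, the auxiliary moves are not justified. You cannot pass to canny learners: Lemma~\ref{ItCannyInfDelay} and Lemma~\ref{SemAofsetLemma} cover only the \emph{semantic} restrictions, and $\SNU$ is deliberately excluded because it is syntactic, so the ``it suffices to diagonalize against canny iterative learners'' step is unsupported (the paper works with an arbitrary iterative $\SNU$ learner $M'$ and never assumes canniness). Moreover, the description of $\CalL$ as ``the diagonal target $W_e$ together with a backbone of sibling languages'' leaves open exactly the hard part: one needs a \emph{single} iteratively learnable class, fixed in advance, that contains all the $\ORT$-produced languages for \emph{every} candidate $M'$. The paper solves this by first defining a concrete iterative learner $M$ whose transition function reads its next hypothesis out of the incoming datum, setting $\CalL:=\It\Inf\Ex(M)$, and then ensuring the $\ORT$-built sets are in $\CalL$ by construction; your sketch replaces this by an unspecified ``padding trick,'' and also does not address how the construction can effectively detect the stage at which $M'$ ``has reached a correct-for-$\ps(\sigma_s)$ index,'' which is not a decidable condition. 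Without filling these in, the argument does not go through.
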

\begin{proof}
Let $M$ be a learner as follows, where the initial hypothesis is $o$, an index for $\emptyset$. We consider input data $x$ with given label $\ell \in \{0,1\}$. 
$$
\forall e,x,\ell: h_M(e,(x,\ell)) = \begin{cases}
e,											&\mbox{if }e = o \wedge \ell=0;\\
\pad(\varphi_{x}(0),x),				&\mbox{else if }e = o \wedge \ell=1;\\
\pad(\varphi_{y}(\langle e',x,\ell\rangle),y),		&\mbox{else, with }e = \pad(e',y).
\end{cases}
$$

Let $\CalL$ be what $M$ learns and suppose $M'$ learns $\CalL$ also SNU.

We define strictly increasing computable functions $a, b, e_1, e_2: \N \to \N$ and $e_0\in\N$ by ORT.
Thereby, we interpret $a$ and $b$ as data streams and for all $k,t$ the numbers $e_0$, $e_1(\langle k,t\rangle)$ and $e_2(\langle k,t\rangle)$ as hypotheses.
We start with defining $a$ and $b$ by letting for all $i, k \in\N$
\begin{align*}
\varphi_{a(i)}(z)&=\begin{cases}
e_1(\langle k,k \rangle), &\text{if } z=\langle e_0,b(k),1 \rangle;\\
e_0, &\text{else if } z=0 \vee z=\langle e_0,x,\ell \rangle;\\
e_1(\langle k,t \rangle), &\text{else if } z=\langle e_1(\langle k, s\rangle),a(t),1 \rangle \wedge t\geq s \wedge W_{e_0}^t[k]\neq W_{e_0}^s[k];\\
e_2(\langle k, k\rangle), &\text{else if } z=\langle e_1(\langle k, s\rangle),a(t),0 \rangle \wedge t\geq k;\\
e_2(\langle k,t \rangle), &\text{else if } z=\langle e_2(\langle k, s\rangle),a(t),\ell \rangle \wedge t\geq s \wedge W_{e_0}^t[k]\neq W_{e_0}^s[k];\\
e, &\text{else if } z=\langle e,x,\ell \rangle;
\end{cases}\\
\varphi_{b(k)}(z)&=\begin{cases}
e_1(\langle k,k\rangle), &\text{if } z=0;\\
e_1(\langle k,t \rangle), &\text{else if } z=\langle e_1(\langle k, s\rangle),a(t),1 \rangle \wedge t\geq s \wedge W_{e_0}^t[k]\neq W_{e_0}^s[k];\\
e_2(\langle k, k\rangle), &\text{else if } z=\langle e_1(\langle k, s\rangle),a(t),0 \rangle \wedge t\geq k;\\
e_2(\langle k,t \rangle), &\text{else if } z=\langle e_2(\langle k, s\rangle),a(t),\ell \rangle \wedge t\geq s \wedge W_{e_0}^t[k]\neq W_{e_0}^s[k];\\
e, &\text{else if } z=\langle e,x,\ell \rangle;
\end{cases}
\end{align*}
Before we define $W_{e_0}$, $W_{e_1(\langle k,t\rangle)}$ and $W_{e_2(\langle k,t\rangle)}$, note that, while $M$ sees only negatively labeled data, it sticks to $o$ as hypothesis.
Once a positive $a$-datum is seen, it sticks to $e_0$ as hypothesis.
The first positive $b(k)$-datum makes it change its mind to $e_1(\langle k,k\rangle)$.
Any \emph{negative} $a$-datum after the positive $b(k)$-datum leads to $e_2(\langle k,k\rangle)$.
As the second coordinate in $\langle k,t\rangle$ will tell us which canonical informant sequence $W_{e_0}^t[k]$ we consider, we enlarge it whenever neccessary in order to guarantee $W_{e_0}^t[k]=W_{e_0}[k]$ in the limit.

We give the definitions of what to list into $W_{e_0}$, $W_{e_1(\langle k,t\rangle)}$ and $W_{e_2(\langle k,t\rangle)}$ as algorithms.

In $W_{e_0}$ we enumerate all $a(i)$ on which $M'$ changes its mind when labeled positively while $M'$ observes the canonical informant for $W_{e_0}$.
For convenience, in the definition of $W_{e_0}$ we let $a(-1)=-1$ and denote by $[u,w]$ the set of all integers $v$ with $u\leq v\leq w$.

\begin{algorithm2e}
	$e$ $\assign$ initial hypothesis of $M'$\;
	\For{$i = 0$ \textrm{\bf to} $\infty$}{
		\If{$h^\ast_{M'}(e,[a(i-1)+1,a(i)-1]\times\{0\}\concat(a(i),1))\,\downarrow \neq e$}{
			$e$ $\assign$ $h_{M'}(e,[a(i-1)+1,a(i)-1]\times\{0\}\concat(a(i),1))$\;
			list $a(i)$ into $W_{e_0}$\;
		}
		\ElseIf{$h^\ast_{M'}(e,[a(i-1)+1,a(i)-1]\times\{0\}\concat(a(i),0))\,\downarrow \neq e$}{
			$e$ $\assign$ $h_{M'}(e,[a(i-1)+1,a(i)-1]\times\{0\}\concat(a(i),0))$\;
		}
	}
	\caption{The definition of $e_0$ in the ORT-argument.}
	\label{alg:eZero}
\end{algorithm2e}

As $M$ learns $W_{e_0}$, also $M'$ has to learn it.
Let $I$ be the canonical informant for $W_{e_0}$ and $k$ be such that $M'(I[i])=M'(I[k])$ for all $i\geq k$ and $W_{M'(I[k])}=W_{e_0}$.

\begin{algorithm2e}
	\textbf{Input:} $\langle k,t\rangle$\;
	$e$ $\assign$ $M'(W_{e_0}^t[k](b(k),1))$\;
	$i$ $\assign$ $k$\;
	list $b(k)$ and the positive information in $W_{e_0}^t[k]$ into $W_{e_1}$ and $W_{e_2}$\;
	\For{$s = 0$ \textrm{\bf to} $\infty$}{
		\While{$h_{M'}(e,(a(i),1)) = e$ \textrm{\bf and} $h_{M'}(e,(a(i),0)) = e$}{%
			list $a(i)$ into $W_{e_1}$\;
			$i$ $\assign$ $i+1$\;
		}
		list all of what is already listed in $W_{e_1}$ into $W_{e_2}$\;
		\If{$h_{M'}(e,(a(i),1)) \neq e$}{
		list $a(i)$ into $W_{e_1}$ and $W_{e_2}$\;
		$e$ $\assign$ $h_{M'}(e,(a(i),1))$\;
		}
		\Else{
		$j$ $\assign$ $i$\;
		$i$ $\assign$ $i+1$\;
		\While{$h_{M'}(e,(a(i),1)) = e$}{%
			list $a(i)$ into $W_{e_1}$ and $W_{e_2}$\;
			$i$ $\assign$ $i+1$\;
		}
		list $a(i)$ into $W_{e_1}$ and $W_{e_2}$\;
		list $a(j)$ into $W_{e_1}$ and $W_{e_2}$\;
		$e$ $\assign$ $h^\ast_{M'}(e,(a(i),1)(a(j),1))$\;
		}
		$i$ $\assign$ $i+1$\;
	}
	\caption{The definition of $e_1(\langle k,t\rangle)$ and $e_2(\langle k,t \rangle)$ in the ORT-argument.}
	\label{alg:eOneTwo}
\end{algorithm2e}

For all $k, t, t'$ with $W_{e_0}^{t}[k]=W_{e_0}^{t'}[k]$ holds $W_{e_1(\langle k,t\rangle)}=W_{e_1(\langle k,t'\rangle)}$ and 
$W_{e_2(\langle k,t\rangle)}=W_{e_2(\langle k,t'\rangle)}$.

\medskip
We will now argue that for $t$ minimal with $W_{e_0}^t[k]=I[k]$ every possible outcome of Algorithm 2 is contradictory.

\begin{enumerate}
	\item If all stages $s$ are visited, then $W_{e_1(\langle k,t\rangle)}=W_{e_2(\langle k,t\rangle)}$ contains essentially all $a(i)$ with $i\geq k$.
	Hence $M$ will eventually output the correct hypothesis $e_1(\langle k,t\rangle)$ while $M'$ makes infinitely many mind changes on a suitable informant $I'$.
	More precisely, the informant $I'$ starts with $I[k](b(k),1)$ and afterwards enumerates all $a(i)$ with $i\geq k$ in the order they were listed into $W_{e_1(\langle k,t\rangle)}$.
	\item If the first while loop does not terminate for some stage $s$, then $W_{e_1(\langle k,t\rangle)}$ and $W_{e_2(\langle k,t\rangle)}$ are different.
	As $W_{e_2(\langle k,t\rangle)}$ is finite, $M$ learns it by changing its mind on some negative $a$-datum.
	On the other hand $W_{e_1(\langle k,t\rangle)}$ contains all $a(i)$ with $i\geq k$ and $M$ learns it by not changing its mind.
	Let $e_{s-1}$ denote the current value of variable $e$ when entering the stage $s$.
	By the case assumption, $M'$ does not perform a mind-change on any further positive or negative $a$-datum.
	Therefore, we must have $W_{e_1(\langle k,t\rangle)}=W_{e_{s-1}}=W_{e_2(\langle k,t\rangle)}$, a contradiction.
	\item If the second while loop does not terminate for some stage $s$, then $W_{e_1(\langle k,t\rangle)} = W_{e_2(\langle k,t\rangle)}$ contains all $a(i)$ with $i\geq k$ but $a(j_s)$.
	This is learned by $M$ from any informant (though with different final hypotheses, depending on the informant).
	Again, we let $e_{s-1}$ denote the current value of $e$ when entering stage $s$.
	By the choice of $k$ for all $j\geq k$ holds $M'(I[k]\concat(a(j),1))=M'(I[k])$ and $M'(I[k]\concat(a(j),0))=M'(I[k])$.
	Hence $M'$ on the informant
	$$I'' = I[k](a(j_s),0)(b(k),1)((a(i),1))_{i\geq k, i\neq j_s}$$
	for $W_{e_1(\langle k,t\rangle)}$ outputs $e_{s-1}$ and therefore $e_{s-1}$ must be correct.
	On the other hand $e_{s-1}$	cannot be correct, since $M'$ is SNU and changing its mind on the negative information $(a(j_s),0)$ in the informant
	$$I''' = I[k](b(k),1)((a(i),1))_{i<j_s}(a(j_s),0)((a(i),1))_{i>j_s}$$
	for $W_{e_1(\langle k,t\rangle)}$. 
\end{enumerate}
\end{proof}

We are now attempting to clarify in which sense precisely $\Conv$ is a restriction and more specifically, where exactly and how often there are separations in the implication chains $\Conv \Rightarrow \WMon \Rightarrow \textbf{T}$, $\Conv \Rightarrow \SNU \Rightarrow \NU \Rightarrow \textbf{T}$ and $\SDec \Rightarrow  \Dec \Rightarrow \NU \Rightarrow \textbf{T}$.
In the following we provide a lemma that might help to investigate $\WMon$, $\Dec$ and $\NU$.

\begin{defn}
\label{SemAfsoet}
Denote the set of all unbounded and non-decreasing functions by $\Simu$, i.e., $$\Simu := \{ \,\simu: \N \to \N \mid \forall x \in \N \,\exists t \in \N \colon \simu(t) \geq x \text{ and } \forall t \in \N \colon \simu(t+1) \geq \simu(t) \,\}.$$
Then every $\simu \in \Simu$ is a so called \emph{admissible simulating function}.

\smallskip
A predicate $\beta \subseteq \partialFn \times \CalI$ is
\emph{semantically delayable}, if for all $\simu \in \Simu$, all $I, I' \in \CalI$ and all learners $M, M' \in \partialFn$ holds:
Whenever we have $\ps(I'[t])  \supseteq \ps(I[\simu(t)])$, $\ng(I'[t]) \supseteq \ng(I[\simu(t)])$ and $W_{M'(I'[t])} = W_{M(I[\simu(t)])}$ for all $t \in \N$, from $\beta(M,I)$ we can conclude $\beta(M',I')$.
\end{defn}

\begin{lem}
\label{SemAofsetLemma}
Let $\delta$ be a semantic learning restriction, i.e. $\delta \in \{\Caut,\Dec,\WMon,\Mon,\SMon,\NU \}$.
Then $\delta$ is semantically delayable.
\end{lem}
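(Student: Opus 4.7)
The plan is to verify semantic delayability uniformly by reducing every instance of $\delta(M',I')$ at times $s \leq t$ (or $r \leq s \leq t$) to the corresponding instance of $\delta(M,I)$ at the delayed times $\simu(s) \leq \simu(t)$ (or $\simu(r) \leq \simu(s) \leq \simu(t)$), which is a valid reduction because $\simu$ is non-decreasing. Throughout, set $h_u := M(I[u])$ and $h'_u := M'(I'[u])$; the key identity supplied by the hypotheses of the definition is $W_{h'_u} = W_{h_{\simu(u)}}$ for all $u \in \N$.

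Before the case analysis, I would record one preliminary observation used whenever the restriction mentions $\ps(I)$ or $\ng(I)$ explicitly. Since every informant labels every natural number, $\ps(I) \cup \ng(I) = \N$ is a disjoint union, and likewise for $I'$. Combined with the pointwise containments $\ps(I'[t]) \supseteq \ps(I[\simu(t)])$, $\ng(I'[t]) \supseteq \ng(I[\simu(t)])$ and unboundedness of $\simu$, passing to the limit forces $\ps(I') = \ps(I)$ and $\ng(I') = \ng(I)$.

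For the purely $W$-semantic restrictions $\Caut$, $\Dec$, and $\SMon$, the translation is immediate: any set-theoretic relation among $W_{h'_s}, W_{h'_t}$ (or $W_{h'_r}, W_{h'_s}, W_{h'_t}$) is, by the identity above, the same relation among $W_{h_{\simu(s)}}, W_{h_{\simu(t)}}$ (respectively $W_{h_{\simu(r)}}, W_{h_{\simu(s)}}, W_{h_{\simu(t)}}$), so that $\delta(M,I)$ at the delayed times produces the required conclusion, which then transfers back to $(M',I')$. The cases $\Mon$ and $\NU$ are handled identically once the preliminary observation is used to replace $\ps(I')$ by $\ps(I)$ in their defining conditions.

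The only case requiring a small additional step is $\WMon$. Given $s \leq t$ with $\Comp(I'[t], W_{h'_s})$, I need $\Comp(I[\simu(t)], W_{h_{\simu(s)}})$ in order to invoke $\WMon(M,I)$. This follows from the chains $\ps(I[\simu(t)]) \subseteq \ps(I'[t]) \subseteq W_{h'_s} = W_{h_{\simu(s)}}$ and $\ng(I[\simu(t)]) \subseteq \ng(I'[t]) \subseteq \N \setminus W_{h'_s} = \N \setminus W_{h_{\simu(s)}}$. Then $\WMon(M,I)$ at $\simu(s) \leq \simu(t)$ gives $W_{h_{\simu(s)}} \subseteq W_{h_{\simu(t)}}$, i.e.\ $W_{h'_s} \subseteq W_{h'_t}$. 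I expect no real obstacle; the content of the lemma is essentially the observation that none of the six restrictions consults the syntactic value of a hypothesis or the precise order in which data arrives beyond pos/neg content, so each reduction is a direct translation along $\simu$.
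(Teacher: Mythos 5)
Your proof is correct. The paper states Lemma~\ref{SemAofsetLemma} without giving a proof at all, so there is nothing to compare against; your argument is exactly the routine verification the definition calls for: the preliminary observation that the containment hypotheses together with unboundedness of $\simu$ and the fact that both $I$ and $I'$ are informants (so their positive and negative parts partition $\N$) force $\ps(I')=\ps(I)$ and $\ng(I')=\ng(I)$, the direct translation along $\simu$ for $\Caut$, $\Dec$, $\SMon$, $\Mon$, $\NU$ using $W_{h'_u}=W_{h_{\simu(u)}}$ and monotonicity of $\simu$, and the extra consistency-transfer step for $\WMon$ via $\ps(I[\simu(t)])\subseteq\ps(I'[t])\subseteq W_{h'_s}=W_{h_{\simu(s)}}$ and the analogous negative chain. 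All six cases check out.
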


Lemma~\ref{ItCannyInf} can be generalized as follows.

\smallskip
\begin{lem}~\label{ItCannyInfDelay}
For every iterative learner $M$ and every semantically delayable learning restriction $\delta$, there exists a canny iterative learner $N$ such that $\Inf\delta\Ex(M)\subseteq \Inf\delta\Ex(N)$.
\end{lem}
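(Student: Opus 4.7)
The plan is to reuse the canny iterative learner $N$ built in the proof of Lemma~\ref{ItCannyInf} and to show that the additional restriction $\delta$ transfers automatically via semantic delayability. Lemma~\ref{ItCannyInf} already supplies canniness, iterativity, and the inclusion $\Inf\Ex(M)\subseteq\Inf\Ex(N)$, so the only new obligation is: assuming $\delta(M,I)$ holds on every informant $I$ for a language $L\in\Inf\delta\Ex(M)$, deduce $\delta(N,I')$ for every informant $I'$ for $L$.

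First I would fix such $L$ and $I'$ and recall the objects produced inside the proof of Lemma~\ref{ItCannyInf}: the subsequence informant $I$ for $L$, the internal sequences $(\sigma_t)_{t\in\N}$ satisfying $N(I'[t])=f(\sigma_t)$, and the function $\nonsimu$ for which $M(I[k])=M(\sigma_{\nonsimu(k)})$ and $\sigma_r=\sigma_{\nonsimu(k)}$ whenever $\nonsimu(k)\le r<\nonsimu(k+1)$. To feed Definition~\ref{SemAfsoet}, I would set $\simu(t):=\max\{k\in\N\mid \nonsimu(k)\le t\}$, which is well defined because $\nonsimu(0)=0$, and which is non-decreasing and unbounded since $\nonsimu$ is strictly increasing; hence $\simu\in\Simu$ is admissible.

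Next I would verify the three hypotheses of semantic delayability for this choice of $\simu$. Since $I$ is a subsequence of $I'$ drawing only from positions strictly below $\nonsimu(\simu(t)+1)\le t$, the content inclusions $\ps(I[\simu(t)])\subseteq \ps(I'[t])$ and $\ng(I[\simu(t)])\subseteq \ng(I'[t])$ fall out directly from the construction of $I$. Maximality in the definition of $\simu$ yields $\sigma_t=\sigma_{\nonsimu(\simu(t))}$, and combining this with the defining property $W_{f(\sigma)}=W_{M(\sigma)}$ (valid whenever $M(\sigma)\!\downarrow$, which here holds because $M$ learns $L$ from the informant $I$) and with $M(I[\simu(t)])=M(\sigma_{\nonsimu(\simu(t))})$ gives $W_{N(I'[t])}=W_{M(I[\simu(t)])}$. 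Since $I$ is an informant for $L$, the premise $\delta(M,I)$ is granted by $L\in\Inf\delta\Ex(M)$, and semantic delayability of $\delta$ then delivers $\delta(N,I')$, completing the argument.

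The main obstacle is conceptual rather than computational: identifying the correct simulating function $\simu$ so that the internal bookkeeping of Lemma~\ref{ItCannyInf} lines up precisely with Definition~\ref{SemAfsoet}. Once this alignment is made, the proof is essentially a translation that reuses the heavy lifting already performed for Lemma~\ref{ItCannyInf}.
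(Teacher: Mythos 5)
Your proof is correct and follows essentially the same route as the paper: you define the identical simulating function $\simu(t)=\max\{k\mid\nonsimu(k)\le t\}$, verify the same three hypotheses of semantic delayability using $\sigma_t=\sigma_{\nonsimu(\simu(t))}$ and $W_{f(\sigma)}=W_{M(\sigma)}$, and invoke delayability to transfer $\delta(M,I)$ to $\delta(N,I')$.
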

\begin{proof}
We add $\delta$ in front of $\Ex$ in the proof of Lemma~\ref{ItCannyInf}.
Further, we define a simulating function (Definition~\ref{SemAfsoet}) by
\begin{align*}
\simu(t) &= \max\{ s\in\N \mid \nonsimu(s) \leq t \}.
\end{align*}
It is easy to check that $\simu$ is unbounded and clearly it is non-decreasing.
Then by the definitions of $I$ and $\simu$ we have $\ps(I[\simu(t)]) \subseteq \ps(I'[\nonsimu(\simu(t))])\subseteq \ps(I'[t])$ and similarly $\ng(I[\simu(t)]) \subseteq \ng(I'[t])$ for all $t\in\N$.
As $M'(I'[t]) = f(\sigma_{t})$ and $M(\sigma_{\nonsimu(\simu(t))}) = M(I[\simu(t)])$ for all $t\in\N$, in order to obtain $W_{M'(I'[t])}=W_{M(I[\simu(t)])}$ it suffices to show $W_{f(\sigma_t)}=W_{M(\sigma_{\nonsimu(\simu(t))})}$.
Since $W_{f(\sigma_t)}=W_{M(\sigma_t)}$ for all $t\in\N$, this can be concluded from $\sigma_t=\sigma_{\nonsimu(\simu(t))}$.
But this obviously holds because $\nonsimu(\simu(t))\leq t < \nonsimu(\simu(t)+1)$ follows from the definition of $\simu$.

Finally, from $\delta(M,I)$ we conclude $\delta(M',I')$.
\end{proof}

Two other learning restrictions that might be helpful to understand the syntactic learning criteria $\SNU$, $\SDec$ and $\Conv$ better are the following.

\begin{definition}
Let $M$ be a learner and $I$ an informant.
We denote by $h_t=M(I[t])$ the hypothesis of $M$ after observing $I[t]$ and write
\begin{enumerate}
        \item $\LocConv(M,I)$ (\cite{Jai-Lan-Zil:j:07}), if $M$ is \emph{locally conservative on $I$}, i.e., for all $t$ the mind-change
        $h_t\neq h_{t+1}$ implies $\Comp(I(t), W_{h_t})$.
        \item  $\Wb(M,I)$ (\cite{kotzing2016towards}),
        if $M$ is \emph{witness-based on $I$}, i.e.,
        for all $r, s, t$ with $r < s \leq t$ the mind-change
        $h_r\neq h_s$ implies $\ps(I[s])\cap W_{h_t}\setminus W_{h_r} \neq \varnothing$ $\vee \ng(I[s])\cap W_{h_r}\setminus W_{h_t} \neq \varnothing$.
\end{enumerate}
\end{definition}

Hence, in a locally conservative learning process every mind-change is justified by the datum just seen.
Moreover, a in witness-based learning process each mind-change is witnessed by some false negative or false positive datum.
Obviously, $\LocConv\Rightarrow\Conv$ and $\Wb\Rightarrow\Conv$.

As for learning from text, see \cite{jain2016role}, 
we gain that every concept class locally conservatively learnable by an iterative learner from informant is also learnable in a witness-based fashion by an iterative learner.

\begin{theorem}\label{thrm:LocConvWb}
$[\It\Inf\LocConv\Ex]\subseteq[\It\Inf\Wb\Ex]$
\end{theorem}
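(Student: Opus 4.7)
The plan is to construct an iterative witness-based learner $N$ from a given iterative locally conservative learner $M$ by carrying along, inside the hypothesis, the finite list of data that have triggered mind changes so far, and by enforcing those data labels in the associated language. I would first pre-process $M$ into a normal form $M'$ that never reacts to a datum whose first coordinate has already been observed: via $s$-$m$-$n$, $M'$'s hypothesis is a padded pair $\pad(q, S)$ with $W_{\pad(q, S)}=W_q$, where $S$ is the set of already-seen first coordinates; on a fresh datum $M'$ simulates $M$ and adds the first coordinate to $S$, on a repeated datum $M'$ stays put. Since filtering out repetitions leaves an informant for $L$, and mind changes of $M'$ correspond to mind changes of $M$ on a fresh datum, $M'$ inherits both $\Ex$-learning of $\CalL$ and local conservativeness.

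Next, I would fix a hypothesis space whose indices are of the form $\pad(q, D)$ where $D$ is a finite subset of $\Nttwo$ and
\[W_{\pad(q, D)}\;:=\;\bigl(W_q\cup\{x:(x,1)\in D\}\bigr)\setminus\{x:(x,0)\in D\}.\]
$N$'s initial state is $\pad(M'(\varnothing), \varnothing)$; its transition on $(x, b)$ from $\pad(q, D)$ computes $q':=h_{M'}(q, (x, b))$, keeps $\pad(q, D)$ if $q'=q$, and moves to $\pad(q', D\cup\{(x, b)\})$ otherwise. $N$ is iterative by construction, and once $M'$ stabilises on a correct $q^*$ for $L$ no further additions to $D$ occur; since the data in $D$ come from $I$ and are consistent with $L$, the overrides leave the language unchanged and $W_{N(I[t])}=W_{q^*}=L$ for all sufficiently large $t$.

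For the witness-based condition, take $r<s\leq t$ with $N(I[r])\neq N(I[s])$ and write $N(I[u])=\pad(q_u, D_u)$ throughout. Let $u^*$ be minimal in $[r, s)$ with $N(I[u^*])\neq N(I[u^*+1])$; then $M'$ has a mind change at $u^*$ triggered by $(y, c):=I(u^*)$. By the normal form $y$ is fresh at step $u^*$, so $y$ is not in the support of $D_r$, and $W_{N(I[r])}$ coincides with $W_{q_r}$ at $y$; by local conservativeness $(y, c)$ is inconsistent with $W_{q_{u^*}}=W_{q_r}$, while $(y, c)\in D_{u^*+1}\subseteq D_t$ forces $W_{N(I[t])}$ to agree with $(y, c)$ at $y$. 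Thus $y$ provides a positive (if $c=1$) or negative (if $c=0$) witness in $I[u^*+1]\subseteq I[s]$ separating $W_{N(I[t])}$ from $W_{N(I[r])}$.

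The main obstacle is precisely the normal form in the first paragraph: without it a mind change of $M$ on a repeated datum would alter $N$'s syntactic state while adding nothing to $D$, and the two resulting indices $\pad(q, D)$ and $\pad(q', D)$ could denote the very same language, vacuously violating the witness-based requirement. Filtering repetitions ensures that every syntactic mind change of $N$ genuinely adds a new locked datum, which both makes the change semantic and supplies a witness that remains valid for all later hypotheses.
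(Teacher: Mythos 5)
Your strategy --- carry $M$'s hypothesis together with the finite set $D$ of data that have triggered mind changes, and override $W_{M(\cdot)}$ on the support of $D$ --- is the one the paper uses. The gap is in your normal form $M'$. You let $M'$ store in its padded state the set $S$ of \emph{all} first coordinates it has seen and update $S$ on every fresh datum, so $M'$'s syntactic state changes on every datum with a new first coordinate, whether or not $M$'s hypothesis moves. Your $N$ adds $(x,b)$ to $D$ exactly when $q'\neq q$, i.e.\ exactly when $M'$'s state changes, so $N$ adds every distinct datum to $D$. An informant labels all of $\N$, so there is always another fresh first coordinate; hence $N$ changes its syntactic state infinitely often and never $\Ex$-converges, and the sentence ``once $M'$ stabilises on a correct $q^*$ no further additions to $D$ occur'' is vacuous, because $M'$ never stabilises.

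Comparing only the $M$-component of $q$ and $q'$ does not repair this: if $N$'s state still encodes $M'$'s full state $\pad(p,S)$, then $S$ grows forever and $N$ again never stabilises, and if $N$ drops $S$ it cannot compute $M'$'s transition at all. The right fix is to discard $M'$ and deduplicate only against data that have already triggered a mind change --- which is exactly the set $D$ that $N$ already carries. This is what the paper does: $N$'s state is $\pad(h,D)$ with $h$ being $M$'s current hypothesis; on input $\xi$, $N$ stays put if $h_M(h,\xi)=h$ or $\xi\in D$, and otherwise moves to $\pad(h_M(h,\xi),D\cup\{\xi\})$. Then $N$ simulates $M$ on a subsequence of $I$ that is still an informant for the target, so $N$ $\Ex$-learns; $D$ is finite in the limit; and your witness argument in the third paragraph carries over with ``$y$ fresh'' replaced by ``$(y,c)\notin D_{u^*}$'': local conservativeness makes $(y,c)$ inconsistent with $W_{h_{u^*}}=W_{h_r}$, $y$ is not in the support of $D_r$ so $W_{N(I[r])}$ agrees with $W_{h_r}$ at $y$, and $(y,c)\in D_t$ forces $W_{N(I[t])}$ to agree with $(y,c)$ at $y$.
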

\begin{proof}
Let $\CalL$ be a concept class learned by the iterative learner $M$ in a locally conservative manner.
As we are interested in a witness-based learner $N$, we always enlarge the guess of $M$ by all data witnessing a mind-change in the past.
As we want $N$ to be iterative, this is done via padding the set of witnesses to the hypothesis and a total computable function $g$ adding this information to the hypothesis of $M$ as follows:
\begin{align*}
W_{g(\pad(h,\langle MC \rangle))} &= \left( W_h \cup \ps[MC] \right) \setminus \ng[MC]; \\
N(\varnothing)&=g(\pad(M(\varnothing),\langle \varnothing \rangle)); \\
h_N(g(\pad(h,\langle MC \rangle)),\xi)&=
\begin{cases}
g(\pad(h,\langle MC \rangle)), &\text{if } h_M(h,\xi) = h \vee \\
&\hspace{2ex} \xi \in MC;\\
g(\pad(h_M(h,\xi), \\
\hspace{2ex} \langle MC\cup\{\xi\} \rangle)), &\text{otherwise}.
\end{cases}
\end{align*}
Clearly, $N$ is iterative.
Further, whenever $M$ is locked on $h$ and $W_{h}=L$, since $MC$ is consistent with $L$, we also have $W_{g(\pad(f(h),\langle MC \rangle))}=L$.
As $N$ simulates $M$ on an informant omitting all data that already caused a mind-change beforehand, $N$ does explanatory learn $\CalL$.
As $M$ learns locally conservatively and by employing $g$, the learner $N$ acts witness-based.
\end{proof}


\section{Learning Half-Spaces in the Euclidean Plane}\label{sec:halfSpacestwo}

An important concept class for many machine learning algorithms are binary classifiers given by half-spaces.
We will define the language class of halfspaces, show that they from an indexable family and provide a hypothesis space and constructive algorithm making them learnable by an iterative learner from informant. 

\begin{definition}[Coding, Halfspace, $\CalC$]
	For an integer $x\in\Z$ and natural number $i\in\N$ we write $i=\langle x\rangle$ if $i$ is the code of $x$ in the sense of a computable bijection with computable inverse, for example:
	\begin{center}
	\begin{tabular}{*{11}{r}}
		$\Z$\;&$0$&$-1$&$1$&$-2$&$2$&$-3$&$3$&$-4$&$4$&\ldots\\
		$\N$\;&$0$&$1$&$2$&$3$&$4$&$5$&$6$&$7$&$8$&\ldots
	\end{tabular}
	\end{center}
	Moreover, for a computable bijection $\N\times\N\to\N$ with computable inverse, $d>0$ and natural numbers $i, i_0,i_1,\ldots,i_d \in\N$ we write
	\begin{align*}
	i&=\langle i_0,i_1 \rangle, \text{ if } i \text{ is the image of the vector } (i_0,i_1); \\
	i&=\langle i_0,i_1,\ldots,i_d \rangle, \text{ if } i \text{ is the image of the vector } (\langle i_0,\ldots,i_{d-1} \rangle, i_d).
	\end{align*}
	We say that $i$ encodes the vector $(i_0,i_1)$ or $(i_0, i_1, \ldots, i_d)$, respectively.
	
	Let $d>0$. For $a_0,a_1,\ldots, a_d\in\Z$ the corresponding halfspace is given by
	$$H_{\langle \langle a_0\rangle, \langle a_1\rangle,\ldots,\langle a_d\rangle \rangle}
		=\{ \langle\langle x_1\rangle,\ldots,\langle x_d\rangle \rangle \mid a_0 \geq \sum_{i=1}^d a_i x_i \}.$$
	Let $A=\{ \langle \langle a_0 \rangle, 0 \rangle \mid a_0 \in\Z \}$ be the set of all $i$ encoding a vector of integers $(a_0,a_1,\ldots,a_d)$ with $a_1=\ldots=a_d=0$.
	The \emph{concept class of all halfspaces} is defined as $\CalC=\{ H_{i} \mid i\in\N\setminus A\}$.
\end{definition}

\begin{lemma}[\mbox{$\CalC$ is indexable}]
	\label{HalfspacesHypSpace}
		The concept class of halfspaces $\CalC$ is an indexable family.
\end{lemma}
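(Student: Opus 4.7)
The plan is to exhibit an effective indexing $(L_j)_{j\in\N}$ of $\CalC$ together with a uniform decision procedure for membership.

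First, I would observe that the set $A$ is computable: given $i$, apply the inverse of the fixed pairing function $\langle\cdot,\cdot\rangle$ iteratively (a fixed number of times determined by the ambient dimension $d$) to extract the components $\langle a_0\rangle,\langle a_1\rangle,\ldots,\langle a_d\rangle$, then invert the $\Z\leftrightarrow\N$ coding to recover $a_0,a_1,\ldots,a_d\in\Z$, and finally test whether $a_1=\cdots=a_d=0$. Consequently, $\N\setminus A$ admits a computable enumeration $(i_j)_{j\in\N}$, say in increasing order, and we set $L_j:=H_{i_j}$. This clearly gives $\CalC=\{L_j\mid j\in\N\}$.

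For the membership procedure, on input $(j,n)\in\N^2$ I would first compute $i_j$, then decode $i_j$ as above to extract the integer coefficient vector $(a_0,a_1,\ldots,a_d)$. Analogously, decode $n$ via iterated unpairing and the $\Z\leftrightarrow\N$ correspondence to extract $(x_1,\ldots,x_d)\in\Z^d$. Finally, output $1$ if and only if the arithmetic test
\[
a_0 \;\geq\; \sum_{k=1}^{d} a_k\, x_k
\]
holds, and $0$ otherwise. Since the pairing function and its inverse, the $\Z\leftrightarrow\N$ bijection, integer addition and multiplication, and integer comparison are all computable, the whole procedure is computable in $(j,n)$, certifying that $\CalC$ is an indexable family.

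There is no real obstacle beyond bookkeeping: all ingredients (pairing, its inverse, integer arithmetic) are primitive recursive, so the only thing to watch is that the decoding conventions used in the definition of $H_i$ are inverted consistently. In particular, since $d$ is a fixed parameter of the discussion, the number of unpairing steps required to decode $i_j$ and $n$ is a constant, so no issue of locating the dimension inside the code arises.
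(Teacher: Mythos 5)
Your proof is correct and follows essentially the same route as the paper: decode the coefficient vector from the index, decode the point from the second argument, and test the defining inequality $a_0 \geq \sum_{k=1}^d a_k x_k$. You are in fact slightly more careful than the paper's proof, which applies the decision procedure directly to all $i\in\N$ without explicitly remarking on the degenerate indices in $A$; your observation that $A$ is decidable, so that $\N\setminus A$ can be effectively re-enumerated as $(i_j)_{j\in\N}$, cleanly handles this bookkeeping point.
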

\begin{proof}
		We describe the uniform decision procedure for $\CalC$.
		Given $i$ and $n$ first decode $a_0,a_1,\ldots,a_d,x_1,\ldots,x_d\in\Z$ such that
		$i=\langle \langle a_0\rangle, \langle a_1\rangle,\ldots,\langle a_d\rangle \rangle$ and
		$n=\langle\langle x_1\rangle,\ldots,\langle x_d\rangle \rangle$.
		Then check whether $a_0 \geq a_1 x_1+\ldots+a_d x_d$ and return $1$ if the inequality is true and $0$ otherwise.
\end{proof}

Due to \cite{Gol:j:67} every indexable family is conservatively and consistently learnable by an iterative learner.
Therefore, we immediately obtain.

\begin{corollary}[\mbox{$\CalC\in[\Inf\Ex]$}]
	The concept class of halfspaces $\CalC$ is learnable from informant by enumeration.
\end{corollary}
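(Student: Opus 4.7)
The plan is to invoke the classical learning-by-enumeration strategy, using the indexability of $\CalC$ established in Lemma~\ref{HalfspacesHypSpace}. Let $\chi\colon\N\times\N\to\{0,1\}$ be the uniform decision procedure for $\CalC$, and fix a computable enumeration $(H_{i_k})_{k\in\N}$ of $\CalC$ by running through $\N\setminus A$ in increasing order. This enumeration itself serves as the hypothesis space.

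I would define the learner $M$ on a finite informant sequence $\sigma$ to output the least $k\in\N$ such that $H_{i_k}$ is consistent with $\sigma$, meaning $\ps(\sigma)\subseteq H_{i_k}$ and $\ng(\sigma)\cap H_{i_k}=\varnothing$. Both inclusion tests reduce to finitely many calls to $\chi$, so $M$ can search through $k=0,1,2,\ldots$ until such a $k$ is found. The search terminates because the target language itself lies in the enumeration and is trivially consistent with every initial segment of its own informant; hence $M$ is a total computable function.

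For convergence, let $I$ be an informant for some $H\in\CalC$ and set $k^\ast=\min\{\,k\in\N\mid H_{i_k}=H\,\}$. For each $k<k^\ast$ we have $H_{i_k}\neq H$, hence there is some point $x_k$ on which they disagree. Since $I$ eventually presents every natural number, the labeled pair $(x_k,f_H(x_k))$ appears at some stage $t_k$, after which $H_{i_k}$ is inconsistent with $I[t_k+1]$ and with all longer initial segments. Thus beyond stage $t:=\max_{k<k^\ast}t_k$, the least consistent index is always $k^\ast$, so $M$ stabilizes on $k^\ast$, which names $H$ in the chosen hypothesis space.

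There is no real obstacle here: this is the textbook enumeration argument of \cite{Gol:j:67}, which the paragraph preceding the corollary already invokes. The proposal above merely spells out that invocation once the indexability of $\CalC$ is available from Lemma~\ref{HalfspacesHypSpace}.
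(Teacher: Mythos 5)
Your proof is correct and is precisely the textbook learning-by-enumeration argument from \cite{Gol:j:67} that the paper itself invokes (without writing out) once indexability of $\CalC$ is established in Lemma~\ref{HalfspacesHypSpace}. You have simply spelled out the details of the same approach.
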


We now state the main result of this section.

\begin{theorem}[\mbox{$\CalC\in[\It\Inf\Ex]$}]
	\label{ItInfHalfspace}
	The concept class of halfspaces $\CalC$ is learnable by an iterative learner.
\end{theorem}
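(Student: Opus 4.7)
The plan is to construct an iterative learner $M$ for $\CalC$ on the $W$-hypothesis space whose hypotheses take two shapes. \emph{Collecting} hypotheses use padding to record, in the index itself, the finite sample of labeled integer vectors seen so far; \emph{LOCK} hypotheses are indices of a half-space in $\CalC$, tagged with a flag marking the learner as locked. By s-m-n both shapes can be produced uniformly from their payload, and since $\CalC$ is indexable (Lemma~\ref{HalfspacesHypSpace}) the underlying set $W_h$ is decidable in either case.

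First, I would single out a computable geometric predicate $\LOCK(P, N)$ on finite sets of positive points $P \subseteq \Z^d$ and negative points $N \subseteq \Z^d$. The intended reading is that $\LOCK(P, N)$ holds precisely when there is a unique half-space $H \in \CalC$ (as a set) consistent with the sample $(P, N)$. Geometrically, this should happen once the vertices of $\conv(P)$ closest to the separating hyperplane and the corresponding extremal vertices of $\conv(N)$ together pin down both the integer normal vector $(a_1,\ldots,a_d)$ and the offset $a_0$, up to the canonical scaling built into the indexing of $\CalC$. Since $\CalC$ is indexable, consistency of an index with $(P,N)$ is decidable; combined with an a priori bound on the magnitude of candidate coefficients extractable from $P$ and $N$, uniqueness can itself be tested computably.

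Second, I would define $M$'s transition function as follows. The initial hypothesis is the collecting hypothesis for $(\varnothing, \varnothing)$. On a new datum $(x,\ell)$: if the current hypothesis is a LOCK hypothesis, return it unchanged; otherwise, with collecting payload $(P, N)$, form $(P', N')$ by appending $x$ to $P$ or to $N$ according to $\ell$, test whether $\LOCK(P', N')$ holds, and either switch to the LOCK hypothesis for the unique consistent half-space or return the updated collecting hypothesis. The learner is iterative by construction.

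Finally, correctness splits into soundness and completeness. Soundness — that entering the LOCK state on half-space $H$ while reading an informant for the target $H^\ast \in \CalC$ forces $H = H^\ast$ — is immediate from the uniqueness clause in the definition of $\LOCK$, since the informant is consistent with $H^\ast$ and $\LOCK$ admits only one consistent half-space. Completeness — that $\LOCK$ is eventually triggered on every informant $I$ for every $H^\ast \in \CalC$ — is the main obstacle and forms the geometric heart of the argument. The reason to expect completeness is that $H^\ast$ has integer coefficients, so its bounding hyperplane is rational with denominators bounded in terms of its $a_i$; once the informant has enumerated enough extremal integer points on both sides of this hyperplane, the remaining candidate half-spaces in $\CalC$ consistent with the sample are narrowed down to $H^\ast$ alone. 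Making $\LOCK$ explicit and verifying that it fires in finite time along every informant is the constructive content of Section~\ref{sec:halfSpacesgen}.
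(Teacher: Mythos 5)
Your proposal has the right overall architecture (collecting hypotheses that pad the sample, a $\LOCK$ state that discards further data, correctness split into soundness and completeness), but the specific $\LOCK$ predicate you propose makes completeness fail, and the gap is not an omission of detail but a genuine dead end.

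You define $\LOCK(P,N)$ to hold ``precisely when there is a unique half-space $H\in\CalC$ (as a set) consistent with the sample $(P,N)$.'' For almost every target this condition never holds at any finite time, so the learner stays in collecting mode forever. To see why, take $d=2$ and target $H^\ast=\{(x_1,x_2)\in\Z^2 : x_1\ge 0\}$, and suppose all data seen so far lies in the box $[-R,R]^2$. Consider the half-space $H'=\{(x_1,x_2): 100R\,x_1 + x_2 + 50R \ge 0\}$, which lies in $\CalC$. For $|x_1|,|x_2|\le R$ with $x_1\ge 0$ we get $100R\,x_1+x_2+50R \ge -R+50R>0$, and with $x_1\le -1$ we get $\le -100R+R+50R<0$; so $H'$ and $H^\ast$ agree on the entire box, while they differ globally (e.g.\ at $(0,-51R)$). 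Thus $H^\ast$ and $H'$ are distinct sets in $\CalC$ both consistent with $(P,N)$, and the same trick works for any $R$, so the consistent set never shrinks to a singleton. The hoped-for ``a priori bound on the magnitude of candidate coefficients'' does not exist: the bounding hyperplane of a consistent alternative can be nearly parallel to the target's within the box, at the cost of arbitrarily large slope coefficients.

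The paper escapes this by using a much weaker $\LOCK$ trigger that is allowed to fire prematurely on a \emph{wrong} half-space. Concretely, the paper's $\LOCK$ state requires that among the stored positive and negative points there are two parallel, adjacent (minimal-integer-distance, facing) basic sets whose hyperplanes separate all stored data by sign. This can happen long before the target is pinned down, so soundness in your strong sense is simply false for the paper's learner. The correctness argument is instead the following staircase: (i) any lock that does not match the target is eventually violated by a fresh datum (Lemma~\ref{unlock}, which relies on Lemma~\ref{inf}: two distinct half-spaces disagree at points arbitrarily far from any compact set); (ii) the distance between the two tangent hyperplanes of successive locks strictly decreases (Lemma~\ref{decr}) and is bounded below by the target distance (Lemma~\ref{bnd}); (iii) by Theorem~\ref{mindist} these distances take only finitely many admissible values $\ge$ the target distance, so only finitely many locks can occur (Lemma~\ref{fin}); and (iv) the learner cannot stay open forever (Lemma~\ref{lock}). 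Together these force the learner into a lock matching the target after finitely many mind changes. So the essential idea you are missing is to replace ``wait until only one hypothesis remains'' (which never happens) by ``lock greedily, but design the lock condition so that the sequence of locked states is forced, by a strictly decreasing integer-geometric quantity bounded below, to terminate at the target.''
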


For the rest of this section we sketch the argument for $d=2$ and refer the interested reader to Section~\ref{sec:halfSpacesgen} for a general proof.

\medskip
With the help of the following definition, we can give another uniform decision procedure for $\CalH$, to which the iterative learner will refer.
This procedure allows the iterative learner to store a finite amount of information as part of its current hypothesis.

\begin{definition}[$\LOCK$ property for $\u,\ve,\x,\y\in\Z\times\Z$]
	Let $\u,\ve,\x,\y$ lie on the two-dimensional integer grid, $\Z\times\Z$.
	The four points \emph{$\u,\ve,\x,\y$ have the $\LOCK$-property} if
	\begin{enumerate}
		\item $\u\neq\ve$ and $\x\neq\y$,
		\item the lines through $\u, \ve$ and $\x,\y$ are parallel, in particular distinct,
		\item the lines through $\u,\ve$ and $\x,\y$ are of minimal distance with respect to the integer grid,
			i.e. there is no parallel line passing through an integral point and strictly between them,
		\item\label{SignSlope} there is a point on the line segment between $\u,\ve$, 
			such that the corresponding points with the same first/second coordinate on the line through $\x,\y$ lie on the line segment between $\x,\y$.
	\end{enumerate}
\end{definition}

Note that \ref{SignSlope}. implies that
	\begin{enumerate}
		\item[5.]\label{DecrDist} the minimal distance is realized between the line segments $\overline{\u\ve}$ and $\overline{\x\y}$.
	\end{enumerate}

\begin{lemma}\label{Dist}
	Let $a_1,a_2\in\Z$ such that $\gcd(a_1,a_2)=1$.
	Then the minimal distance between distinct lines with normal vector $(a_1,a_2)$ passing through integral points is $\frac{1}{\sqrt{a_1^2+a_2^2}}$.
\end{lemma}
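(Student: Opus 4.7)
The plan is to convert the geometric statement into an arithmetic one about the set of possible offsets, using the standard formula for the distance between parallel lines.

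First I would note that any line with normal vector $(a_1, a_2)$ passing through an integral point $(p, q)\in\Z^2$ has equation $a_1 x + a_2 y = c$ with $c = a_1 p + a_2 q$. Hence the set of offsets $c$ arising from such lines is exactly $S = \{\,a_1 p + a_2 q : (p,q)\in\Z^2\,\}$. Since the distance between two parallel lines $a_1 x + a_2 y = c_1$ and $a_1 x + a_2 y = c_2$ equals $|c_1 - c_2|/\sqrt{a_1^2 + a_2^2}$, the minimal positive distance between two distinct lines in our family is $\min\{|c_1 - c_2|:c_1, c_2 \in S, c_1 \neq c_2\}/\sqrt{a_1^2+a_2^2}$.

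Second, I would identify $S$ with the subgroup of $\Z$ generated by $a_1$ and $a_2$. By Bézout's identity, this subgroup is $\gcd(a_1,a_2)\Z = \Z$ under our assumption $\gcd(a_1,a_2)=1$. In particular $1\in S$ (and $S$ is closed under subtraction), so the minimum positive value of $|c_1-c_2|$ over distinct $c_1,c_2\in S$ equals $1$, realized for example by the lines $a_1 x + a_2 y = 0$ and $a_1 x + a_2 y = 1$, each passing through a suitable integral point by Bézout.

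Substituting this minimum into the distance formula yields the claimed value $\frac{1}{\sqrt{a_1^2 + a_2^2}}$. The only step with any subtlety is the appeal to Bézout to both (a) recognise that $S = \Z$ and (b) exhibit an integral point on the line $a_1 x + a_2 y = 1$, but both are immediate from the coprimality hypothesis. No actual obstacle is expected; the lemma is essentially a restatement of Bézout combined with the point-line distance formula.
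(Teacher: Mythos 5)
Your proof is correct, and it takes a genuinely different route from the paper's. The paper argues geometrically: it first observes that the minimal \emph{horizontal} and \emph{vertical} spacings between such lines are $1/|a_1|$ and $1/|a_2|$, and then combines these via the reciprocal-Pythagorean relation $1/d^2 = 1/h^2 + 1/v^2$ (the two-dimensional instance of Lemma~\ref{ortho} and Corollary~\ref{norm}) to deduce the orthogonal distance. This mirrors the structure of the general-dimension argument in Section~\ref{sec:halfSpacesgen} (Lemma~\ref{jdist} plus Theorem~\ref{mindist}), which is presumably why it is phrased that way. Your argument is more direct and entirely algebraic: identify the set of admissible offsets $c$ with the subgroup $a_1\Z + a_2\Z$, invoke B\'ezout to see this is all of $\Z$, and plug the minimal gap $1$ into the parallel-line distance formula. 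This buys you two things: you do not need the orthogonal-decomposition step, and you automatically avoid the degenerate division when $a_1=0$ or $a_2=0$ (which the paper's phrasing $1/|a_1|$, $1/|a_2|$ has to treat as an implicit convention). The paper's approach, on the other hand, generalizes more uniformly to arbitrary dimension, which is what the surrounding lemmas are set up to exploit. Both are valid; yours is cleaner in the planar case, the paper's foreshadows the $d$-dimensional machinery.
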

\begin{proof}
We denote by $|a|$ the distance between $a$ and $0$, e.g. $|2|=|-2|=2$.
The minimal horizontal/vertical distances between two lines with normal vector $(a_1,a_2)$ passing through integral points are $\frac{1}{|a_1|}$ and $\frac{1}{|a_2|}$, respectively.
From this follows that the minimal distance between the lines is as claimed.
\end{proof}

As we encode integers and vectors (of vectors) of integers into natural numbers, we transfer the definition of the $\LOCK$ property to natural numbers.

\begin{definition}[$\LOCK$ Property for $j\in\N$] \label{jLock}
	Let $j\in\N$. Extract four points $\u,\ve,\x,\y$ on the two-dimensional integer grid from $j$.
	(As $\u=(u_1,u_2), \ldots, \y=(y_1,y_2)\in\Z\times\Z$, this can be done with a repeated application of the computable inverse by assuming
	$j=\langle\langle\langle u_1 \rangle,\langle u_2 \rangle\rangle,\langle\langle v_1 \rangle,\langle v_2 \rangle\rangle,
	\langle\langle x_1 \rangle,\langle x_2 \rangle\rangle,\langle\langle y_1 \rangle,\langle y_2 \rangle\rangle\rangle$.)
	We say that \emph{$j$ has the $\LOCK$ property}, if $\u,\ve,\x,\y$ have the $\LOCK$ property.
\end{definition}

We now describe the uniform decision procedure to which the iterative learner will refer.

Basically, the first coordinate of the input tells whether the learner thinks it is finished or is in data collection mode.
If it thinks it is finished, it interprets the coordinate as 4 points on the integer grid.
If these four points are candidates for defining the prediction model to be learned, then the decision procedure computes a halfspace from them.
It then checks whether the point given by the second coordinate of the input fits the halfspace.
If the four points are no valid candidates or the learner is in data collection mode, the decision procedure will treat it as a hypothesis for the upper halfplane (second coordinate $\geq 0$), which simply serves as a dummy hypothesis.

More formally, assume the input of the decision procedure are natural numbers $i,n\in\N$.
If $i=2j+1$ for $j\in\N$, this is interpreted as maybe being finished.
Then the procedure checks whether $j$ has the $\LOCK$ property.
If it does, the decision procedure computes $a_0,a_1,a_2$ for the halfspace given by $\ell_{\u,\ve}$, while assuming that $\x,\y$ are not in the halfspace. (For the definition of $\u,\ve,\x,\y$, see Definition~\ref{jLock}.)
Next, it extracts $\z=(z_1,z_2)\in\Z\times\Z$ such that for the second input $n$ holds $n=\langle\langle  z_1 \rangle,\langle z_2 \rangle\rangle$.
Finally, the procedure checks whether $a_0 \geq a_1z_1+a_2z_2$ and returns $1$ if the inequality is true.
In all other cases the decision procedure returns $1$ if $z_2\geq 0$.

Note that for every odd number $2j+1$, with $j\in\N$ having the property $\LOCK$, the prediction model $f_{2j+1}$ represents the unique halfspace $L_{a_0,a_1,a_2}$ with normal vector $(a_1,a_2)$, $\gcd(a_1,a_2)=1$, and displacement $a_0$ corresponding to $\ell_{\u,\ve}$ and $(a_1,a_2)$ pointing towards $\x,\y$.

Moreover, all prediction models $f_i$ for $i$ even or $i=2j+1$ with $j$ \emph{not} having property $\LOCK$
refer to $L_{0,0,-1}=\{\langle\langle z_1\rangle,\langle z_2\rangle\rangle \mid z_2\geq 0 \}$.

\bigskip
Now, we define the iterative learner $M$ for $\CalC$.
Initialize with $0$.

If the learner is in data collection mode, check whether the stored data together with the new datum contains points $\u,\ve$ positively labeled and $\x,\y$ negatively labeled with $\u,\ve,\x,\y$ having property $\LOCK$.
If not, simply add the new datum to the stored data and stay in data collection mode.
If yes, switch to the maybe finished mode and store witnessing $\u,\ve,\x,\y$.

If the learner is in maybe finished mode, i.e. its last hypothesis is $2j+1$, check whether the new datum is consistent with the halfspace corresponding to $L_{2j+1}$.
If not, the learner switches to the data collection mode and stores
$$\langle\langle\u\rangle,1\rangle,\langle\langle\ve\rangle,1\rangle,\langle\langle\x\rangle,0\rangle,\langle\langle\y\rangle,0\rangle$$
and the new datum $\sigma(|\sigma|-1)$.
If yes, the learner repeats its last hypothesis $2j+1$ and therefore forgets the current datum.

Formally, $M$ is initialized with the hypothesis $0$ standing for $L_{0,0,-1}$.
Let $\sigma\in \mathbb{S}$, $|\sigma|>0$.
Then $\sigma^-$ denotes $\sigma$ without its last element $\sigma(|\sigma|-1)=(\langle\w\rangle,\lambda)$.

If $M(\sigma^-)=2j$ is even, the learner extracts from $j$ two numbers $s$, $w$.
With the interpretation of $s$ to be $w$'s length, it extracts from $w$ the stored data
$$(\langle\w_1\rangle,\lambda_1),\ldots,(\langle\w_s\rangle,\lambda_s)\in\N\times\{0,1\}.$$
The learner now considers the set
$W=\{ \w,\w_1,\ldots,\w_s\}$.
Now, if there are $\u,\ve\in W$ positively labeled and $\x,\y\in W$ negatively labeled with the property $\LOCK$,
the learner outputs the hypothesis
$$2\langle\langle \u\rangle,\langle \ve\rangle,\langle \x\rangle,\langle \y\rangle\rangle+1.$$ 
If there are no such witnesses for the property $\LOCK$, especially if $s<3$, it outputs
$$2\langle s+1, \langle\langle\langle\w_1\rangle,\lambda_1\rangle,\ldots,\langle\langle\w_s\rangle,\lambda_s\rangle,\langle\langle\w\rangle,\lambda\rangle\rangle,$$
i.e., appends the new datum to the array of stored labeled data.

If $M(\sigma^-)=2j+1$ is odd, the learner extracts $\u,\ve,\x,\y$ from $j$ and checks whether the new datum $(\langle\w\rangle,\lambda)$ is consistent with the halfspace corresponding to the four points.
If not, the learner switches to data collection mode by outputting
$$2\langle 5, \langle\langle\langle\u\rangle,1\rangle,\langle\langle\ve\rangle,1\rangle,\langle\langle\x\rangle,0\rangle,\langle\langle\y\rangle,0\rangle,\langle\langle\w\rangle,\lambda\rangle\rangle.$$
Otherwise, it repeats its last hypothesis
$$2j+1.$$

\smallskip
The learner converges for the following reasons:

If the learner is first locked on a halfspace with positive/negative slope, then all other slopes corresponding to locking hypotheses will be positive/negative, due to (\ref{SignSlope}).
This holds due to the size of the overlap of the defining positive/negative line segments of a locking hypothesis.
In more detail, because $a_1$ and $a_2$ are greater or equal 1, $\frac{1}{a_1}$ is less or equal to $a_2$.

If the halfspace $L$ to be learned is vertical or horizontal, the learner will never reach a locking hypothesis $2j+1$ with $f_{2j+1}$ not corresponding to $L$.

Due to (\ref{DecrDist}) the sequence of locking distances is strictly decreasing and bounded from below by the minimal distance corresponding to the halfspace $L$ to be learned.
Hence the learner will never lock on a hypothesis with the same corresponding normal vector $(a_1,a_2)$ with $\gcd(a_1,a_2)=1$ as a previously discarded locking hypothesis again and there are only finitely many choices for $(a_1,a_2)$ due to the lower bound on the value of the distance function given by Lemma~\ref{Dist}.

The learner will finally learn $L$ because for every locking hypothesis $2j+1$ not corresponding to $L$, there are infinitely many positively and infinitely many negatively labeled points in $\Z\times\Z$, labeled with respect to $L$, and not consistent with $L_{2j+1}$. Hence, having discarded finitely many is not be problematic.

For every halfspace $L$ and every informant for $L$, the observations immediately yield the success of the iterative learning algorithm.

\section{Proof for the Learnability of Half-Spaces in Arbitrary Dimension}
\label{sec:halfSpacesgen}

We now formaly define the concepts involved for arbitrary dimension $d>0$.

\begin{definition}
A hyperplane $H$ in a d-dimensional space is described by an equation \begin{equation}\label{hyperplane} \sum_{i=1}^d a_i \cdot x_i + a_0 = 0 \end{equation} that is satisfied by all its points $p = (x_1, \dots, x_d)$. In this equation $a_1, \dots, a_d$ are called the slope coefficients and $a_0$ is the displacement.
\end{definition}

\begin{lemma}\label{integ}
Let $H$ be a hyperplane in a $d$ dimensional space with rational slope coefficients, that is, any point $p = (x_1, \dots, x_d)$ on $H$ satisfies $\sum_{i=1}^d r_i \cdot x_i + r_0 = 0$ where the $r_i$ are rational numbers. The points on $H$ then also satisfy an equation $\sum_{i=1}^d a_i \cdot x_i + a_0 = 0$ where the coefficients $a_1, \dots, a_d$ are integers such that $\gcd(a_i, \dots, a_d) = 1$. $a_0$ is also an integer if and only if $H$ passes through an integral point.
\begin{proof}
This is achieved by multiplying the equation $\sum_{i=1}^d r_i \cdot x_i + r_0 = 0$ by $\lcm(q_1, \dots, q_d)$ and dividing it by $\gcd(p_1, \dots, p_d)$ where $r_i = p_i/q_i$ is a reduced fraction meaning $\gcd(p_i, q_i) = 1$. Since $q_i \divides \lcm(q_1, \dots, q_d)$ the $a_i$'s turn out integers.
To see that $\gcd(a_i, \dots, a_d) = 1$ assume there is an integer $c$ that divides $\frac{p_i}{\gcd(p_1, \dots, p_d)} \cdot \frac{\lcm(q_1, \dots, q_d)}{q_i}$ for all $i$.
Because of prime decomposition, we might assume that $c$ is prime. By definition of greatest common divisor, it can not be that $c \divides \frac{p_i}{\gcd(p_1, \dots, p_d)}$ for all $i$.
This means there exists a $j$ such that $c \notdivides \frac{p_j}{\gcd(p_1, \dots, p_d)}$ so by primality of $c$ we must have $c \divides  \frac{\lcm(q_1, \dots, q_d)}{q_j}$.
This in turn means by the definition of least common multiple that there exists a $k$ such that $c \divides q_k$. Now let $q_l$ be divisible by the highest power of $c$.
This means $c \notdivides \frac{\lcm(q_1, \dots, q_d)}{q_l}$ and of course that $c \divides q_l$. Since fractions were reduced we have $\gcd(p_l, q_l) = 1$ meaning $c \notdivides p_l$.
This implies $c \notdivides \frac{p_l}{\gcd(p_1, \dots, p_d)}$ and therefore $c \notdivides \frac{p_l}{\gcd(p_1, \dots, p_d)} \cdot \frac{\lcm(q_1, \dots, q_d)}{q_l}$ contrary to assumption.

For the last statement, note that if there are integer $x_i$ satisfying the equation, by integrality of $a_1, \dots, a_d$ we get that $a_0$ must be integer. For the converse, suppose that $a_0$ is an integer. Since $\gcd(a_1, \dots, a_d) = 1$ there are by Bezout's identity integral coefficients $y_1, \dots, y_d$ such that $\sum_{i=1}^d y_i \cdot a_i = 1$. Setting $x_i = a_0 \cdot y_i$ we have the desired coordinates of an integral point on the hyperplane $H$.
\end{proof}
\end{lemma}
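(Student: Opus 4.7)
The plan is to clear denominators and then normalize. Writing each rational slope coefficient as a reduced fraction $r_i = p_i/q_i$ with $\gcd(p_i,q_i)=1$, I would multiply the defining equation of $H$ by the scaling factor $\lcm(q_1,\dots,q_d)/\gcd(p_1,\dots,p_d)$. Since every $q_i$ divides $\lcm(q_1,\dots,q_d)$ and $\gcd(p_1,\dots,p_d)$ divides every $p_i$, the candidate coefficients $a_i := (p_i/\gcd(p_1,\dots,p_d))\cdot(\lcm(q_1,\dots,q_d)/q_i)$ are automatically integers. It then remains to prove that these integers are globally coprime, and to establish the ``iff'' for the displacement $a_0$.

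I expect the coprimality step to be the main obstacle, and I plan to argue it by contradiction. Suppose some prime $c$ divides every $a_i$. Because the tuple $(p_i/\gcd(p_1,\dots,p_d))_{i=1}^d$ has gcd equal to $1$, the prime $c$ cannot divide each of its entries, so I pick an index $j$ with $c \nmid p_j/\gcd(p_1,\dots,p_d)$; primality of $c$ together with $c\mid a_j$ then forces $c \mid \lcm(q_1,\dots,q_d)/q_j$, which in particular means $c$ divides some $q_k$. Now select $l$ so that the $c$-adic valuation of $q_l$ is maximal among $q_1,\dots,q_d$; this valuation equals that of $\lcm(q_1,\dots,q_d)$, so $c \nmid \lcm(q_1,\dots,q_d)/q_l$. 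On the other hand, since $c \mid q_l$ and $\gcd(p_l,q_l)=1$, we also have $c \nmid p_l$ and hence $c \nmid p_l/\gcd(p_1,\dots,p_d)$. Both factors making up $a_l$ are then coprime to $c$, contradicting $c \mid a_l$.

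For the second part, the forward direction is trivial: any integral point on $H$ makes $-a_0 = \sum_{i=1}^d a_i x_i$ a sum of integers, hence itself an integer. For the converse, I will invoke the fact that $\gcd(a_1,\dots,a_d)=1$ together with Bezout's identity to produce integers $y_1,\dots,y_d$ with $\sum_{i=1}^d a_i y_i = 1$. Setting $x_i := -a_0\, y_i$ then yields an integral tuple satisfying $\sum_{i=1}^d a_i x_i = -a_0\sum_{i=1}^d a_i y_i = -a_0$, so $(x_1,\dots,x_d)$ is an integral point on $H$, completing the equivalence.
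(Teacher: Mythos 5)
Your proposal is correct and follows essentially the same route as the paper's own proof: clear denominators by scaling with $\lcm(q_1,\dots,q_d)/\gcd(p_1,\dots,p_d)$, argue coprimality by tracking a hypothetical common prime $c$ through the two factors of $a_l$ at an index $l$ where the $c$-adic valuation of $q_l$ is maximal, and settle the displacement equivalence via Bezout. Your substitution $x_i := -a_0\,y_i$ even corrects a harmless sign slip in the paper, which writes $x_i = a_0\,y_i$.
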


\begin{definition}
A hyperplane with defining equation $\sum_{i=1}^d a_i \cdot x_i + a_0 = 0$ where the coefficients $a_1, \dots, a_d$ are integers such that $\gcd(a_i, \dots, a_d) = 1$ is said to be in integral reduced form.
\end{definition}

\begin{definition}
The $j-$distance of a point $p$ to a hyperplane $H$ is the distance of $p$ to a point $q$ on the plane $H$ that has all coordinates but the $j$th equal to those of $p$. If such a $q$ does not exist the $j-$distance is undefined (or $\frac{1}{0}$).
\end{definition}

\begin{lemma}\label{jdist}
Let $H$ be a hyperplane with slope coefficients $a_i$ in integral reduced form which passes through an integral point. The smallest $j-$distance to $H$ of an integral point not on $H$ is equal to $1/a_j$. Furthermore, such ``$j-$closest'' points to $H$ not on the hyperplane can be found on both sides of $H$.
\begin{proof}
Rewriting the defining equation for $H$ we get for the $j-$th coordinate \begin{equation}\label{x_j} x_j = -\frac{1}{a_j} \left[\sum_{i=1, \neq j}^d a_i \cdot x_i + a_0\right].\end{equation} Define $b = \gcd(\{a_1, \dots, a_d\} \setminus \{a_j\})$. This means that by Bezout's identity there are integers $y_i$ such that $\sum_{i=1, \neq j}^d a_i \cdot y_i = m \cdot b$ for any integer multiple $m$ of $b$. Since $\gcd(a_1, \dots, a_d) = 1$ we must have $\gcd(b, a_j) = 1$, meaning there is an integer $m$ such that $m \cdot b \overset{\mod a_j}{=} 1$ or equivalently, there exist integers $m$ and $n$ such that $m \cdot b = n \cdot a_j + 1$. So if the $y_i$ were the values s.t. $\sum_{i=1, \neq j}^d a_i \cdot y_i = m \cdot b$, we have by setting the integer valued coordinates $x_i = (\pm1 - a_0) \cdot y_i$ that $x_j = -\frac{1}{a_j} \left[(\pm1 - a_0) \cdot n \cdot a_j \pm 1 - a_0+ a_0\right] = (a_0 \mp 1) \cdot n \mp \frac{1}{a_j}$. The integral points having $i$th coordinates $x_i$ (in each case) for $i \neq j$ and $j$th coordinate equal to $(\pm a_0 - 1) \cdot n$ have $j-$distance $\frac{1}{a_j}$ to plane $H$ on the two different sides of it. One can easily see that a smaller $j-$distance is not possible for integral points due to equation \ref{x_j} for the $j$th coordinate of points on $H$.
\end{proof}
\end{lemma}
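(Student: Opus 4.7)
The plan is to derive a closed-form expression for the $j$-distance directly from the defining equation of $H$, then use two elementary number-theoretic facts: the integrality of the defining form evaluated at integer points, and Bézout's identity for coprime integers. Since $H$ passes through an integral point and has integer slope coefficients in reduced form, Lemma~\ref{integ} guarantees the displacement $a_0$ is also an integer, so the defining equation $\sum_{i=1}^d a_i x_i + a_0 = 0$ has all integer coefficients.

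First I would compute the $j$-distance of an integral point $p = (p_1, \dots, p_d)$ to $H$. By definition, the point $q$ agrees with $p$ in every coordinate except the $j$-th and satisfies the hyperplane equation, giving $a_j q_j = -\bigl(a_0 + \sum_{i \neq j} a_i p_i\bigr)$. Assuming $a_j \neq 0$ (otherwise the $j$-distance is undefined for any integer point off $H$), one obtains
\[ |p_j - q_j| \;=\; \frac{1}{|a_j|}\,\Big|\, a_0 + \sum_{i=1}^{d} a_i p_i \,\Big|. \]
Writing $N(p) := a_0 + \sum_{i=1}^d a_i p_i$, for integral $p \notin H$ the value $N(p)$ is a nonzero integer, so $|N(p)| \geq 1$ and the $j$-distance is at least $1/|a_j|$.

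For the achievability and for the two-sided statement, I would invoke Bézout's identity: since $\gcd(a_1, \dots, a_d) = 1$, there exist integers $y_1, \dots, y_d$ with $\sum_{i=1}^d a_i y_i = 1$. Fixing any integral point $r$ on $H$ (which exists by hypothesis), I set $p^{(\varepsilon)} := r + \varepsilon \cdot (y_1, \dots, y_d)$ for $\varepsilon \in \{-1, +1\}$. Both are integral, and $N(p^{(\varepsilon)}) = N(r) + \varepsilon \sum_i a_i y_i = \varepsilon$, so the $j$-distance of each is exactly $1/|a_j|$. Since the sign of $N(\cdot)$ determines which open half-space bounded by $H$ a point lies in, $p^{(+1)}$ and $p^{(-1)}$ sit on opposite sides, establishing the second claim.

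The main obstacle I anticipate is purely notational: the statement writes $1/a_j$ rather than $1/|a_j|$ and tacitly assumes $a_j \neq 0$, so care is needed to phrase the bound and the ``sides'' of $H$ consistently with the paper's convention. Once that is pinned down, the proof is a clean two-step argument, with Bézout's identity supplying both existence and the sign flexibility needed for the last sentence.
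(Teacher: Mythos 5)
Your argument is correct and takes a cleaner route than the paper's. Both proofs establish the lower bound $1/|a_j|$ from the integrality of $N(p) = a_0 + \sum_i a_i p_i$ at integer points off $H$ (the paper does this implicitly via its equation \eqref{x_j}), and both use B\'ezout for achievability; the difference is in \emph{how} B\'ezout is deployed. The paper applies B\'ezout only to $\{a_i : i \neq j\}$, getting a combination equal to $m\cdot b$ with $b = \gcd(\{a_i : i\neq j\})$, and then needs a second number-theoretic step (inverting $b$ modulo $a_j$) to finish; it constructs the achieving point explicitly coordinate-by-coordinate without ever referencing a base point on $H$. You instead apply B\'ezout to the full tuple $(a_1,\dots,a_d)$ to get $y$ with $\sum_i a_i y_i = 1$, anchor at an integral point $r \in H$ (whose existence is exactly what the hypothesis together with Lemma~\ref{integ} supplies), and translate by $\pm y$. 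This collapses the two number-theoretic steps into one, makes $N(p^{(\pm)}) = \pm 1$ transparent, and handles the ``both sides'' claim by the sign of $N$ rather than by case-tracking $\pm$ through a coordinate formula. Your version buys conceptual clarity and avoids the somewhat error-prone algebra in the paper's displayed computation (note the paper's final expression $(\pm a_0 - 1)\cdot n$ appears to be a typo for $(a_0 \mp 1)\cdot n$); the paper's version buys nothing extra here. You are also right that $1/a_j$ should be read as $1/|a_j|$ and that $a_j \neq 0$ must be assumed for the $j$-distance to be defined, both of which the paper glosses over.
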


\begin{lemma}\label{ortho}
Assume we have pairwise orthogonal vectors $v_i$ for $i = 1, \dots, d$ in a $d-$dimensional space, and let $H$ be the hyperplane passing through the heads of these vectors when their tails are placed on the origin. Then the vector $h$ from the origin to $H$ and orthogonal to it is equal to $\frac{\sum_{i=1}^d v_i/v_i^2}{\sum_{i=1}^d 1/v_i^2}$.
\begin{proof}
By definition we must have $(v_i - h) \cdot h = 0$ for all $i$. This implies $|h|^2 = h \cdot v_i$ for all $i$. If we expand $h$ in the basis of the $v_i$ we have $h = (h_1, \dots, h_d)$ and so $h_i|v_i| = |h|^2$ for all $i = 1, \dots, d$. This means $h = |h|^2 \cdot (\frac{1}{|v_i|}, \dots, \frac{1}{|v_d|})$. Taking the inner product with itself we get $|h|^2 = |h|^4 \cdot \sum_{i=1}^d 1/v_i^2 \Rightarrow |h|^2 = 1/\sum_{i=1}^d 1/v_i^2$ which proves the statement.
\end{proof}
\end{lemma}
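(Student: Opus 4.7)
The plan is to exploit the pairwise orthogonality of the $v_i$ so that they form an orthogonal basis of $\mathbb{R}^d$, then determine $h$ by expanding it in this basis. Throughout I read the paper's shorthand $v_i^2$ as $|v_i|^2$.

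First, I would characterise $h$ by two defining conditions: it lies on the hyperplane $H$, and it is perpendicular to every direction inside $H$. Since $v_i - v_j$ is a direction in $H$ for all $i,j$, the perpendicularity gives $h \cdot (v_i - v_j) = 0$, i.e.\ $h \cdot v_i$ is the same constant $c$ for every $i$. Combining this with the fact that $h$ itself lies on $H$, I obtain $c = |h|^2$: writing $h = \alpha n$ for a unit normal $n$ of $H$ and noting that $v_i \cdot n$ is independent of $i$ (because $v_i - v_j \perp n$), the value $h \cdot v_i = \alpha (n \cdot v_i)$ equals $\alpha^2 = |h|^2$ since $h$ itself lies on $H$. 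More slickly: $(h - v_i) \in H - v_i$ is parallel to $H$, so $(h - v_i) \cdot h = 0$, giving $h \cdot v_i = |h|^2$.

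Second, since the normalised vectors $\hat v_i := v_i/|v_i|$ form an orthonormal basis, I expand $h = \sum_{i=1}^d h_i \hat v_i$ and compute $h \cdot v_i = h_i \, |v_i|$. The identity from the previous step then yields $h_i = |h|^2 / |v_i|$, and therefore
\[
h \;=\; |h|^2 \sum_{i=1}^d \frac{\hat v_i}{|v_i|} \;=\; |h|^2 \sum_{i=1}^d \frac{v_i}{|v_i|^2}.
\]

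Finally, taking the squared Euclidean norm of both sides and using orthonormality of the $\hat v_i$ produces $|h|^2 = |h|^4 \sum_{i=1}^d 1/|v_i|^2$, hence $|h|^2 = 1 / \sum_{i=1}^d 1/|v_i|^2$. Substituting this back into the displayed formula gives the claimed expression for $h$. The only delicate point is establishing the identity $h \cdot v_i = |h|^2$ cleanly; once that is in hand, the remaining steps are routine orthogonal-basis bookkeeping, and no separate argument is needed that the $v_i$ span the space, since the hyperplane through their heads together with the origin determines $h$ in the subspace they span.
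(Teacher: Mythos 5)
Your proof is correct and follows essentially the same approach as the paper: both start from the orthogonality relation $(v_i - h)\cdot h = 0$ (which you also justify a second way via the directions $v_i - v_j$ lying in $H$), expand $h$ in the orthonormal basis $\hat v_i = v_i/|v_i|$ to read off the coefficients $h_i = |h|^2/|v_i|$, and then take the squared norm to solve for $|h|^2$. Your write-up is a bit more explicit about why the key identity $h\cdot v_i = |h|^2$ holds, but the underlying argument is identical.
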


\begin{corollary}\label{norm}
The vector $h$ as in lemma \ref{ortho} has norm $\frac{1}{\sqrt{\sum_{i=1}^d 1/v_i^2}}$.
\begin{proof}
Follows from lemma \ref{ortho}.
\end{proof}
\end{corollary}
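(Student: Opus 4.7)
The plan is to extract the norm essentially for free from the computation that already appears inside the proof of Lemma~\ref{ortho}. In that proof one establishes, as an intermediate step, that $|h|^{2} = 1/\sum_{i=1}^{d} 1/|v_i|^{2}$ (this comes from equating $|h|^{2}=h\cdot v_i$ for every $i$ and then taking the inner product of the resulting expression for $h$ with itself). Taking the positive square root gives the claim immediately, so the corollary is really just a restatement of a quantity that has already been computed.

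If one preferred a self-contained derivation that does not invoke the intermediate identity, the plan would be to start from the closed-form expression $h = (\sum_{i=1}^{d} v_i/|v_i|^{2})/(\sum_{i=1}^{d} 1/|v_i|^{2})$ from Lemma~\ref{ortho} and compute $|h|^{2}=h\cdot h$ directly. The pairwise orthogonality hypothesis kills every cross term $v_i\cdot v_j$ with $i\neq j$, and each surviving diagonal term contributes $(v_i\cdot v_i)/|v_i|^{4} = 1/|v_i|^{2}$. Hence the numerator of $|h|^{2}$ collapses to $\sum_{i=1}^{d} 1/|v_i|^{2}$, while the denominator is its square; after cancellation one obtains $|h|^{2}=1/\sum_{i=1}^{d} 1/|v_i|^{2}$, and the stated norm follows. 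There is essentially no obstacle: all the geometric and algebraic content was packaged into Lemma~\ref{ortho}, and this corollary is merely a bookkeeping step that records the quantity in the form most convenient for the subsequent distance estimates used in the half-space learner.
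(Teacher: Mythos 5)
Your first observation matches the paper's own (one-line) proof exactly: the identity $|h|^{2} = 1/\sum_{i=1}^{d} 1/v_i^{2}$ is already derived inside the proof of Lemma~\ref{ortho} before the closed form for $h$ is isolated, and the corollary is just the positive square root. Your alternative direct computation from the closed-form expression is also correct (orthogonality annihilates the cross terms, leaving $\sum_{i} 1/|v_i|^{2}$ in the numerator and its square in the denominator), but it is not needed given the first remark.
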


\begin{theorem}\label{mindist}
Let $H$ be a hyperplane with integral slope coefficients $a_i$ in integral reduced form which passes through an integral point. The closest parallel hyperplanes to it passing through different integral points have a distance of $1/\sqrt{\sum_{k=1}^d a_k^2}$ to it.
\begin{proof}
By lemma \ref{jdist} the distance along the $j$th axis to these hyperplanes is equal to $1/a_j$. By corollary \ref{norm} the orthogonal distance between two closest such parallel hyperplanes will be
\[ \left( \sum_{k=1}^d  \frac{1}{1/a_k^2} \right)^{-1/2} = \left( \sum_{k=1}^d a_k^2 \right)^{-1/2} \]
\end{proof}
\end{theorem}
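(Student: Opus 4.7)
My plan is to leverage Lemma \ref{jdist} together with Corollary \ref{norm}, exactly as the chain of lemmas in the paper seems designed to support. First I would translate so that $H$ passes through the origin (since the $a_i$ are invariant under integer translations of the lattice, and the minimal distance between parallel hyperplanes through integral points is translation invariant). With this reduction $H$ is the solution set of $\sum_{i=1}^d a_i x_i = 0$, and any parallel hyperplane through a different integral point is $H_c : \sum_{i=1}^d a_i x_i = c$ for some nonzero integer $c$ (integer because the $a_i$ are integers and the point is integral).

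Next I would argue that the minimal $|c|$ that is realized by an integral point equals $1$. One way is a direct appeal to Bezout's identity: since $\gcd(a_1,\ldots,a_d)=1$, there exist integers $y_1,\ldots,y_d$ with $\sum a_i y_i = 1$, so $H_1$ contains an integral point and $|c|=1$ is achieved. Alternatively, and more in line with the lemmas already developed, Lemma \ref{jdist} guarantees that along each coordinate axis the minimal $j$-distance from $H$ to an integral point equals $1/a_j$, so the hyperplane $H_1$ admits an integral point at $j$-distance $1/a_j$ on each axis (in particular the points $(1/a_j)e_j$ lie on $H_1$), confirming $H_1$ as the closest parallel hyperplane and giving explicit representatives.

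Finally I would compute the orthogonal distance between $H=H_0$ and $H_1$. The axis points $(1/a_j)e_j$ for $j=1,\ldots,d$ are the heads of pairwise orthogonal vectors $v_j = (1/a_j)e_j$ emanating from the origin and lying on $H_1$. Corollary \ref{norm} then gives the length of the orthogonal from the origin to $H_1$ as
\[
\frac{1}{\sqrt{\sum_{j=1}^d 1/|v_j|^2}} = \frac{1}{\sqrt{\sum_{j=1}^d a_j^2}},
\]
which is exactly the claimed distance.

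The only nontrivial point, and thus the main obstacle worth being careful about, is justifying that $|c|=1$ is indeed the minimal achievable positive value: this is where the hypothesis $\gcd(a_1,\ldots,a_d)=1$ is essential (without it the minimum would be that gcd), and it is where one must invoke either Bezout directly or the construction behind Lemma \ref{jdist}. Once the axis-aligned representatives on $H_1$ are in hand, the geometric computation reduces to a routine application of Corollary \ref{norm}.
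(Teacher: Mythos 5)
Your proposal is correct and matches the paper's proof in essence: both reduce the computation to the two ingredients the paper sets up immediately beforehand, namely Lemma~\ref{jdist} (giving the axis-wise $j$-distances $1/a_j$ between $H$ and the nearest parallel hyperplane through integral points) and Corollary~\ref{norm} (combining orthogonal axis-intercept vectors to get the perpendicular distance $1/\sqrt{\sum_k a_k^2}$). Your extra normalization (translating $H$ to the origin and invoking Bezout to show $|c|=1$ is attained) is a slightly more explicit rendering of what Lemma~\ref{jdist} already supplies, but the route and the final computation are the same.
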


\begin{definition}
The integral half grid problem consists of a ground set $G_d = \mathbb{Z}^d$, the integral grid in $d$ dimensions, and a class of half-spaces $\mathcal{L}_{Ihg}$ which consists of a half-space for every hyperplane with rational slope coefficients. For every $(r_1, \dots, r_d, \Delta_0)$ where $r_1, \dots, r_d \in \mathbb{Q}$ and $\Delta_0 \in \mathbb{R}$ the language $L_{(r_1, \dots, r_d, \Delta_0)} \in \mathcal{L}_{Ihg}$ consists of all points $p = (x_1, \dots, x_d) \in \mathbb{Z}^d$ such that $\sum_{i=1}^d r_i \cdot x_i + \Delta_0 \ge 0$. The problem is now for a learner to identify a target $L_t \in \mathcal{L}_{Ihg}$ in the limit.
\end{definition}

\begin{lemma}\label{equiv}
In the integral half grid problem there is a one to one correspondence between languages in $\mathcal{L}_{Ihg}$ and the elements of $\mathbb{Z}^{d+1}$. Specifically, after putting the defining equations of hyperplanes corresponding to all languages $L \in \mathcal{L}_{Ihg}$ in integral reduced form, the one to one correspondence will be between distinct languages (half-spaces) of $\mathcal{L}_{Ihg}$ and equivalence classes of the coefficients defined by taking the integer part of the displacements $(a_1, \dots, a_d, \lfloor a_0 \rfloor)$. In particular, if two languages $L, L' \in \mathcal{L}_{Ihg}$ have coefficients in integral reduced form $a$ and $a'$ such that $a_i = a'_i$ for $1 \le i \le d$ and $\lfloor a_0 \rfloor = \lfloor a'_0 \rfloor$ then these two languages are identical $L = L'$.
\begin{proof}
For any integral point $p = (x_1, \dots, x_d)$ satisfying $\sum_{i=1}^d a_i \cdot x_i + a_0 \ge 0$ we may take integer parts from both sides to obtain $\sum_{i=1}^d a_i \cdot x_i + \lfloor a_0 \rfloor \ge 0$. Conversely, it is clear that since $\lfloor a_0 \rfloor \le a_0$, that $\sum_{i=1}^d a_i \cdot x_i + \lfloor a_0 \rfloor \ge 0$ implies $\sum_{i=1}^d a_i \cdot x_i + a_0 \ge 0$.
\end{proof}
\end{lemma}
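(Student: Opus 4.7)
My proof plan splits the lemma into the two directions implicit in ``one to one correspondence''.

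\textbf{Setting up the normal form.} First I would invoke Lemma~\ref{integ} to rewrite an arbitrary member of $\CalL_{Ihg}$ in integral reduced form. Given a language $L_{(r_1,\dots,r_d,\Delta_0)}$ with rational slopes, I multiply the defining inequality $\sum_{i=1}^{d} r_i x_i + \Delta_0 \ge 0$ by the positive scalar $\mathrm{lcm}(q_i)/\gcd(p_i)$ from Lemma~\ref{integ}; positivity preserves the inequality direction, so the language is unchanged while the slope vector becomes an integer vector $(a_1,\dots,a_d)$ with $\gcd(a_1,\dots,a_d)=1$. The resulting displacement $a_0$ is still a real number, and only its sign relative to the scaled inequality is intrinsic to the language.

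\textbf{Collapsing the displacement to $\lfloor a_0 \rfloor$.} For any integral point $(x_1,\dots,x_d)\in\Z^d$, the sum $\sum a_i x_i$ is an integer, so the real inequality $\sum a_i x_i \ge -a_0$ is equivalent to $\sum a_i x_i \ge \lceil -a_0\rceil = -\lfloor a_0\rfloor$. Rearranging gives $\sum a_i x_i + \lfloor a_0\rfloor \ge 0$, which shows that the language depends on $a_0$ only through $\lfloor a_0\rfloor$. This yields a well-defined map from languages in $\CalL_{Ihg}$ to $\Z^{d+1}$ (restricted to tuples whose first $d$ entries are coprime), and proves the ``in particular'' clause: identical reduced tuples yield identical languages.

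\textbf{Injectivity.} To finish the bijection I would argue that distinct reduced tuples give distinct languages by exhibiting a separating integral point. If two reduced tuples share their slope vector $(a_1,\dots,a_d)$ but differ in $\lfloor a_0\rfloor$, then Bezout's identity (used in the proof of Lemma~\ref{jdist}) furnishes an integral point $(x_1,\dots,x_d)$ with $\sum a_i x_i$ equal to any prescribed integer; in particular I can hit a value that lies between $-\lfloor a_0\rfloor$ and $-\lfloor a'_0\rfloor$, separating the two half-spaces. If instead the slope vectors $(a_1,\dots,a_d)\neq(a'_1,\dots,a'_d)$ differ as reduced integer vectors, the two boundary hyperplanes are not parallel (or are parallel but oriented oppositely, giving disjoint complements up to the boundary), so their symmetric difference contains a full-dimensional region, and by the same Bezout-type argument this region contains integral points.

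\textbf{Main obstacle.} The routine half is the floor computation. The harder half is injectivity: one must produce a concrete integral witness separating two distinct reduced tuples, and the case where the slope vectors differ is the subtle one, because the ``symmetric difference contains integers'' assertion needs either a density argument on the integer grid or a direct Bezout construction building on Lemma~\ref{jdist}. Everything else reduces to elementary arithmetic on $\lfloor\,\cdot\,\rfloor$.
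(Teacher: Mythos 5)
Your core argument for the ``in particular'' clause is the same as the paper's: for integral $(x_1,\dots,x_d)$ the left-hand side $\sum a_i x_i$ is an integer, so $\sum a_i x_i \geq -a_0$ is equivalent to $\sum a_i x_i \geq -\lfloor a_0\rfloor$, and conversely $\lfloor a_0\rfloor\leq a_0$ gives the other implication. That is precisely the paper's two-line proof, and it is all the paper proves under this lemma. Where you diverge is that you take the phrase ``one to one correspondence'' at face value and also argue injectivity (distinct reduced tuples give distinct half-spaces). The paper does not do this inside Lemma~\ref{equiv}; it implicitly assumes it (``We know there exists at least one point $p_0$ labeled differently'') and only establishes the separating-point claim later, in Lemma~\ref{inf}, whose two-case structure (same slope vector versus different slope vector, with a Bezout witness in the first case and a vector $b$ with $b\cdot a>0$, $b\cdot a'<0$ in the second) is essentially what you sketch. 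Your Case~1 Bezout construction is correct. Your Case~2 is a bit hand-wavy at ``symmetric difference contains a full-dimensional region, [so] this region contains integral points'' — the paper's Lemma~\ref{inf} makes this concrete by picking an integral $b$ with $b\cdot a$ and $b\cdot a'$ of opposite sign and translating a witness $p_0$ along $b$; you'd want to produce that $p_0$ and $b$ explicitly rather than appeal to dimensionality. So: same proof of the stated ``in particular'' claim, plus a (mostly correct, slightly under-specified) preview of the paper's later Lemma~\ref{inf}, which fills a gap the paper leaves unproved at this point.
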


\begin{definition}
A basic set in d-dimensional space is a set of $d$ affine-independent integral points, i.e. $C = \{c_0, \dots, c_{d-1}\}$ s.t. the vectors $c_i - c_0$ for $i = 1, \dots, d-1$ are linearly independent. The unique ($d-1$-dimensional) hyperplane $H_c$ passing through the points of $C$ is simply called $C$'s hyperplane and $C$ is a basic set for $H_C$. A basic cell $\conv (C)$ is the convex hull of points in a basic set $C$. Two basic sets $C$ and $C'$ are parallel if their hyperplanes are, they are facing each other if they are parallel and there is a line segment orthogonal to their hyperplanes meeting their cells, that is, there are points $p \in \conv(C)$ and $p' \in \conv(C')$ such that $[p, p']$ is orthgonal to $H_C$ and $H_{C'}$. Two basic sets are adjacent if they are facing each other and their hyperplanes are distinct but as close as possible, having the distance from theorem \ref{mindist}.
\end{definition}

\begin{lemma}\label{pair}
Suppose a language (half-space) $L \in \mathcal{L}_{Ihg}$ is determined by a hyperplane $H$ with coefficients $a$ in integral reduced form such that all grid points $p = (x_1, \dots, x_d) \in L$ satisfy $\sum_{i=1}^d a_i \cdot x_i + a_0 \ge 0$. We then have in addition to all grid points in $L$ satisfying $\sum_{i=1}^d a_i \cdot x_i + \lfloor a_0 \rfloor \ge 0$ as stated in lemma \ref{equiv}, that all grid points not contained in this halfspace $q = (y_1, \dots, y_d) \in L^c$ satisfy $\sum_{i=1}^d a_i \cdot y_i + \lfloor a_0 \rfloor + 1 \le 0$ or equivalently, \[\sum_{i=1}^d (-a_i) \cdot y_i + (- \lfloor a_0 \rfloor - 1) \ge 0.\] Furthermore, both these inequalities are tight in the sense that they are satisfied with equality for elements of $L$ and $L^c$ respectively.
\begin{proof}
According to lemma \ref{equiv} we must have for every $q = (y_1, \dots, y_d) \in L^c$ that $\sum_{i=1}^d a_i \cdot y_i + \lfloor a_0 \rfloor < 0$. Since the coordinates of $q$ are integral we have $\sum_{i=1}^d a_i \cdot y_i \in \mathbb{Z}$ and because $\mathbb{Z} \cap \mathbb{R}_{<0} = \mathbb{Z}_{\le -1}$ we must have $\sum_{i=1}^d a_i \cdot y_i + \lfloor a_0 \rfloor \le -1$ proving the statement.

For the second statement, notice that the $a$ coefficients are in integral reduced form meaning $\gcd(a_1, \dots, a_d) = 1$ so that by Bezout's identity there are integral coordinates $(z_1, \dots, z_d)$ such that $\sum_{i=1}^d a_i \cdot z_i = k$ for any integer $k \in \mathbb{Z}$.
\end{proof}
\end{lemma}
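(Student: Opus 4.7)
The plan is to treat the three assertions separately. The first (the bound for points of $L$) is already delivered verbatim by Lemma~\ref{equiv}, so I would simply cite it and move on. The substantive content is the complementary inequality on $L^c$ and the tightness claim; the former is an integrality argument and the latter is a direct application of Bézout's identity, which is available because the hyperplane is assumed to be in integral reduced form.

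For the complementary inequality, I would take an arbitrary $q=(y_1,\dots,y_d)\in L^c$. By the description of $L$ in the hypothesis, $q$ satisfies $\sum_{i=1}^d a_i y_i + a_0 < 0$, i.e.\ $\sum_{i=1}^d a_i y_i < -a_0$. The left-hand side is an integer because all $a_i$ and $y_i$ are integers, so it cannot exceed the largest integer strictly below $-a_0$, namely $-\lfloor a_0 \rfloor - 1$ (a one-line case split on whether $a_0 \in \mathbb{Z}$ confirms this identity in either case). Rearranging gives $\sum_{i=1}^d a_i y_i + \lfloor a_0 \rfloor + 1 \le 0$, which is exactly the desired bound, and the equivalent form with flipped signs is immediate.

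For tightness I would invoke Bézout's identity. Since $a$ is in integral reduced form, $\gcd(a_1,\dots,a_d)=1$, so there exist integers $z_1,\dots,z_d$ with $\sum_{i=1}^d a_i z_i = 1$. Scaling by any integer $k$ produces an integral point whose $a$-pairing equals $k$. Choosing $k=-\lfloor a_0 \rfloor$ yields a grid point satisfying $\sum a_i x_i + \lfloor a_0 \rfloor = 0$, which lies in $L$ (since $a_0 \ge \lfloor a_0\rfloor$ gives $\sum a_i x_i + a_0 \ge 0$) and saturates the first inequality. Choosing $k=-\lfloor a_0\rfloor -1$ yields a point satisfying $\sum a_i y_i + \lfloor a_0 \rfloor + 1 = 0$; to verify it sits in $L^c$ one checks $\sum a_i y_i + a_0 = (a_0 - \lfloor a_0 \rfloor) - 1$, which is strictly negative because the fractional part lies in $[0,1)$.

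The main (and essentially only) subtlety is the bookkeeping at the boundary value $a_0 \in \mathbb{Z}$, where $\lfloor a_0\rfloor = a_0$ and one has to be careful that the ``$<$'' in the definition of $L^c$ together with integrality still forces a drop by a full unit; otherwise the proof is a clean combination of integer arithmetic with Bézout's identity and requires no further machinery beyond Lemma~\ref{equiv}.
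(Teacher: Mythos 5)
Your proof is correct and takes essentially the same route as the paper: integer arithmetic forces the unit gap between the tangent hyperplanes, and Bézout's identity (available because $\gcd(a_1,\dots,a_d)=1$) supplies the saturating grid points. The one small difference is cosmetic rather than structural. The paper derives $\sum a_i y_i + \lfloor a_0\rfloor < 0$ from Lemma~\ref{equiv} and then notes that this quantity is an integer, hence $\le -1$; you instead start from the raw defining inequality $\sum a_i y_i < -a_0$ and observe that the largest integer strictly below $-a_0$ is $-\lfloor a_0\rfloor - 1$ (your case split on $a_0\in\Z$ is indeed needed to check this identity, and it does hold in both cases). The two arguments are equivalent in content. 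On the tightness claim you are actually a bit more thorough than the paper: the paper only notes that Bézout gives integral solutions of $\sum a_i z_i = k$ for every $k$, whereas you additionally specify $k=-\lfloor a_0\rfloor$ and $k=-\lfloor a_0\rfloor - 1$ and verify that the resulting grid points lie in $L$ and $L^c$ respectively (via $a_0 \ge \lfloor a_0\rfloor$ and $\{a_0\}-1<0$), which closes a small gap the paper leaves implicit.
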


\begin{definition}\label{tang}
For a hyperplane $H$ described by an integral reduced form $\sum_{i=1}^d a_i \cdot x_i + a_0 = 0$ we define its positive tangent $H_+$ as the halfspace described by the inequality $\sum_{i=1}^d a_i \cdot x_i + \lfloor a_0 \rfloor \ge 0$ and and its negative tangent $H_-$ as the halfspace described by the inequality $\sum_{i=1}^d (-a_i) \cdot x_i + (- \lfloor a_0 \rfloor - 1) \ge 0$.
\end{definition}

\begin{corollary}
If a hyperplane $H$ separates points in $L$ from points in $L^c$ of the integral grid which we could see as positive and negative points, the hyperplanes tangent to the positive and negative points are exactly the boundaries of $H_+$ and $H_-$ as in definition \ref{tang}.
\begin{proof}
Follows from lemma \ref{pair}.
\end{proof}
\end{corollary}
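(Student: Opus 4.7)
The plan is to derive the corollary by direct unpacking of Lemma~\ref{pair} and Definition~\ref{tang}. First I would fix the integral reduced form $\sum_{i=1}^d a_i x_i + a_0 = 0$ of $H$ supplied by Lemma~\ref{integ}, and recall what ``tangent to the positive points'' means: a hyperplane parallel to $H$ such that (i) every $p \in L$ lies on the correct closed half-space bounded by it, and (ii) equality is achieved by at least one $p \in L$. The analogous two conditions define tangency to the negative points $L^c$.

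For the positive side, Lemma~\ref{pair} delivers both (i) and (ii): every $p = (x_1,\dots,x_d) \in L$ satisfies $\sum_i a_i x_i + \lfloor a_0\rfloor \ge 0$, and the inequality is tight, i.e.\ achieved with equality by some integral $p \in L$. The hyperplane defined by $\sum_i a_i x_i + \lfloor a_0\rfloor = 0$ is by Definition~\ref{tang} exactly the boundary of $H_+$. Uniqueness among hyperplanes parallel to $H$ that are tangent to $L$ follows because the displacement of such a tangent is forced to equal $-\min_{p \in L}\sum_i a_i x_i$, which Lemma~\ref{pair} pins down as $\lfloor a_0\rfloor$.

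Symmetrically, for the negative side, Lemma~\ref{pair} gives $\sum_i (-a_i) y_i + (-\lfloor a_0\rfloor - 1) \ge 0$ for every $q = (y_1,\dots,y_d) \in L^c$, again with a tight witness. The hyperplane on which this holds with equality is the boundary of $H_-$ by Definition~\ref{tang}, and the same minimality argument gives uniqueness on the negative side.

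The only mild subtlety is the tightness clause of Lemma~\ref{pair}, which I would justify via Bezout's identity on the integral reduced coefficients (since $\gcd(a_1,\dots,a_d)=1$): one exhibits integral $(z_1,\dots,z_d)$ with $\sum_i a_i z_i = k$ for any prescribed integer $k$, in particular $k = -\lfloor a_0\rfloor$ for the positive-side witness and $k = -\lfloor a_0\rfloor - 1$ for the negative-side witness; membership of these witnesses in $L$ and $L^c$ respectively is then immediate from their defining inequalities, so the corollary follows.
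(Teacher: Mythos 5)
Your proposal is correct and follows the same route as the paper: the corollary is simply an unpacking of Lemma~\ref{pair} against Definition~\ref{tang}, and your use of the Bezout-based tightness witnesses is exactly the content of the second paragraph of Lemma~\ref{pair}'s proof. The paper leaves all of this implicit with the one-line ``Follows from lemma~\ref{pair}''; your version spells out the containment and tightness conditions defining tangency and adds a short uniqueness observation, but there is no substantive difference in approach.
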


\begin{definition}
We will be considering a hypothesis space consisting of sets of positive and negative data points $\mathcal{H} = \{ \{ (p, s) | p \in \mathbb{Z}^d, s \in \{+, -\} \} \}$. A locked state is achieved when for a hypothesis $H = \{(p, s) : p \in \mathbb{Z}^d, s \in \{+, -\}\}$ a subset $C_+$ of the positive points of $H$ and a subset $C_-$ of the negative points of $H$ form adjacent basic sets such that all other data points retained in the hypothesis are separated based on sign by the hyperplanes of these two cells $H_{C_+}$ and $H_{C_-}$ meaning $H_{C_+}$ is the boundary of a half-space $H^{C_+}_+$ and $H_{C_-}$ is the boundary of a half-space $H^{C_-}_-$ such that $H_+ \cap H_- = \emptyset$ and for all $(p, +) \in H$ we have $p \in H_+$ and for all $(q,-) \in H$ we have $q \in H_-$. The distance of a locked state $d$ is the distance between $H_{C_+}$ and $H_{C_-}$.
\end{definition}

\begin{definition}
The violation of a locked states happens by receiving a data point $(p, s)$ that does not respect separation by the hyperplanes of the adjacent basic sets, meaning it is on the other side of these hyperplanes than data points of the same sign as it, either $p \in H^{C_-}_-$ for data point $(p, +)$ or $q \in H^{C_+}_+$ for data point $(q, -)$. Remember that there are no integral points strictly between hyperplanes of adjacent basic sets by definition of their respective hyperplanes being as close as possible.
\end{definition}

\begin{algorithm2e}[H]
	Initialize $H \assign \emptyset$, $\state$ $\assign$ $\open$\;
	Receive new data point $(p, s) : p \in \mathbb{Z}^d, s \in \{+, -\}$\;
	\uIf{$\state = \open$ }{
		$H \assign H \cup (p,s)$\;
		\If{$H$ is a locked state}{
			$\state \assign \locked$\;
			\label{conv}Apply convention: either do nothing or discard all previous data not required for this locked state\;
		}
	} \ElseIf{$\state = \locked$}{
		\If{$p$ violates the locked state}{
			$\state \assign \open$\;
			$H \assign H \cup (p,s)$\;
		}
	}
	\caption{Iterative learner of integral half-spaces from informants}
	\label{alg:hsit}
\end{algorithm2e}

\begin{lemma}\label{decr}
If $d$ is the distance of a locked state at some point in algorithm \ref{alg:hsit} which is afterwards violated by a data point and $d'$ is the distance of a later locked state we have $d > d'$. That is, the distance of locked states is strictly decreasing.
\begin{proof}
Assume $H_+$ and $H_-$ are the half-spaces of the first locked state of distance $d$ and $H'_+$ and $H'_-$ are the half-spaces of the second locked state of distance $d'$. The sign indices indicate in both cases the signs of the data points of the corresponding basic cells. Since all data points respect the separation by the two hyperplanes in the new locked state including the points of the basic cells of the first locked state, we have the distance of any positive point and any negative point in the first locked state is at least $d'$. This gives us that $d \ge d'$ because by definition of adjacency the previous locked state had basic sets facing each other, meaning there were points $p$ and $q$ in the associated basic cells of distance $d$ where $p$ was a convex combination of positive points and $q$ a convex combination of negative points. Since all positive points are now in $H'_+$ and all negative points are in $H'_-$ the same holds for convex combinations of each label of points and thus $d \ge d'$. If we were to have equality $d = d'$ that would mean that the facing points $p$ and $q$ from the basic cells of the first locked state are situated exactly on the boundaries of $H'_+$ and $H'_-$, and because $[p,q]$ is orthogonal to the boundaries of $H_+$ and $H_-$, we must have $H_+ = H'_+$ and $H_- = H'_-$ which would contradict the first locked state ever being violated in the first place thereby proving $d \gneq d'$.
\end{proof}
\end{lemma}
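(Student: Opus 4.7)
My plan is to compare two successive locked states by exploiting the facing/orthogonality property of adjacent basic sets, together with the fact that every datum retained in the later locked state (in particular, the basic-set points of the earlier one and the point that caused the violation in between) must lie on the correct side of the new pair of separating hyperplanes.

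First I would fix notation: let the earlier locked state be witnessed by adjacent basic sets $C_+$ (positive) and $C_-$ (negative) with hyperplanes $H_{C_+}, H_{C_-}$ and associated half-spaces $H^{C_+}_+ \supseteq \conv(C_+)$ and $H^{C_-}_- \supseteq \conv(C_-)$ at distance $d$, and the later locked state by $C'_+, C'_-$ with hyperplanes $H'_{C'_+}, H'_{C'_-}$ and half-spaces $H^{C'_+}_+, H^{C'_-}_-$ at distance $d'$. Inspecting Algorithm~\ref{alg:hsit}, the points of $C_+$ and $C_-$, together with the datum $(v,s_v)$ that caused the violation, are still present in the hypothesis at the moment of the second lock; the lock condition therefore forces $C_+ \subseteq H^{C'_+}_+$ and $C_- \subseteq H^{C'_-}_-$.

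For the weak inequality $d \ge d'$ I would invoke the ``facing'' clause of adjacency: there exist $p \in \conv(C_+)$ and $q \in \conv(C_-)$ with $[p,q]$ orthogonal to $H_{C_+}$, so $|p-q|=d$. Since the new half-spaces are convex and contain $C_+, C_-$ respectively, convexity gives $p \in H^{C'_+}_+$ and $q \in H^{C'_-}_-$. As any segment joining $H^{C'_+}_+$ to $H^{C'_-}_-$ has length at least $d'$, this yields $d \ge d'$. For strictness, assume towards a contradiction that $d=d'$. Then $[p,q]$ realises the minimum distance between the two parallel half-spaces $H^{C'_+}_+$ and $H^{C'_-}_-$, which forces $p$ to lie on $H'_{C'_+}$, $q$ on $H'_{C'_-}$, and $[p,q]$ to be orthogonal to both. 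Because the line through $p$ that is normal to $H'_{C'_+}$ and the line through $p$ that is normal to $H_{C_+}$ must coincide (they both pass through $p$ and are perpendicular to parallel hyperplanes), the hyperplanes $H_{C_+}$ and $H'_{C'_+}$ share a normal direction and a common point, hence coincide; similarly $H_{C_-} = H'_{C'_-}$. Thus the two locked states cut out the same pair of half-spaces, which contradicts the fact that the retained datum $(v,s_v)$ violates the first but not the second.

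The main obstacle I anticipate is the strictness step: one must carefully argue that when $d=d'$ the extremal witnesses $p,q$ genuinely sit on the boundaries of the new half-spaces, and that from this one can upgrade parallelism of the old and new hyperplanes to equality. The rest is essentially a convexity-plus-projection calculation, leaning on Theorem~\ref{mindist} only to know that ``adjacent'' really means minimum possible distance between integral-coefficient parallel hyperplanes.
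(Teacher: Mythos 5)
Your proposal is correct and follows essentially the same route as the paper's proof: establish $d\ge d'$ via the facing witnesses $p,q$ of the first locked state, which are convex combinations of retained points lying in the new half-spaces, and then rule out equality by observing that $d=d'$ would force $p,q$ onto the new boundaries with $[p,q]$ orthogonal to both pairs of hyperplanes, whence the hyperplanes coincide and the first lock could never have been violated. One small wording nit: in your strictness step the parenthetical ``perpendicular to parallel hyperplanes'' presupposes the parallelism of $H_{C_+}$ and $H'_{C'_+}$, which is actually what you are deriving; the clean justification is simply that both hyperplanes contain $p$ and have $[p,q]$ as a normal, hence coincide.
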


\begin{definition}
The target distance $d_t$ is the orthogonal distance between the tangents $H^t_+$ and $H^t_-$ for the hyperplane $H^t$ associated with the target language (half-space) $L_t$.
\end{definition}

\begin{lemma}\label{bnd}
The distance of any locked state is bounded from below by the target distance.
\begin{proof}
Similar to the proof of lemma \ref{decr} since all data points respect separation by $H^t_+$ and $H^t_-$.
\end{proof}
\end{lemma}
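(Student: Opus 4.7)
The plan is to argue in essentially the same way as in the proof of Lemma~\ref{decr}, but with the target language's tangent half-spaces $H^t_+$ and $H^t_-$ playing the role of the previous locked state. First I would observe that, since the data has been presented by an informant for the target $L_t$, every positive data point $(p,+)$ currently stored in the hypothesis must lie in $L_t$, hence in the positive tangent $H^t_+$, and every negative data point $(q,-)$ must lie in $L_t^c$, hence in the negative tangent $H^t_-$ (using Lemma~\ref{pair} and Definition~\ref{tang}).

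Next, let $C_+$ and $C_-$ be the basic sets realizing the current locked state with locked distance $d$. By the previous observation, $C_+ \subseteq H^t_+$ and $C_- \subseteq H^t_-$; and since $H^t_+$ and $H^t_-$ are half-spaces and hence convex, we also have $\conv(C_+) \subseteq H^t_+$ and $\conv(C_-) \subseteq H^t_-$. By the adjacency clause in the definition of a locked state, there exist $p \in \conv(C_+)$ and $q \in \conv(C_-)$ with $[p,q]$ orthogonal to both $H_{C_+}$ and $H_{C_-}$ and with $|pq| = d$.

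To conclude, I would invoke the following elementary fact: if $A$ and $B$ are two disjoint parallel closed half-spaces in $\mathbb{R}^d$ whose bounding hyperplanes are at orthogonal distance $\delta$ apart, then $\operatorname{dist}(x,y) \geq \delta$ for every $x \in A$ and $y \in B$. Applying this with $A = H^t_+$, $B = H^t_-$, $\delta = d_t$, $x = p$, and $y = q$ yields $d = |pq| \geq d_t$, which is exactly the claim.

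The only subtlety, and the main thing to write carefully, is the last step: the segment $[p,q]$ is orthogonal to the hyperplanes $H_{C_+}$ and $H_{C_-}$ of the locked state, which need not be parallel to $H^t$, so one cannot directly read off $|pq|$ as an orthogonal distance to $H^t$. This is resolved by the elementary fact above, whose proof is a one-line application of Cauchy--Schwarz to the common unit normal $\mathbf{n}$ of $H^t_+$ and $H^t_-$: $|pq| \geq |\mathbf{n} \cdot (p - q)| \geq d_t$, since $\mathbf{n} \cdot p$ and $\mathbf{n} \cdot q$ lie on opposite sides of a gap of width exactly $d_t$. Everything else is a direct unpacking of the definitions established earlier in the section.
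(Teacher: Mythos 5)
Your proof is correct and takes essentially the same approach as the paper, which disposes of the lemma with a one-line reference to the proof of Lemma~\ref{decr}. You have simply unpacked that reference: the target tangent half-spaces $H^t_+$ and $H^t_-$ play the role of the later locked state's half-spaces, and you make explicit (via projection onto the common normal) the step that yields $|pq|\ge d_t$, which is also the step implicitly used in Lemma~\ref{decr} to conclude $d\ge d'$.
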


\begin{lemma}\label{lock}
If the learner of algorithm \ref{alg:hsit} is in state $\open$ it will eventually go into $\locked$.
\begin{proof}
In the $\open$ state all incoming data points are received and aggregated and none is refused. By whatever convention for the $\locked$ state in which we may have discarded previous data points, we have two cases:
\begin{enumerate}
\item The learner eventually goes into a locked state with tangents different from that of the target's
\item Not case 1
\end{enumerate}
In the second case, assume all previously received data points (which there are finitely many of) are contained in a bounded ball $B$. Even if all preveious points were discareded based on convention in line \ref{conv} of algorithm \ref{alg:hsit}, there will still be infinitely many data points on $H^t_+$ and $H^t_-$ further away from $B$ which will be received and eventually create adjacent basic cells which force the learner into the $\locked$ state with the true target tanget hyperplanes.
\end{proof}
\end{lemma}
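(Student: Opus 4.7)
The plan is to follow the case split suggested in the excerpt. If while in $\open$ mode the aggregated data $H$ at some step contains a pair of adjacent basic sets $C_+\subseteq H$ (positive) and $C_-\subseteq H$ (negative) whose hyperplanes separate every retained point consistently with its sign, then by the description of Algorithm \ref{alg:hsit} the learner transitions to $\locked$ at that step and we are done. So the substantive case is that no such ``premature'' locked state ever arises; I must then show that the learner eventually locks on the boundaries of the target tangents $H^t_+$ and $H^t_-$.

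For this case I would first invoke Lemma \ref{pair} to observe that the boundary of $H^t_+$ carries infinitely many points of $L_t$ (all labelled $+$ by the informant) and, symmetrically, that the boundary of $H^t_-$ carries infinitely many points of $L_t^c$ (all labelled $-$); in fact each is a full $(d-1)$-dimensional affine sublattice of $\mathbb{Z}^d$. From these two lattices I would select affinely independent $d$-tuples $c_0^+,\ldots,c_{d-1}^+$ and $c_0^-,\ldots,c_{d-1}^-$ in such a way that the orthogonal projection of $\conv\{c_0^-,\ldots,c_{d-1}^-\}$ onto the boundary of $H^t_+$ is contained in $\conv\{c_0^+,\ldots,c_{d-1}^+\}$. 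The resulting basic cells then face each other, and since their hyperplanes are distinct and parallel at the minimum possible distance, Theorem \ref{mindist} makes them adjacent.

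Because the informant enumerates $\mathbb{Z}^d$ with correct labels, these $2d$ chosen points are all received after finitely many steps in $\open$ mode. Whatever the line \ref{conv} convention may have discarded along the way, every retained positive point still lies in $L_t$ and hence on the ``$+$'' side of the boundary of $H^t_+$, and every retained negative point still lies in $L_t^c$ and hence on the ``$-$'' side of the boundary of $H^t_-$. The conditions in the definition of a locked state are therefore satisfied and the algorithm transitions to $\locked$.

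The main obstacle is the geometric step in the second paragraph: one must exhibit affinely independent simplices on the two parallel tangent sublattices whose convex hulls face each other under orthogonal projection. This is essentially a largeness argument — fix a simplex $\Sigma^-$ on the boundary of $H^t_-$ and then pick $\Sigma^+$ on the boundary of $H^t_+$ big enough that the orthogonal projection of $\Sigma^-$ lies strictly inside $\Sigma^+$ — but one must verify that affinely independent integral tuples of arbitrarily large diameter exist in each $(d-1)$-dimensional sublattice, which follows by taking integer dilates of any fundamental simplex.
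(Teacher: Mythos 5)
Your proof follows the same structure as the paper's: the substantive case is that no locked state arises along the way, and there you argue that integral points on the boundaries of $H^t_+$ and $H^t_-$ eventually supply adjacent basic sets while all accumulated data respects that separation, forcing a transition to $\locked$. You spell out the geometric construction of facing simplices on the two tangent sublattices in more detail than the paper does, but the argument is the same in substance.
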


\begin{lemma}\label{inf}
If two languages (half-spaces) $L, L' \in \mathcal{L}_{Ihg}$ are distinct, there will be grid points in their symmetric difference $L \Delta L'$ arbitrarily distant from any compact set $B$.
\begin{proof}
For this we make a case distinction:
\begin{enumerate}
\item $L$ and $L'$ have identical slope coefficients
\item $L$ and $L'$ don't have identical slope coefficients
\end{enumerate}
In the first case, distinction of the two half-spaces can only mean their displacements in integral reduced form having different integer parts. We know there exists at least one point $p_0$ labeled differently by the two languages. There are infinitely many integral translation vectors $\delta = (\delta_1, \dots, \delta_d) \in \mathbb{Z}^d$ that satisfy $\sum_{i=1}^d a_i \cdot \delta_i = 0$ and for each one of them $p_0 + \delta$ would also be labeled differently by $L$ and $L'$.

In the second case, consider the two vectors $a = (a_1, \dots, a_d)$ and $a' = (a'_1, \dots, a'_d)$ of the coefficients of the two half-spaces in integral reduced form.
They are in integral reduced form but different which implies $a \nparallel a'$.
This enables us to find an integral vector $b$ such that $b.a$ and $b.a'$ are both nonzero and of opposite signs.
W.l.o.g. assume we have a point $p_0$ classified by $L$ as positive and by $L'$ as negative and that $b.a > 0$ while $b.a' < 0$ (otherwise take $-b$).
Now all points $p_0 + m \cdot b$ for $m \in \mathbb{N}$ will be classified as positive by $L$ and negative by $L'$.
\end{proof}
\end{lemma}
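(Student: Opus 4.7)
The plan is to split into two cases according to whether the defining integral reduced slope vectors of $L$ and $L'$ are parallel or not, and in each case exhibit an explicit infinite family of integral points in $L \triangle L'$ that escapes to infinity along a fixed direction. Since any compact $B \subseteq \mathbb{R}^d$ is bounded, producing such a divergent sequence in $L \triangle L'$ immediately gives grid points arbitrarily far from $B$.

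First I would fix the integral reduced coefficients $(a_1,\dots,a_d,a_0)$ of $L$ and $(a'_1,\dots,a'_d,a'_0)$ of $L'$ given by Lemma~\ref{integ} and Lemma~\ref{equiv}. In the \emph{parallel} case, where $(a_1,\dots,a_d) = (a'_1,\dots,a'_d)$ (up to sign), distinctness of the languages forces $\lfloor a_0 \rfloor \neq \lfloor a'_0 \rfloor$. I would then pick a single witness point $p_0 \in L \triangle L'$ (which exists by distinctness) and translate it by integral vectors $\delta \in \mathbb{Z}^d$ satisfying $\sum_i a_i \delta_i = 0$. The set of such $\delta$ is a rank-$(d-1)$ sublattice of $\mathbb{Z}^d$ (for $d \geq 2$; if $d=1$ the half-space $L$ is just an interval of integers, so the displacement case reduces to finite symmetric difference—but actually in $d=1$ half-spaces differing only in displacement cannot both be infinite sets unless handled separately). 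For $d \geq 2$ this lattice is unbounded, so the points $p_0 + \delta$ lie on the same side relative to both hyperplanes as $p_0$ does and hence remain in $L \triangle L'$ while diverging to infinity.

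In the \emph{non-parallel} case, $a = (a_1,\dots,a_d)$ and $a' = (a'_1,\dots,a'_d)$ are linearly independent over $\mathbb{Q}$. The key step is to find an integral vector $b \in \mathbb{Z}^d$ with $b \cdot a$ and $b \cdot a'$ nonzero and of opposite sign: since the orthogonal complements $a^\perp$ and $(a')^\perp$ are distinct hyperplanes in $\mathbb{R}^d$, the linear functional $x \mapsto (b\cdot a, b\cdot a')$ has full rank, so the open cone $\{b : b\cdot a > 0,\ b\cdot a' < 0\}$ is nonempty and contains integral points by density. Fixing such a $b$ and any $p_0$ that lies in $L$ but not $L'$ (swapping $L, L'$ and/or $b$ with $-b$ as needed), the sequence $p_0 + m b$ for $m \in \mathbb{N}$ has $\sum_i a_i (p_0+mb)_i + a_0$ growing to $+\infty$ while $\sum_i a'_i (p_0+mb)_i + a'_0$ tends to $-\infty$, so all these integral points remain in $L \setminus L'$ and their norms diverge.

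The main obstacle, and the only nontrivial step, is ensuring the integrality of the separating direction $b$ in the non-parallel case; once $b$ is integral the rest is linear arithmetic. I would justify integrality by observing that the strict cone $\{b \in \mathbb{R}^d : b\cdot a > 0,\ b\cdot a' < 0\}$ is open and nonempty in $\mathbb{R}^d$ (as $a, a'$ are not positive scalar multiples of each other), hence contains a rational point, and clearing denominators yields the required $b \in \mathbb{Z}^d$. Finally, I would note that the same lemma applied to $L^c$ and $(L')^c$ (which are also distinct half-spaces in the informal sense, swapping roles) recovers the symmetric statement, so points diverging to infinity exist in both $L \setminus L'$ and $L' \setminus L$ whenever those sets are nonempty.
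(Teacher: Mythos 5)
Your proposal follows essentially the same two-case decomposition and constructions as the paper's own proof: translate a witness point by lattice vectors orthogonal to the common normal in the parallel case, and march a witness point along an integral direction with opposite-sign inner products against the two normals in the non-parallel case. The only additions are a brief justification for the existence of an integral $b$ (via density and clearing denominators) and a flag about the $d=1$ degeneracy, neither of which changes the structure of the argument.
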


\begin{lemma}\label{unlock}
If the learner from algorithm \ref{alg:hsit} goes into a $\locked$ state with tangent hyperplanes other than that of the target's, the $\locked$ state will eventually be violated.
\begin{proof}
If all previously received data points (which there are finitely many of) are contained in a bounded ball $B$, there will still be infinitely many data points further away from $B$ corresponding to the true target $H^t$. But by lemma \ref{inf} any two distinct hyperplanes will label some points differently arbitrarily distant from any compact set $B$. Therefore, a new data point labeled inconsistently with the separation of the current $\locked$ state will eventually be received by the learner, violating the $\locked$ state and causing the learner to transition to state $\open$.
\end{proof}
\end{lemma}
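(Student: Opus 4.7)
The plan is to argue by contradiction: assume the learner enters a $\locked$ state with associated tangent hyperplanes $H^C_+, H^C_-$ different from the target's tangents $H^t_+, H^t_-$, and that no future datum ever violates this state. I will then exhibit an incoming datum that must violate the state, giving the contradiction.

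First I would observe that the half-space $L^C$ corresponding to the locked state (the one bounded by the median hyperplane between $H^C_+$ and $H^C_-$, oriented so that positive points lie in $H^C_+$) is a member of $\mathcal{L}_{Ihg}$, and by assumption $L^C \neq L_t$. Let $B$ be a bounded ball containing all data points received by the learner up to the moment of entering the locked state (including those kept in the hypothesis and those possibly discarded by convention in line~\ref{conv}). By Lemma~\ref{inf}, applied to the distinct half-spaces $L^C$ and $L_t$, the symmetric difference $L^C \,\Delta\, L_t$ contains grid points arbitrarily distant from~$B$; in particular it is non-empty. Pick any such point $p \in L^C \Delta L_t$ lying outside $B$.

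Since the input to the learner is an informant for $L_t$, every element of $\mathbb{Z}^d$ is presented eventually with its true label under $L_t$. In particular the point $p$ above will be presented at some finite stage with label $s$ determined by $L_t$. Because $p$ lies in the symmetric difference with $L^C$, the label assigned to $p$ by $L_t$ disagrees with the side of the separating hyperplanes on which $p$ lies according to the locked state: if $p \in L_t \setminus L^C$ then $p$ arrives as $(p,+)$ but lies in $H^C_-$, and if $p \in L^C \setminus L_t$ then $p$ arrives as $(p,-)$ but lies in $H^C_+$. In either case, by the definition of violation, the arrival of $(p,s)$ violates the locked state, contradicting our assumption.

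The main subtlety, and essentially the only obstacle, is ensuring that $p$ can be chosen \emph{outside} $B$: although the learner may have discarded past data, the locked-state hyperplanes themselves still classify all of $\mathbb{Z}^d$, and future data arrives from all of $\mathbb{Z}^d$; hence we only need a disagreement point that will actually appear after the locked state is entered, which is precisely what Lemma~\ref{inf} guarantees by providing such points arbitrarily far from any compact set. Since an informant eventually lists every grid point, this disagreement point will be presented in finite time, triggering the violation.
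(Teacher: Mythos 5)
Your proof is correct and takes essentially the same approach as the paper: it invokes Lemma~\ref{inf} to obtain a disagreement point between the locked-state half-space and the target lying outside the ball $B$ of already-seen data, and then uses the informant property to conclude that this point will eventually arrive and trigger a violation. You spell out the intermediate steps (the identification of the locked state with a concrete half-space $L^C \in \mathcal{L}_{Ihg}$, the case analysis on $p \in L_t \setminus L^C$ versus $p \in L^C \setminus L_t$, and why $p$ must arrive after the locked state was entered) that the paper leaves implicit, but the underlying argument is the same.
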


\begin{lemma}\label{fin}
The learner from algorithm \ref{alg:hsit} goes into finitely many $\locked$ states in total.
\begin{proof}
By lemma \ref{decr} the distance of locked states strictly decrease and by lemma \ref{bnd} they are bounded from below. By lemma \ref{mindist} these distances can only assume certain discrete values and the total set of combinations of the slope coefficients providing distances at least that of the target distance $d^t$ is finite because they need to satisfy $ \sum_{i=1}^d a_i^2 \le 1/{d^t}^2$. 
\end{proof}
\end{lemma}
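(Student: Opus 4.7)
The plan is to combine three previously established facts and reduce the claim to a discrete counting argument. By Lemma \ref{decr}, the distances of successive locked states form a strictly decreasing sequence; by Lemma \ref{bnd}, every such distance is at least the target distance $d_t$. These two bounds alone only yield a bounded strictly decreasing sequence in $\mathbb{R}_{>0}$, which can of course still be infinite (e.g.\ $1/n$), so the decisive third ingredient must be discretization: the locked-state distances cannot take arbitrary real values but only values from a discrete set.

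For the discretization I would appeal to Theorem \ref{mindist}. A locked state is formed by two adjacent basic sets whose shared hyperplane direction, placed in integral reduced form via Lemma \ref{integ}, has primitive slope coefficients $(a_1,\ldots,a_d)\in\Z^d$, and the distance between the two parallel hyperplanes is then exactly $1/\sqrt{\sum_{k=1}^d a_k^2}$. Hence every locked-state distance lies in
\[
D \;=\; \Bigl\{\, 1/\sqrt{\textstyle\sum_{k=1}^d a_k^2} \,:\, (a_1,\ldots,a_d)\in\Z^d\setminus\{0\},\; \gcd(a_1,\ldots,a_d)=1 \,\Bigr\}.
\]
Intersecting with the interval $[d_t,\infty)$ dictated by Lemma \ref{bnd}, the inequality $1/\sqrt{\sum a_k^2}\ge d_t$ rearranges to $\sum_{k=1}^d a_k^2 \le 1/d_t^2$, which confines $(a_1,\ldots,a_d)$ to a ball of finite radius in $\Z^d$ and hence to a finite set of lattice points. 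So $D\cap[d_t,\infty)$ is finite, and any strictly decreasing sequence inside a finite totally ordered set is itself finite; combining this with Lemma \ref{decr} yields the conclusion.

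The only subtle step I would need to argue carefully is that the distance of a locked state really does take the form $1/\sqrt{\sum a_k^2}$ for a primitive integer tuple. Adjacency of two basic sets of integral affine-independent points forces their common hyperplane direction to admit the integral reduced representation of Lemma \ref{integ}, after which Theorem \ref{mindist} pins the separation down to the claimed closed form. A minor bookkeeping point is that Lemma \ref{decr} only compares locked states when the earlier is actually violated; a possible final locked state which is never violated simply terminates the sequence, so the total count is at most $|D\cap[d_t,\infty)|$ either way. Once these identifications are made, the remainder is pure pigeonhole.
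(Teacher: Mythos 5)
Your proposal matches the paper's proof essentially verbatim: both combine Lemma~\ref{decr} (strict decrease), Lemma~\ref{bnd} (lower bound by $d_t$), and Theorem~\ref{mindist} (discretization via $1/\sqrt{\sum a_k^2}$ with the constraint $\sum a_k^2 \le 1/d_t^2$ forcing finitely many primitive integer tuples). Your extra remarks on why the distance takes that form and on the possibly-never-violated final locked state are just more careful spelling-out of the same argument.
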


\begin{theorem}
The learner from algorithm \ref{alg:hsit} identifies the target (tanget) hyperplane in a finite number of steps.
\begin{proof}
By lemma \ref{lock} it will never remain in an $\open$ state indefinitely, and by lemma \ref{unlock} it will eventually come out of any $\locked$ state which does not correspond to the target. But by lemma \ref{fin} the learner goes into state $\locked$ only finitely many times, so it must eventually go into a $\locked$ state that does correspond to the target. By algorithm \ref{alg:hsit} the hypothesis remains constant as long as the learner remains in $\locked$ state, so if the $\locked$ state refers to the half grid it corresponds to, algorithm \ref{alg:hsit} is able to learn the class of integral half grids in the limit.
\end{proof}
\end{theorem}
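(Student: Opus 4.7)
The plan is to combine the three principal lemmas already established (Lemmas~\ref{lock}, \ref{unlock} and \ref{fin}) into a short case analysis on the learner's state trajectory. The overall picture is that the learner oscillates between $\open$ and $\locked$, but each $\locked$ phase either terminates learning or feeds into the finite budget provided by Lemma~\ref{fin}.

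First I would set up notation for the infinite sequence of state transitions produced by Algorithm~\ref{alg:hsit} on an arbitrary informant for the target $L_t$. Write this as an alternating sequence of maximal $\open$-phases and $\locked$-phases. By Lemma~\ref{lock}, every $\open$-phase is finite, i.e.\ eventually followed by a transition into $\locked$. By Lemma~\ref{fin} there can be only finitely many $\locked$-phases overall. Hence the trajectory cannot contain infinitely many $\open$-to-$\locked$ transitions, which together with Lemma~\ref{lock} means that after some step $t_0$ the learner is continuously in a $\locked$ state.

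Next I would identify the tangent hyperplanes of this terminal $\locked$ state with those of the target $L_t$. If the terminal $\locked$ state had tangents different from $H^t_+$ and $H^t_-$, then Lemma~\ref{unlock} would guarantee that it is eventually violated, contradicting the fact that the learner never leaves $\locked$ after $t_0$. Thus the terminal $\locked$ state must have tangents $H^t_+$ and $H^t_-$, and by Lemma~\ref{equiv} (and Lemma~\ref{pair}, which pins down the integer part of the displacement from the two tangent hyperplanes) this uniquely determines the target half-space $L_t$. Since Algorithm~\ref{alg:hsit} keeps its hypothesis constant throughout a $\locked$ phase, the learner outputs a fixed index for $L_t$ from step $t_0$ onward, which is exactly explanatory convergence.

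The main obstacle I anticipate is the book-keeping step relating the terminal $\locked$-state to the unique target tangent pair: one has to check that being $\locked$ with tangents $H^t_+$ and $H^t_-$ really encodes $L_t$ in the chosen hypothesis space, rather than some other half-space with the same boundary orientation but a different displacement. This is what Lemma~\ref{pair} together with Definition~\ref{tang} buys us, so I would invoke these explicitly at the point of identifying the final hypothesis. All other ingredients are purely combinatorial and flow directly from Lemmas~\ref{lock}, \ref{unlock} and \ref{fin} without further calculation.
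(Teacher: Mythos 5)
Your proposal is correct and follows essentially the same approach as the paper: combine Lemma~\ref{lock} (no infinite $\open$ phase), Lemma~\ref{unlock} (incorrect $\locked$ states are violated), and Lemma~\ref{fin} (finitely many $\locked$ states) to conclude the trajectory terminates in a correct $\locked$ state with a constant hypothesis. Your version is somewhat more explicit than the paper's — in particular you spell out the contradiction argument for the terminal $\locked$ state and explicitly invoke Lemma~\ref{equiv}, Lemma~\ref{pair} and Definition~\ref{tang} to justify that the correct tangent pair uniquely determines $L_t$, a step the paper leaves implicit in the phrase ``so if the $\locked$ state refers to the half grid it corresponds to'' — but the decomposition and the key lemmas are the same.
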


\subsection*{Acknowledgements}
{We are grateful to the people supporting us.
Especially, the third author thanks André Nies for pointing out the idea to study linear functions and Eugen Hellmann, Sanjay Jain, Peter Scholze, Frank Stephan and Simon Wietheger for helpful feedback regarding early forms or isolated parts of the proof for the learnability of halfspaces by this constructive iterative learner.
Moreover, the first and the last author thank Vanja Dosko\v{c} and Armin Wells for helpful discussions of proof ideas for the learnability.
We thank Thomas Zeugmann and Sandra Zilles for pointers to prior research. \\
This work was supported by DFG Grant Number KO 4635/1-1. }

\bibliographystyle{alpha}

\bibliography{CLTBib}

\end{document}